\newtheorem{thm}{Theorem}[section]
\newtheorem*{thm*}{Theorem}
\newtheorem*{prb*}{Problem}
\newtheorem*{ax*}{Axiom}
\newtheorem*{clm*}{Claim}
\newtheorem*{conj*}{Conjecture}
\newtheorem{cor}[thm]{Corollary}
\newtheorem{df}[thm]{Definition}
\newtheorem*{df*}{Definition}
\newtheorem*{ex*}{Example}
\newtheorem{lem}[thm]{Lemma}
\newtheorem*{lem*}{Lemma}
\newtheorem*{pos*}{Postulate}
\newtheorem{pr}[thm]{Proposition}
\newtheorem*{pr*}{Proposition}
\newtheorem*{qu*}{Question}
\newtheorem*{rem*}{Remark}
\def\shownotes{1}  \ifnum\shownotes=1
\newcommand{\authnote}[2]{$\ll$\textsf{\footnotesize #1 notes: #2}$\gg$}
\newcommand{\authnote}[2]{}
\newcommand{\C}[0]{\mathbb{C}}
\newcommand{\E}[0]{\mathbb{E}}
\newcommand{\EE}[0]{\mathop{\mathbb E}}
\newcommand{\N}[0]{\mathbb{N}}
\newcommand{\R}[0]{\mathbb{R}}
\newcommand{\one}[0]{\mathbbm{1}}
\newcommand{\al}[0]{\alpha}
\newcommand{\be}[0]{\beta}
\newcommand{\ga}[0]{\gamma}
\newcommand{\Ga}[0]{\Gamma}
\newcommand{\de}[0]{\delta}
\newcommand{\De}[0]{\Delta}
\newcommand{\ep}[0]{\varepsilon}
\newcommand{\ph}[0]{\varphi}
\newcommand{\Te}[0]{\Theta}
\newcommand{\om}[0]{\omega}
\newcommand{\Om}[0]{\Omega}
\newcommand{\si}[0]{\sigma}
\newcommand{\nin}[0]{\not\in}
\newcommand{\sub}[0]{\subset}
\newcommand{\subeq}[0]{\subseteq}
\newcommand{\iy}[0]{\infty}
\newcommand{\rc}[1]{\frac{1}{#1}}
\newcommand{\prc}[1]{\pa{\rc{#1}}}
\newcommand{\fc}[2]{\frac{#1}{#2}}
\newcommand{\sfc}[2]{\sqrt{\frac{#1}{#2}}}
\newcommand{\pf}[2]{\pa{\frac{#1}{#2}}}
\newcommand{\dd}[2]{\frac{d #1}{d #2}}
\newcommand{\pdt}[2]{\frac{\partial^2 #1}{\partial {#2}^2}}
\newcommand{\nb}[0]{\nabla}
\newcommand{\dx}{\,dx}
\newcommand{\ab}[1]{\left| {#1} \right|}
\newcommand{\an}[1]{\left\langle {#1}\right\rangle}
\newcommand{\ba}[1]{\left[ {#1} \right]}
\newcommand{\ce}[1]{\left\lceil {#1}\right\rceil}
\newcommand{\pa}[1]{\left( {#1} \right)}
\newcommand{\ve}[1]{\left\Vert {#1}\right\Vert}
\newcommand{\ved}[0]{\ve{\cdot}}
\newcommand{\set}[2]{\left\{{#1}:{#2}\right\}}
\newcommand{\ol}[1]{\overline{#1}}
\newcommand{\ub}[2]{\underbrace{#1}_{#2}}
\newcommand{\wh}[1]{\widehat{#1}}
\newcommand{\diam}{\operatorname{diam}}
\newcommand{\Lip}[0]{\operatorname{Lip}}
\newcommand{\poly}{\operatorname{poly}}
\newcommand{\spn}{\operatorname{span}}
\newcommand{\Supp}{\operatorname{Supp}}
\newcommand{\Vol}[0]{\text{Vol}}
\providecommand{\cal}[1]{\mathcal{#1}}
\renewcommand{\cal}[1]{\mathcal{#1}}
\newcommand{\pull}[9]{
#1\ar@/_/[ddr]_{#2} \ar@{.>}[rd]^{#3} \ar@/^/[rrd]^{#4} & &\\
& #5\ar[r]^{#6}\ar[d]^{#8} &#7\ar[d]^{#9} \\}
\newcommand{\cmp}[9]{
\xymatrix{
#1 \ar[r]^{#4}{#5} \ar@/_2pc/[rr]^{#8}_{#9} & #2 \ar[r]^{#6}_{#7} & #3
}
}
\newcommand{\ha}[1]{\ar@{^(->}[#1]}
\newcommand{\ls}[1]{\ar@{-}[#1]}
\newcommand{\sj}[1]{\ar@{->>}[#1]}
\newcommand{\aq}[1]{\ar@{=}[#1]}
\newcommand{\acir}[1]{\ar@{}[#1]|-{\textstyle{\circlearrowright}}}
\newcommand{\acil}[1]{\ar@{}[#1]|-{\textstyle{\circlearrowleft}}}
\newcommand{\ard}[1]{\ar@{.>}[#1]}
\newcommand{\mt}[1]{\ar@{|->}[#1]}
\newcommand{\inm}[1]{\ar@{}[#1]|-{\in}}
\newcommand{\inr}{\ar@{}[d]|-{\rotatebox[origin=c]{-90}{$\in$}}}
\newcommand{\inl}{\ar@{}[u]|-{\rotatebox[origin=c]{90}{$\in$}}}
\newcommand{\sumr}[2]{\sum_{\scriptsize \begin{array}{c}{#1}\\{#2}\end{array}}}
\newcommand{\sumo}[2]{\sum_{#1=1}^{#2}}
\newcommand{\sumz}[2]{\sum_{#1=0}^{#2}}
\newcommand{\iiy}[0]{\int_0^{\infty}}
\newcommand{\iny}[0]{\int_{-\infty}^{\infty}}
\newcommand{\beq}[1]{\begin{equation}\llabel{#1}}
\newcommand{\eeq}[0]{\end{equation}}
\newcommand{\bal}[0]{\begin{align*}}
\newcommand{\eal}[0]{\end{align*}}
\newcommand{\ban}[0]{\begin{align}}
\newcommand{\ean}[0]{\end{align}}
\newcommand{\fixme}[1]{{\color{red}#1}}
\newcommand{\llabel}[1]{\label{#1}\text{\fixme{\tiny#1}}}
\newcommand{\arxiv}[1]{\url{http://www.arxiv.org/abs/#1}}
\newcommand{\vocab}[1]{\textbf{#1}} 
\DeclareFontFamily{U}{wncy}{}
    \DeclareFontShape{U}{wncy}{m}{n}{<->wncyr10}{}
    \DeclareSymbolFont{mcy}{U}{wncy}{m}{n}
    \DeclareMathSymbol{\Sh}{\mathord}{mcy}{"58} 
\newcommand{\citep}[1]{(\cite{#1})}
\begin{document}

\title{On the ability of neural nets to express distributions}

\author{Holden Lee\thanks{Princeton University, Mathematics Department}, 
Rong Ge\thanks{Duke University, Computer Science Department},
Tengyu Ma\thanks{Princeton Univerisity, Computer Science Department},
Andrej Risteski\thanks{Princeton Univerisity, Computer Science Department},
Sanjeev Arora\thanks{Princeton Univerisity, Computer Science Department. Supported by NSF grants CCF- 1302518, CCF-1527371,  Simons Investigator Award, Simons Collaboration Grant, and ONR- N00014-16-1-2329}}

\date{\today}
\maketitle
\begin{abstract}
Deep neural nets have caused a revolution in many classification tasks. A related ongoing revolution---also theoretically not understood---concerns their ability to serve as generative models for complicated types of data such as images and texts. These models are trained using ideas like variational autoencoders and Generative Adversarial Networks.

We take a first cut at explaining the expressivity of multilayer nets by giving a sufficient criterion for a function to be approximable by a neural network with $n$ hidden layers. A key ingredient is
Barron's Theorem \cite{Barron1993},
which gives a Fourier criterion for approximability of a function by a neural network with 1 hidden layer. We show that a composition of $n$ functions which satisfy certain Fourier conditions (``Barron functions'') can be approximated by a $n+1$-layer neural network.
 
For probability distributions, this translates into a criterion for a probability distribution to be approximable in Wasserstein distance---a natural metric on probability distributions---by a neural network applied to a fixed base distribution (e.g., multivariate gaussian).

Building up recent lower bound work, we also give an example function 
that shows that composition of Barron functions is more expressive than Barron functions alone.
\end{abstract}


\section{Introduction}

Deep neural networks 
have led to state-of-the-art performance on classification tasks in many domains such as computer vision, speech recognition, and reinforcement learning \citep{deepsurvey1,deepsurvey2}. 
One can view a neural network as a way to learn a function mapping inputs $x$ to outputs $y$. For image classification, the input is a  vector representing an image and
the output can be probabilities of being in various classes. 

But another recent (and less understood) use of neural networks is as generative models for complicated probability distributions, such as distributions over images on ImageNet, handwritten characters from various alphabets, or speech. 
Here the network may map a stochastic input---such as a uniform normal gaussian---to a realistic image. Such networks are trained using various methods such as variational autoencoders (\cite{kingma2013auto}, \cite{rezende2014stochastic}) or
generative adversarial networks (GANs) (\cite{goodfellow2014generative}). A GAN consists of a repeated zero-sum game between two networks: the \emph{generator} attempts to imitate a given probability distribution; it obtains its samples by passing a base distribution (e.g. a gaussian) through its neural network. The \emph{discriminator} attempts to distinguish between samples from the generator and the true distribution, and thus forces the generator to improve over many repetitions.

The current paper is concerned with the following natural question that appears not to have been studied before: Why are deep neural networks so well-suited to efficiently generate many distributions that occur in nature?
 
\subsection{Our work}

We give a sufficient criterion for a function to be approximable by a neural network with $n$ hidden layers (Theorem \ref{cor:multi}). This criterion holds with respect to any distribution of inputs supported on a compact set. 
As a consequence of our main result, we obtain a criterion for a distribution to be approximately generated by a neural network with $n$ hidden layers in the Wasserstein metric $W_2$, a natural metric on the space of distributions (Corollary~\ref{cor:dist}).

Our criterion relies on Fourier properties of the function.
We build on Barron's Theorem \cite{Barron1993},  
which says that if a certain quantity involving the Fourier transform is small, then the function can be approximated by a neural network with one hidden layer and a small number of nodes. 
Calling such a function a Barron function, 
our criterion roughly says that if a distribution is generated by a composition of $n$ Barron functions, then the distribution can be approximately generated by a neural network with $n$ hidden layers.

Many nice functions, such as polynomials and ridge functions, are Barron; this property is also preserved under natural operations such as linear combinations. Thus, our result says that if nature creates a distribution by starting from a base distribution (such as a gaussian) and applying a sequence of functions in this class, then we can also generate that distribution with a neural network.

This ``correspondence'' between compositions of Barron functions and multi-layer neural networks raises
questions analogous to those raised about neural nets: for example, are compositions of $k$
Barron functions more expressive than Barron functions?  Using a technique to lower-bound the Barron constant (Theorem~\ref{thm:barron-lb}), we show a separation theorem between Barron functions and composition of Barron functions (Theorem \ref{thm:separation}). This  parallels ---and is inspired by---the separation between 2-layer and 3-layer neural networks in \cite{eldan2015power}. 

\subsection{Related work}

Despite the practical success of neural networks, we lack a good theoretical understanding of their effectiveness.
An initial attempt to understand the effectiveness of neural networks was by their function approximation properties. A series of works showed that any continuous function in a bounded domain can be approximated by a sufficiently large 2-layer neural network (\cite{cybenko1989approximation}, \cite{funahashi1989approximate},  \cite{hornik1989multilayer}). However, the network size can be exponential in the dimension. Barron (\cite{Barron1993}) gave a upper bound for the size of the network required in terms of a Fourier criterion.
He showed that a function $f$ can be approximated in $L^2$ up to error $\ep$ by a 2-layer neural network with $O\pf{C_f^2}{\ep}$ units, where $C_f$ depends on Fourier properties of $f$. One remarkable consequence is that representationally speaking, neural nets can evade the curse of dimensionality: the number of parameters required to obtain a fixed error increases linearly, rather than superlinearly, in the number of dimensions. (Fixing the number of nodes in the hidden layer, the number of parameters scales linearly in the number of dimensions.)

However, such approximability results only explain a small part of the success of neural networks. 
Firstly, they only deal with 2-layer neural networks. Empirically speaking, deep neural networks---networks with many layers---appear to be much more effective than shallow neural networks. There have been several attempts to explain the effectiveness of deep neural networks. Following the paradigm in circuit complexity, one produces a function $f$ that can be computed by a deep neural network but requires exponentially many nodes to be computed by a shallow neural network. Eldan and Shamir (\cite{eldan2015power}) show a certain radial function can be approximated by a 3-layer neural net but not by a 2-layer neural net with a subexponential number of nodes. \cite{daniely2017depth} shows such a separation but with respect to the uniform distribution on the sphere.
Telgarsky (\cite{telgarsky2016benefits}) shows such a separation between $k^2$-layer and $k$-layer neural networks. Cohen, Sharir, and Shashua (\cite{cohen2015expressive}) show a separation for a different model, a certain type of convolutional neural net architecture. Kane and Williams (\cite{kane2016super}) show super-linear gate and super-quadratic wire lower bounds for depth-two and depth-three threshold circuits, which can be thought of as a boolean analogue to neural networks.

Secondly, these works---as well as our paper---do not address how to learn neural networks, or why the established method, gradient descent, has been so successful. \cite{Barron1993} and \cite{Barron1994} address the generalization theory, and show that the nodes can be chosen ``greedily''; however the optimization problem is nonconvex. Under the assumption that certain properties of the input distribution (related to the score function) are known and that the function is exactly representable by a 2-layer neural network, Janzamin, Sedghi, and Anandkumar (\cite{janzamin2015beating}) give an algorithm inspired by Barron's Fourier criterion and utilizing tensor decomposition, to learn 2-layer neural networks.

Finally, we note that the learnability for distributions has been studied for discrete distributions~\citep{kearns1994learnability}.

\paragraph{Organization of the paper}

We explain Barron's original theorem in Section \ref{sec:barron}, our criterion for representation by multi-layer neural networks in Section \ref{sec:rep}, and give our separation result in Section \ref{sec:separation}. Most proofs and background on Fourier analysis are left in Appendix.

\subsection{Notation and Definitions}
First, we formally define the model of a feedforward neural network that we will use.
\begin{df}
A \vocab{neural network with $n$ hidden layers} (also referred to as a $n+1$-layer neural network) is defined as follows. 
A neural network has an associated input space $\R^{m_0}$, output space $\R^{m_{n+1}}$, and $n$ hidden  layers of sizes $m_1,\ldots, m_{n} \in \N$.
The neural network has parameters $A^{(l)}\in \R^{m_{l-1}\times m_l}$ and $b^{(l)} \in \R^{m_l}$ for $1\le l\le n+1$. The neural network has a fixed activation function $\si$, which is applied component-wise on a vector.  
On input $x\in \R^{m_0}$, the network computes
\begin{align}
x^{(0)}:&=x\\
x^{(l)}:&= \si(A^{(l-1)}x^{(l-1)}+b^{(l)})&1\le l\le n\\
x^{(n+1)} :&= A^{(n+1)} x^{(n)} + b^{(n+1)}.
\end{align}
and outputs $x^{(n+1)}$.
This can also be written out in terms of the components:
$$
x^{(l)}_j := \si\pa{\sumo k{m_l} A^{(l-1)}_{jk} x^{(l-1)}_k +b^{(l-1)}_k}. 
$$
\end{df}

Common choices of activation functions $\si$ include the logistic function $\rc{1+e^{-x}}$, $\tanh(x)$, and the ReLU function $\max\{0,x\}$.

\begin{df}
For a function $f\colon\R^m\to \R^n$, define $\Lip(f)=\Lip_2(f)$, the Lipschitz constant of $f$ with respect to the $L^2$ norm, by
$$
\inf\set{C}{\forall x,y, \ve{f(x)-f(y)}_2\le C\ve{x-y}_2}.
$$
\end{df}

Let $B_n$ be the unit ball in $n$ dimensions$\set{x\in \R^n}{\ve{x}\le 1}$. For sets $A,B$ and a scalar $r$, let
\begin{equation}
A+B:= \set{x+y}{x\in A, y\in B}, \quad rA:= \set{rx}{x\in A}.
\end{equation}
For example, $rB_n$ denotes the ball of radius $r$ in $n$ dimensions, and $A+rB_n$ is the neighborhood of radius $r$ around $A$.

Let $\ved=\ved_2$ denote the usual Euclidean norm on vectors in $\R^n$. For a function $f$, let $f^{\vee}(x):= f(-x)$. (This notation is often used in Fourier analysis.) Let $f^{(n)}(x) = \dd{{}^n}{x^n}f(x)$ denote the $n$th derivative, and $\De f = \sumo in \pdt{}{x_i}f$ denote the Laplacian.




\section{Barron's Theorem}
\label{sec:barron}
For $f\in L^1(\R)$ we define the Fourier transform of $f\colon \R^n\to \R$ with the following normalization.
\begin{align}\label{eq:fourier}
\wh f(\om) :=\rc{(2\pi)^n}\int_{\R^n} f(x)e^{-i\an{\om, x}}\,dx.
\end{align}

For vector-valued functions $f\colon \R^n\to \R^m$, define the Fourier transform componentwise.

The inverse Fourier transform is 
\[
(\cal F^{-1} g)(x) :=\int_{\R^n} g(\om) e^{i\an{\om, x}}\,dx = (2\pi)^n\wh{g}^{\vee}
\]

The Fourier inversion formula, which holds for all sufficiently ``nice'' functions, is
\[
f(x) = \int_{\R^n} \wh f(x)e^{i\an{\om, x}}\,dx.
 = (2\pi)^n\hat{\hat f}^{\vee}
\]
For background on Fourier analysis with rigorous statements, see Appendix~\ref{sec:fourier}.

\cite{Barron1993} defines a norm on functions defined on a set $B$, and shows that a small norm implies that the function is amenable to approximation by a neural network with one hidden layer.

\begin{df}
For a bounded set $B\subeq \R^p$ 
let $\ve{\om}_B=\sup_{x\in B} |\an{\om, x}|$. 
For a function $f\colon \R^n\to \R$, define the norm
$
\ve{f}_B^* :=\int_{\R^n} \ve{\om}_B |\wh f(\om)|\,d\om.
$
\end{df}

When $B=B_n$ is the unit ball, $\ve{\om}_B = \ve{\om}_2$. 
In this case,  using Theorem \ref{thm:fderiv},
\[
\ve{f}_B^* = 
\int_{\R^n}\ve{\om} |\wh f(\om)|\,d\om
=
\ve{\ve{\om \wh f}_2}_1 = \ve{\ve{\wh{\nb f}}_2}_1
\]
where for a function $g:\R^n\to \R^n$, 
$\ve{g}_2$ is thought of as a function $\R^n\to \R$, and $\ve{\ve{g}_2}_1$ is the $L^1$ norm of this function.

We would like to define this norm for functions $f\colon B\to \R$. However, the Fourier transform is defined for functions $f\colon \R^n\to \R$. Because we only care about the value of $f$ on $B$, we allow arbitrary extension outside of $B$.
 \begin{df}
 Let $B\subeq \R^n$. 
 Let $\cal F_B$ be the set of functions for which the Fourier inversion formula holds on $B$ after subtracting out $g(0)$:\footnote{This is a strictly larger set than functions for which the Fourier inversion formula holds.}
 $$
 \cal F_B = \set{g:\R^n\to \R}{\forall x\in B, g(x)=g(0) + \int (e^{i\an{\om, x}}-1)\wh{g}(\om)\,d\om}.
 $$
 
 Define $\Ga_B=\set{f\colon B\to \R}{\exists g, g|_B=f, g\in \cal F_B}$, let $\Ga_{B}(C)$ be the subset with norm $\le C$ $\Ga_{B}(C)=
\set{f\colon B\to \R}{\exists g, g|_B=f ,\ve{g}_B^*\le C, g\in \cal F_B}$.
We say that a function $f\in \Ga_{B}(C)$ is \vocab{$C$-Barron} on $B$.
For a function $f\colon B\rightarrow \mathbb{R}$, let $C_{f,B}$ be the minimal constant for which $f\in \Ga_{B,C}$: 
\begin{align}\label{eq:barron-constant}
C_{f,B} := \inf_{g|_B=f, g\in \cal F_B} \int_{\R^n} \ve{\om}_B|\wh g(\om)|\,d\om.
\end{align}
When the set $B$ is clear, we just write $C_f$.
\end{df}
This definition is non-algorithmic. How to compute or approximate the Barron constant in general is an open problem. The difficulty stems from the fact that we have to take an infimum over all possible extensions. The Barron constant can be upper-bounded by choosing any extension $f$, but is more difficult to lower-bound. We will give a technique to lower-bound the Barron constant in Theorem~\ref{thm:barron-lb}.

We give some intuition on the Barron constant. First, in order for the Barron constant to be finite, $f$ must be continuously differentiable. Indeed, the inverse Fourier transform of $\om \wh f(\om)$ is $-i\nb f(x)$, and integrability of a function implies continuity of its (inverse) Fourier transform, so $\nb f$ is continuous.

Second, 
the Barron constant will be larger when $\wh f$ is more ``spread out.'' One can think of $\ve{g}_B$ as a kind of $L^1$ norm. This makes sense in the context of neural networks, because if $f(x)=\sumo ik c_i \si(\an{a_i,x}+b_i)$ then $f$ has Fourier transform completely supported on the lines in the direction of the $a_i$.\footnote{Here $f$ does not approach 0 as $\ve{x}\to \iy$, so the Fourier transform must be understood in the sense of distributions.} One can think of the Barron constant as a $L^1$ relaxation of this ``sparsity'' condition.

Barron's Theorem gives an upper bound on how well a function can be approximated by a neural network with 1 hidden layer of $k$ nodes, in terms of the Barron constant. 

For a list of functions with small Barron constant, as well as the effect of various operations on the Barron constant, see \cite[\S IX]{Barron1993}. Examples of Barron functions include polynomials of low degree, ridge functions, and linear combinations of Barron functions.

\begin{df}
A sigmoidal function is a bounded measurable function $f\colon \R\to \R$ such that \\$\lim_{x\to -\iy}f(x)=0$ and $\lim_{x\to \iy}f(x)=1$. 
\end{df}

\begin{thm}[Barron, \cite{Barron1993}]\label{thm:barron}
Let $B\subeq \R^n$ be a bounded set, and $\mu$ any probability measure on $B$.
Let $f\in \Ga_{B}(C)$ and $\si$ be sigmoidal. There exist $a_i\in \R^n$, $b_i\in \R$, $c_i\in \R$ with $\sum_{i=1}^k |c_i|\le 2C$ such that letting $
f_k(x)=\sum_{i=1}^k c_i\si(\an{a_i,x}+b_i)$,
we have
\[
\ve{f-f_k}_\mu^2 :=\int_B (f(x)-f_k(x))^2\,\mu(dx) \le \fc{(2C)^2}{k}.
\]
\end{thm}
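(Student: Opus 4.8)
The plan is to reproduce Barron's original three‑movement argument: (1) a Fourier representation exhibiting $f$ as a continuous convex combination of sinusoidal ridge functions; (2) a bounded‑variation argument placing each such ridge function in the $L^2(\mu)$‑closed convex hull of a set of sigmoidal ridge functions of controlled size; and (3) Maurey's empirical lemma to extract a $k$‑term approximant with the $1/k$ rate. Throughout I suppress the additive constant $f(0)$, understanding that Barron's precise statement carries it as a separate term $f_k(x)=f(0)+\sum_i c_i\si(\an{a_i,x}+b_i)$ (equivalently one may assume $f(0)=0$, as $C_{f,B}$ is insensitive to additive constants).

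First I would fix an extension $g\in\cal F_B$ with $g|_B=f$ and $C':=\ve{g}_B^*\le C$ (if $C'=0$ then $f$ is constant on $B$ and the claim is trivial). The defining identity of $\cal F_B$ gives, for $x\in B$, $f(x)=\int_{\R^n}(e^{i\an{\om,x}}-1)\wh g(\om)\,d\om$; writing $\wh g(\om)=|\wh g(\om)|e^{i\te(\om)}$ and taking real parts,
\[
f(x)=\int_{\R^n}\pa{\cos(\an{\om,x}+\te(\om))-\cos\te(\om)}\,|\wh g(\om)|\,d\om .
\]
Multiplying and dividing by $\ve{\om}_B$ and setting $d\La(\om):=\ve{\om}_B|\wh g(\om)|\,d\om/C'$ (a probability measure, since $\int\ve{\om}_B|\wh g|\,d\om=C'$), this becomes $f(x)=\E_{\om\sim\La}[g_\om(x)]$ for $x\in B$, where $g_\om(x):=C'\,\fc{\cos(\an{\om,x}+\te(\om))-\cos\te(\om)}{\ve{\om}_B}$.

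Next I would record two properties of each $g_\om$, using $|\cos u-\cos v|\le|u-v|$ and $|\an{\om,x}|\le\ve{\om}_B$ for $x\in B$: (i) $\ve{g_\om}_{L^\infty(B)}\le C'$; and (ii) writing $g_\om(x)=\phi_\om(\an{\om,x})$, the function $\phi_\om$ is $C^1$ with $\phi_\om(0)=0$, and its total variation over $\{\an{\om,x}:x\in B\}\subseteq[-\ve{\om}_B,\ve{\om}_B]$ is at most $2C'$, since $|\phi_\om'(t)|=C'|\sin(t+\te)|/\ve{\om}_B\le C'/\ve{\om}_B$ on an interval of length $\le 2\ve{\om}_B$. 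Decomposing the bounded‑variation function $\phi_\om$ into Heaviside steps — writing a step in the ``wrong'' direction as a step in direction $-\om$, which, because $\phi_\om(0)=0$, avoids producing a spurious additive constant — expresses $g_\om$ as $\int\one[\an{a,x}\ge s]\,d\rho_\om(a,s)$ with $\|\rho_\om\|_{TV}\le 2C'$. Since (for $s$ outside a $\mu$‑null family of hyperplanes) $\one[\an{a,x}\ge s]$ is the $L^2(\mu)$‑limit of $\si(\ga(\an{a,x}-s))$ as $\ga\to\infty$ (dominated convergence; normalize $\si$ so $0\le\si\le1$), each $g_\om$ — hence, after averaging over $\om$, $f$ itself — lies in the $L^2(\mu)$‑closure of $\conv(G)$, where $G:=\{\pm 2C'\,\si(\an{a,x}+b):a\in\R^n,\ b\in\R\}$ and every element of $G$ has $L^2(\mu)$‑norm $\le 2C'$. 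I then finish by Maurey's lemma: sampling $g^{(1)},\dots,g^{(k)}$ i.i.d.\ from a mixing measure for $f\in\overline{\conv}(G)$ and bounding $\E\ve{f-\tfrac1k\sum_i g^{(i)}}_\mu^2=\tfrac1k\pa{\E\ve{g^{(1)}}_\mu^2-\ve{f}_\mu^2}\le(2C')^2/k$ yields a realization with $\ve{f-\tfrac1k\sum_i g^{(i)}}_\mu^2\le(2C')^2/k\le(2C)^2/k$ (a standard limiting argument handles the passage to the closure). Writing $g^{(i)}=\varepsilon_i\,2C'\,\si(\an{a_i,x}+b_i)$, this is $f_k(x)=\sum_{i=1}^k c_i\si(\an{a_i,x}+b_i)$ with $c_i=\varepsilon_i 2C'/k$, so $\sum_i|c_i|=2C'\le 2C$, as required.

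The step I expect to be the main obstacle is the middle one: converting sinusoidal (equivalently, bounded‑variation) ridge functions into sigmoidal ridge functions while keeping the total $\ell^1$ weight at exactly $2C'$. The factor $2$ in the statement is produced precisely here — from the total‑variation bound $\le 2C'$ together with the direction‑reversal trick that prevents an extra constant of size $C'$ from appearing; getting a worse constant is easy, getting $2C$ is the delicate part. The secondary technical points — the null‑set caveat in the Heaviside‑to‑sigmoid limit, the normalization of $\si$, the interplay of this limit with Maurey's averaging and with passing to the closed convex hull, and the treatment of $f(0)$ — are routine but must be handled carefully. The Fourier representation and Maurey's lemma themselves are standard.
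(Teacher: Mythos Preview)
The paper does not supply its own proof of this statement: Theorem~\ref{thm:barron} is quoted from \cite{Barron1993} and used as a black box in the proof of Theorem~\ref{thm:multi-barron}. So there is no ``paper's proof'' to compare against.

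That said, your proposal is a faithful sketch of Barron's original argument. The three movements --- the Fourier representation $f(x)=\E_{\om\sim\La}[g_\om(x)]$ with $\|g_\om\|_{L^\infty(B)}\le C'$, the bounded-variation decomposition of each sinusoidal ridge function into (limits of) sigmoidal steps with total weight $\le 2C'$, and Maurey's probabilistic extraction of a $k$-term approximant --- are exactly the steps in \cite{Barron1993}, and your identification of the middle step as the source of the constant $2$ is correct. The caveats you flag (the $f(0)$ term, the $L^2(\mu)$ limit of $\si(\ga(\an{a,x}-s))$ to the Heaviside step, passing to the closed convex hull in Maurey's lemma) are the right technical points, and Barron handles each of them essentially as you indicate.
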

Barron's Theorem works for the logistic function (which is sigmoidal), hyperbolic tangent (which is sigmoidal if rescaled to $[0,1]$), and ReLU up to a factor of 2 in the number of nodes. Even though the ReLU function $\text{ReLU}(x) = \max\{0,x\}$ is not sigmoidal, the linear combination $\text{ReLU}(x) = \text{ReLU}(x) - \text{ReLU}(x-1)$ is.

Note that  Barron's Theorem doesn't give approximability tailored to a specific measure $\mu$; it simultaneously gives approximability for \emph{all} $\mu$ defined on $B$, and up to any degree of accuracy. This is why some degree of smoothness is necessary for $f$: otherwise, $\mu$ could be concentrated on the regions where $B$ is not smooth.
Note that approximability for all $\mu$ will be crucial to the proof of the main theorem (Theorem \ref{cor:multi}).
\footnote{Although Barron's Theorem seems to require a strong smoothness assumption, we can approximate any continuous function arbitrarily well with a smooth function and then apply Barron's Theorem.\\
A converse to Barron's Theorem cannot hold in the form stated, because if $\ve{a_i}$ is not restricted, then $\si(\an{a_i,x}+b_i)$ could have large gradient; the Barron constant of $\phi(\an{a_i,x}+b_i)$ would scale as $\ve{a_i}$.\\
It is natural to ask whether we can choose the $a_i$ to have bounded norm. Barron \cite[Theorem 3]{Barron1993} shows a version of the theorem that produces a representation with $\ve{a_i}\le \tau$, but that incurs an additive error $C_\tau$ in the approximation.\\
Note that the following weak converse holds: the Barron constant of $f = c_0+ \sumo ir c_i \si(\an{a_i,x}+b_i)$ is bounded by $O(\diam(K)\sumo ir |c_i|\ve{a_i})$.}

\section{Multilayer Barron's Theorem}
\label{sec:rep}

\subsection{Main theorem}

Barron's Theorem says that a Barron function can be approximated by a neural net with 1 hidden layer. From this, it is reasonable to suspect that a composition of $l$ Barron functions can be approximated by a neural network with $l$ hidden layers. 
Our main theorem says that this is the case; we give a sufficient criterion for a function to be approximated by a neural network with $l$ hidden layers, on any distribution supported in a fixed set $K_0$.

We note two caveats: first, $f_i$ need to be Lipschitz to prevent the error from blowing up. Second, we will need our functions $f_i$ to be Barron on a slightly expanded set (assumption \ref{item:barron}), because an approximation $g_i$ to $f_i$ could take points outside $K_i$, and we need to control the error for those points. 

Given a sequence of functions $f_i$ and $j\ge i$, let $f_{j:i} := f_j\circ f_{j-1}\circ \cdots \circ f_i$.

\begin{thm}[Main theorem]\label{cor:multi}
Let $\ep, s>0$ be parameters, and $l\ge 1$. For $0\le i\le l$ let $m_i\in \N$. Let $f_i: \R^{m_{i-1}}\to \R^{m_i}$ be functions, $\mu_0$ be any probability distribution on $\R^{m_0}$, and $K_i\sub \R^{m_i}$ be sets.

Suppose the following hold.
\begin{enumerate}
\item \label{item:init-supp}
(Support of initial distribution) $\Supp(\mu_0)\sub K_0$.
\item \label{item:lip} ($f_i$ is Lipschitz) $\Lip(f_i)\le 1$.
\item \label{item:barron}
($f_i$ is Barron)
$f_1\in \Ga_{K_0}(C_0)$ and for $1\le i\le l$, 
$f_i\in \Ga_{K_{i-1} + sB_{m_{i-1}}}(C_i)$.
\item 
\label{item:chain}
($f_i$ takes 
each set to the next) 
$f_i(K_{i-1} 
)\subeq K_{i}$
\end{enumerate}
Suppose that the diameter of $K_l$ is $D$. 
Then there exists a neural network $g$ with $l$ hidden layers with $\ce{\fc{4C_i^2m_i}{\ep^2}}$ nodes on the $i$th layer,
so that
\begin{align}\label{eq:main}
\pa{\int_{K_0} 
\ve{f_{l:1}-g}^2\,d\mu_0}^{\rc2}&\le 
l \ep 
\sqrt{
(2C_l\sqrt{m_l} + D)^2
\fc{l}{3s^2}+1}.
\end{align}
\end{thm}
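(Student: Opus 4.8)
The plan is to prove this by induction on $l$, building up the neural network layer by layer, where at each stage we apply Barron's Theorem (Theorem~\ref{thm:barron}) to approximate $f_i$ on the appropriate expanded set, and then carefully track how the approximation error propagates through the remaining Lipschitz maps. The key bookkeeping device is to define, for each $i$, the pushforward measure $\mu_i := (f_{i:1})_* \mu_0$ supported on $K_i$ (by assumption~\ref{item:chain}), and also to keep track of where the \emph{approximate} network outputs land — these may fall outside $K_i$, which is exactly why assumption~\ref{item:barron} asks for $f_i$ to be Barron on the $s$-neighborhood $K_{i-1}+sB_{m_{i-1}}$ rather than just on $K_{i-1}$.

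The first step is to set up the single-layer approximations. For each $i$, apply Barron's Theorem to the function $f_i$ (componentwise, since $f_i$ is vector-valued, so we pay a factor $\sqrt{m_i}$ and use $\lceil 4C_i^2 m_i/\ep^2\rceil$ nodes total) on the set $K_{i-1}+sB_{m_{i-1}}$ with respect to whatever measure we need; the point of Barron's Theorem giving approximability for \emph{all} $\mu$ simultaneously is that we do not need to commit to the measure yet. This yields a one-hidden-layer net $g_i$ with $\|f_i - g_i\|_{\mu}^2 \le \ep^2$ for every $\mu$ supported on $K_{i-1}+sB_{m_{i-1}}$. Composing $g = g_l \circ \cdots \circ g_1$ gives the claimed architecture. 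The second step is the error analysis: write the telescoping decomposition
\begin{align*}
f_{l:1} - g_{l:1} = \sum_{i=1}^{l} \big( g_{l:i+1}\circ f_{i:1} - g_{l:i}\circ f_{i-1:1}\big)
= \sum_{i=1}^{l} \big( g_{l:i+1}(f_i(f_{i-1:1})) - g_{l:i+1}(g_i(f_{i-1:1}))\big),
\end{align*}
and bound the $L^2(\mu_0)$ norm of each summand. Since each $g_j$ is close to the Lipschitz-$1$ map $f_j$, the composition $g_{l:i+1}$ is \emph{approximately} Lipschitz with constant close to $1$ — here I would either argue that $\Lip(g_j) \le 1 + (\text{small})$ on the relevant region, or avoid Lipschitz constants of the $g_j$'s altogether by re-expanding each term with another telescoping sum in terms of $f_j$'s (whose Lipschitz constants are genuinely $\le 1$). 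Either way, the $i$-th term contributes roughly $\|f_i - g_i\|$ measured along the trajectory, i.e. at most $\ep$ times a factor controlling the Lipschitz blow-up through the last $l-i$ layers.

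The third step is the part that produces the specific constant on the right-hand side of \eqref{eq:main}. The subtlety: $\|f_i(z) - g_i(z)\|$ is controlled by Barron's Theorem only for $z$ in $K_{i-1}+sB_{m_{i-1}}$, but the \emph{approximate} input $g_{i-1:1}(x)$ to layer $i$ need not lie in $K_{i-1}$ at all. One controls this by a union/Markov-type argument: the set of inputs $x$ for which the accumulated error at layer $i-1$ exceeds $s$ has small $\mu_0$-measure (by the inductive $L^2$ bound and Chebyshev), and on that bad set one uses the crude diameter bound — the outputs live within $2C_l\sqrt{m_l}+D$ of each other (a Barron function has bounded range on a bounded set, giving the $2C_l\sqrt{m_l}$, plus the diameter $D$ of $K_l$). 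This is precisely the origin of the $(2C_l\sqrt{m_l}+D)^2 \cdot \frac{l}{3s^2}$ term under the square root, while the $+1$ comes from the "good set" contribution where every intermediate point stays in its $s$-neighborhood and the errors simply add up to $l\ep$. The outer factor of $l$ is the number of layers over which a single-layer error of size $\ep$ can propagate.

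I expect the main obstacle to be this last step — making the good-set/bad-set split rigorous while keeping the constants exactly as stated. Concretely, one must (i) set up the induction so that the hypothesis carries both an $L^2$ error bound \emph{and} the information needed to bound $\Pr_{\mu_0}[\text{intermediate point leaves its } s\text{-ball}]$; (ii) handle the fact that $g_i$ applied to a point outside $K_{i-1}+sB_{m_{i-1}}$ is genuinely uncontrolled, so one truly needs the crude bound there and must verify the range bound $\|g_i(z)\| \lesssim C_i\sqrt{m_i}$ on bounded sets from the $\sum|c_i|\le 2C_i$ conclusion of Barron's Theorem together with boundedness of $\sigma$; and (iii) assemble the per-layer contributions — each of the form $\ep\sqrt{(\text{range bound})^2/(3s^2) + \text{something}}$ — and show they sum to the closed form in \eqref{eq:main}, which will need a convexity or Cauchy–Schwarz step to pull the sum inside the square root and produce the clean factor $l\sqrt{(2C_l\sqrt{m_l}+D)^2 l/(3s^2)+1}$.
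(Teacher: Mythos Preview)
Your high-level plan --- layer-by-layer Barron approximation, a good-set/bad-set split via Markov, and the crude $2C_l\sqrt{m_l}+D$ range bound on the bad set --- matches the paper. But two concrete missteps would stall the argument as written.

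First, your telescoping is oriented the wrong way. You write
\[
f_{l:1}-g_{l:1}=\sum_{i=1}^l\big(g_{l:i+1}\circ f_{i:1}-g_{l:i+1}\circ g_i\circ f_{i-1:1}\big),
\]
which puts the composite $g_{l:i+1}$ on the \emph{outside}. That forces you to control how the $g_j$'s propagate error, and the $g_j$'s are genuinely not Lipschitz (Barron gives only $L^2$ closeness, not $C^1$ closeness, so ``$\Lip(g_j)\le 1+\text{small}$'' is not available). Your own step~3 tacitly switches to the other orientation --- you start worrying about ``the approximate input $g_{i-1:1}(x)$'' leaving $K_{i-1}+sB$ --- but in your telescoping the inner argument is $f_{i-1:1}(x)\in K_{i-1}$, so that domain issue never arises there. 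The paper uses the opposite two-term split at the inductive step,
\[
f_{l:1}-g_{l:1}=\big(f_l\circ f_{l-1:1}-f_l\circ g_{l-1:1}\big)+\big(f_l\circ g_{l-1:1}-g_l\circ g_{l-1:1}\big),
\]
so the outer map is $f_l$ (Lipschitz $\le 1$, gives $(l-1)\ep$ from the inductive hypothesis) and the inner argument is $g_{l-1:1}(x)$ (which may leave $K_{l-1}$, hence the $s$-enlargement and the good set $S_l=\{x:g_{l-1:1}(x)\in K_{l-1}+sB\}\cap S_{l-1}$). Your ``re-expand with another telescoping in terms of $f_j$'s'' is exactly this move; once you do it, the argument collapses to the paper's induction rather than a sum over $i$.

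Second, the sentence ``This yields a one-hidden-layer net $g_i$ with $\|f_i-g_i\|_\mu^2\le\ep^2$ for every $\mu$ supported on $K_{i-1}+sB$'' overstates Barron's Theorem: the approximator $g_i$ depends on $\mu$. The point of ``works for any $\mu$'' is only that you may choose $\mu$ after $g_1,\ldots,g_{i-1}$ are fixed. Concretely, the paper applies Barron to $f_l$ with respect to the pushforward $(g_{l-1:1})_*\big(\one_{S_l}\mu_0\big)$, which is supported in $K_{l-1}+sB$ by definition of $S_l$; this forces the construction to be genuinely inductive (you cannot pick all the $g_i$ in advance). With that correction and the flipped telescoping, the Markov bound gives $\mu_0(S_l^c)\le\sum_{i=1}^{l-1} i^2\ep^2/s^2\le l^3\ep^2/(3s^2)$, and combining the $l\ep$ bound on $S_l$ with the crude $(2C_l\sqrt{m_l}+D)$ bound on $S_l^c$ (after a harmless translation of $g_l$ so its range meets $f_l(K_{l-1})$) yields exactly \eqref{eq:main}.
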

We prove this in Section~\ref{sec:proof}. It is crucial to the proof that Barron's Theorem simultaneously gives approximability for \emph{all} probability distributions on a given set. 

Note that if $K_{l-1}$ is a ball of radius $r$, by the way we defined the norm $\ved_{K_{l-1}}$ in the Barron constant, $C_l$ will at least scale as $s+r$. If we set $s$ to be on the same order as $r$, then the RHS of~\eqref{eq:main} is on the order of $l^{\fc 32} m_l^{\rc 2}\ep$.

\subsection{Approximating probability distributions}

Theorem~\ref{cor:multi} can be interpreted in a very natural way when the aim is to approximate the probability distribution $f_{l:1}(x), x\sim \mu_0$. The Wasserstein distance is a natural distance defined on distributions.

\begin{df}
Let  $\mu,\nu$ be two probability distributions on $\R^n$. Let $\Ga(\mu,\nu)$ denote the set of probability distributions on $\R^n\times \R^n$ whose marginals on the first and second factors are $\mu$ and $\nu$ respectively. (A distribution $\ga\sim \Ga(\mu,\nu)$ is called a \vocab{coupling} of $\mu$, $\nu$.) For $1\le p<\iy$, define the $p$th \vocab{Wasserstein distance} by
$$
W_p(\mu,\nu) = \pa{
\inf_{\ga\in \Ga(\mu,\nu)} \int_{\R^n\times \R^n} \ve{x-y}_2^p \,d\ga(x,y)
}^{\rc p}
$$

\end{df}
When $p=1$, this is also known as the ``earth mover's distance.'' One can think of it as the minimum ``effort'' required to change the distribution of $\mu$ to that of $\nu$ by shifting probability mass (where ``effort'' is an integral of mass times distance).

\begin{cor}\label{cor:dist}
Keep the notation in Theorem~\ref{cor:multi} and suppose the diameter of the set $f_{l:1}(K_0)$ is $D$.
Then 
the Wasserstein distance between the distribution $
f_{l:1}(X) (X\sim \mu_0) 
$
 and
$
g(X), (X\sim \mu_0) 
$
is at most $
l \ep 
\sqrt{1 + 
(2C_l\sqrt{m_l} + D)^2
\fc{l}{3s^2}}$.
\end{cor}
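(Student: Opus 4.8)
The plan is to derive this directly from Theorem~\ref{cor:multi} by exhibiting an explicit coupling: feed the \emph{same} input through both the true map $f_{l:1}$ and the approximating network $g$, so that the transport cost of this coupling is literally the $L^2$ approximation error that Theorem~\ref{cor:multi} already controls.

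First I would invoke Theorem~\ref{cor:multi} with the data $\ep,s,l,(m_i),(f_i),\mu_0,(K_i)$ to obtain a network $g$ with $\ce{\fc{4C_i^2 m_i}{\ep^2}}$ nodes on layer $i$ satisfying \eqref{eq:main}. One bookkeeping point: the corollary names $D=\diam(f_{l:1}(K_0))$ whereas Theorem~\ref{cor:multi} is stated with $D=\diam(K_l)$; since the diameter enters \eqref{eq:main} only through the spread of the final-layer outputs $f_{l:1}(K_0)$ (the natural quantity appearing in the last-layer error term of that proof), \eqref{eq:main} in fact holds with $D=\diam(f_{l:1}(K_0))$, and I would use that version.

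Next I would let $X\sim\mu_0$ and consider $\ga:=\operatorname{law}\big((f_{l:1}(X),\,g(X))\big)$, a distribution on $\R^{m_l}\times\R^{m_l}$ whose two marginals are exactly $\operatorname{law}(f_{l:1}(X))$ and $\operatorname{law}(g(X))$; hence $\ga\in\Ga\big(\operatorname{law}(f_{l:1}(X)),\operatorname{law}(g(X))\big)$. Because $W_2$ is an infimum over couplings,
\[
W_2\big(\operatorname{law}(f_{l:1}(X)),\operatorname{law}(g(X))\big)^2\le\int\ve{u-v}_2^2\,d\ga(u,v)=\E_{X\sim\mu_0}\ve{f_{l:1}(X)-g(X)}_2^2=\int_{K_0}\ve{f_{l:1}-g}^2\,d\mu_0,
\]
the last equality using $\Supp(\mu_0)\sub K_0$ (hypothesis~\ref{item:init-supp}). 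Plugging in \eqref{eq:main} bounds the right-hand side by $l^2\ep^2\big(1+(2C_l\sqrt{m_l}+D)^2\fc{l}{3s^2}\big)$, and taking square roots yields the stated estimate; the same bound holds for $W_1\le W_2$.

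The hard part is essentially nonexistent here: all the real work lies in Theorem~\ref{cor:multi}, and the passage from $L^2$ approximation to Wasserstein distance is the routine fact that the synchronous coupling gives an upper bound on $W_2$. The only thing needing slight care is matching the constant $D$ across the two statements, as noted above; beyond that the corollary is a one-line consequence.
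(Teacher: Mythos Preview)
Your proposal is correct and is essentially identical to the paper's own argument: the paper also observes that $(f_{l:1}(X),g(X))$ with $X\sim\mu_0$ is a coupling and reads off the $W_2$ bound directly from Theorem~\ref{cor:multi}. Your remark about $D$ is apt; since $f_{l:1}(K_0)\subeq K_l$ by hypothesis~\ref{item:chain}, the corollary's $D=\diam(f_{l:1}(K_0))$ is no larger than the theorem's $\diam(K_l)$, and indeed the proof of Theorem~\ref{cor:multi} only uses the diameter of the image $f_{l:1}(K_0)$, so the tighter constant is legitimate.
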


The proof of this is simple: observe that $(f_{l:1}(X), g(X))$, $X\sim \mu_0 
$ defines a coupling between the distributions. Thus by Theorem \ref{cor:multi} the $W_2$ Wasserstein distance is at most 
$$
\ba{\EE_{X \sim \mu_0
}  \ve{f_{l:1}(X)- g(X)}^2 }^{\rc 2}  \le
l \ep 
\sqrt{ 
(2C_l\sqrt{m_l} + D)^2
\fc{l}{3s^2} +1}. 
$$

The Wasserstein distance is a suitable metric in the context of GANs (\cite{arjovsky2017towards}, \cite{arjovsky2017wasserstein}). One way to model a discriminator is as a function $f$ in a certain class $F$ that maximizes the difference between $\E f$ on the real distribution $\mu$ and the generated distribution $\nu$,
\begin{align}\label{eq:mmd}
\sup_{f\in F}\ab{ \EE_{x\sim \mu} f(x) - \EE_{y\sim \nu} f(y)}.
\end{align}
 This is called the maximal mean discrepancy (\cite{kifer2004detecting}, \cite{dziugaite2015training}). 
 The Wasserstein distance captures the idea that if two distributions are close, then it is hard for such a Lipschitz discriminator to tell the difference, as the following lemma shows.

\begin{lem}[Properties of Wasserstein metric]
\label{lem:prop-W}
For any two distributions $\mu, \nu$ over $\R^n$, $W_1(\mu, \nu)\le W_2(\mu, \nu)$.
Moreover, for any Lipschitz function $f\colon \R^n\to \R$,
\begin{align}
\ab{\EE_{x\sim \mu} f(x) - \EE_{y\sim \nu} f(y)}\le \Lip(f) W_1(\mu,\nu).
\end{align}
\end{lem}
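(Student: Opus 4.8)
The plan is to prove both statements directly from the definition of Wasserstein distance, manipulating an arbitrary coupling and passing to the infimum only at the very end; in particular we never need an optimal coupling to exist.

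For the first inequality, fix any coupling $\ga\in\Ga(\mu,\nu)$. Since $t\mapsto \sqrt t$ is concave, Jensen's inequality applied to the nonnegative random variable $\ve{x-y}_2^2$ under the probability measure $\ga$ gives
\[
\int_{\R^n\times\R^n}\ve{x-y}_2\,d\ga(x,y)\le\pa{\int_{\R^n\times\R^n}\ve{x-y}_2^2\,d\ga(x,y)}^{\rc2}.
\]
Therefore, for every $\ga\in\Ga(\mu,\nu)$ we have $W_1(\mu,\nu)\le\int\ve{x-y}_2\,d\ga\le\pa{\int\ve{x-y}_2^2\,d\ga}^{\rc2}$, where the first step uses that $W_1$ is the infimum of the left quantity over couplings. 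Taking the infimum of the right-hand side over $\ga\in\Ga(\mu,\nu)$ yields $W_1(\mu,\nu)\le W_2(\mu,\nu)$.

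For the second inequality, again fix any $\ga\in\Ga(\mu,\nu)$. Using that the two marginals of $\ga$ are $\mu$ and $\nu$, we rewrite the difference of expectations as an integral against $\ga$ and bound it pointwise by the Lipschitz property:
\[
\ab{\EE_{x\sim\mu}f(x)-\EE_{y\sim\nu}f(y)}=\ab{\int\pa{f(x)-f(y)}\,d\ga(x,y)}\le\int\ab{f(x)-f(y)}\,d\ga(x,y)\le\Lip(f)\int\ve{x-y}_2\,d\ga(x,y).
\]
Taking the infimum over $\ga\in\Ga(\mu,\nu)$ gives $\ab{\EE_\mu f-\EE_\nu f}\le\Lip(f)\,W_1(\mu,\nu)$, which is the claimed bound (and is the easy half of Kantorovich--Rubinstein duality). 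There is essentially no obstacle here: the only point deserving a word of care is that we do not assume the infimum defining $W_p$ is attained, so all inequalities are stated for an arbitrary coupling $\ga$ and the infimum is taken last; integrability of $\ve{x-y}_2$ under the relevant couplings holds whenever $W_2(\mu,\nu)<\iy$, and both statements are vacuous otherwise.
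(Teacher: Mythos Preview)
Your proof is correct and follows essentially the same approach as the paper's own proof: pick an arbitrary coupling, bound the relevant integral, and take the infimum at the end. The only cosmetic difference is that for $W_1\le W_2$ you invoke Jensen's inequality for the concave map $t\mapsto\sqrt t$, whereas the paper phrases the same step as Cauchy--Schwarz against the constant function $1$; these are equivalent here.
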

%

Proof is deferred to Appendix~\ref{sec:was}. In the context of Corollary~\ref{cor:dist}, Lemma~\ref{lem:prop-W} says that the distribution generated by $f_{l:1}$ and by the neural network cannot be distinguished by a Lipschitz function. \cite{arjovsky2017wasserstein} discuss why the class of Lipschitz functions is a good choice in comparison to other classes. For instance, if we maximize over the class of indicator functions (of measurable sets) instead,~\eqref{eq:mmd} becomes the total variation (TV) distance, which is unstable under perturbations to the function generating the distribution. In particular, the TV distance is discontinuous under perturbations of distributions supported on lower-dimensional subsets of the ambient space $\R^n$.

\subsection{Proof of main theorem}
\label{sec:proof}
To prove Theorem~\ref{cor:multi} we first prove the following theorem. 

\begin{thm} 
\label{thm:multi-barron}
Keep conditions 1--4 and the notation of Theorem~\ref{cor:multi}.
Then there exists a neural network $g$ with $l$ hidden layers and $S\sub \R^{m_0}$ satisfying $\mu_0(S) \ge 1-\pa{\sum_{i=1}^{l-1}{i^2}}\fc{{\ep}^2}{s^2}$ so that
\begin{align}
\pa{\int \one_{S} 
\ve{f_{l:1}-g}^2\,d\mu_0}^{\rc2}\le 
l \ep
\end{align}
\end{thm}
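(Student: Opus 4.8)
The plan is to build $g$ layer by layer, using Barron's Theorem (Theorem \ref{thm:barron}) to approximate each $f_i$ by a one-hidden-layer net $g_i$, and then to control how the approximation errors propagate through the composition. The subtlety — and the reason we get a high-probability statement rather than an $L^2$ statement — is that Barron's Theorem gives approximation in the $L^2(\mu)$ sense for a fixed measure $\mu$, but the ``correct'' measure at layer $i$ is the pushforward $(f_{i-1:1})_\#\mu_0$, whereas the net we have actually built computes $g_{i-1:1}$, whose pushforward is a perturbed measure. Worse, $g_i$ is only guaranteed to be Barron (hence well-approximated) on the slightly enlarged set $K_{i-1}+sB_{m_{i-1}}$, so we must argue that the running iterates $g_{i-1:1}(x)$ stay inside this enlarged set for most $x$.

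Concretely, first I would apply Theorem \ref{thm:barron} to each coordinate of $f_i$ (there are $m_i$ of them; splitting the node budget $\lceil 4C_i^2 m_i/\ep^2\rceil$ evenly gives $\lceil 4C_i^2/\ep^2\rceil$ nodes per coordinate) with respect to the measure $\mu_i := (f_{i-1:1})_\#\mu_0$ restricted to $K_{i-1}$, obtaining a one-hidden-layer net $g_i$ with $\|f_i - g_i\|_{L^2(\mu_i)}^2 \le m_i\cdot (2C_i/\sqrt{m_i})^2 / (\text{nodes per coord}) \le \ep^2$, i.e. $\|f_i - g_i\|_{L^2(\mu_i)}\le\ep$ (using assumption \ref{item:barron} so that $f_i$ is $C_i$-Barron on the relevant set, and assumption \ref{item:chain} so that the support of $\mu_i$ is in $K_{i-1}$). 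Stacking $g_1,\dots,g_l$ gives the claimed $l$-hidden-layer architecture $g = g_{l:1}$. Then I would set up an inductive estimate on the error random variables $E_i(x) := \|f_{i:1}(x) - g_{i:1}(x)\|_2$. Writing $f_{i:1} = f_i\circ f_{i-1:1}$ and $g_{i:1} = g_i\circ g_{i-1:1}$, insert the intermediate term $f_i\circ g_{i-1:1}$:
\[
E_i(x) \le \|f_i(f_{i-1:1}(x)) - f_i(g_{i-1:1}(x))\|_2 + \|f_i(g_{i-1:1}(x)) - g_i(g_{i-1:1}(x))\|_2 \le E_{i-1}(x) + \delta_i(x),
\]
where the first term is bounded by $E_{i-1}(x)$ using $\Lip(f_i)\le 1$ (assumption \ref{item:lip}), and $\delta_i(x) := \|(f_i - g_i)(g_{i-1:1}(x))\|_2$ is the ``fresh'' error introduced at layer $i$, evaluated at the perturbed point $g_{i-1:1}(x)$ rather than at $f_{i-1:1}(x)$.

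The main obstacle is exactly this last point: $\delta_i$ is the error of $f_i - g_i$ at the pushforward of $\mu_0$ under $g_{i-1:1}$, not under $f_{i-1:1}$, so the Barron bound $\|f_i-g_i\|_{L^2(\mu_i)}\le\ep$ does not directly apply. To handle it I would (a) use the inductive hypothesis $E_{i-1}(x)\le (i-1)\ep$ to guarantee that $g_{i-1:1}(x) \in f_{i-1:1}(K_0) + (i-1)\ep\, B_{m_{i-1}} \subseteq K_{i-1} + sB_{m_{i-1}}$ whenever $(i-1)\ep \le s$ — this is where the enlargement by $s$ in assumption \ref{item:barron} is used, and it forces the condition $l\ep\lesssim s$ implicitly (the theorem is vacuous otherwise); (b) to actually bound $\EE \delta_i^2$ or control it with high probability, bound the pushforward measure $(g_{i-1:1})_\#\mu_0$ against $(f_{i-1:1})_\#\mu_0$. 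The clean way is a Markov/union-bound argument: define the good set $S$ to be the $x$ for which $E_{i-1}(x)\le (i-1)\ep$ for all $i\le l$ simultaneously; on $S^c$ we give up, and we bound $\mu_0(S^c)$ by iterating Markov's inequality — at each stage the newly-introduced error $\delta_i$, whose squared $L^2$ norm against the \emph{correct} measure $\mu_i$ is $\le \ep^2$, exceeds level $\sim (s/i)$ only with probability $\lesssim i^2\ep^2/s^2$ (here I would be careful: since $g_{i-1:1}$ and $f_{i-1:1}$ agree up to $(i-1)\ep$ on the part of $S$ built so far, and $f_i-g_i$ is Lipschitz-ish on the enlarged set, one transfers the $L^2$ bound at a controlled loss). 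Summing these failure probabilities over $i = 1,\dots,l-1$ gives $\mu_0(S^c) \le \bigl(\sum_{i=1}^{l-1} i^2\bigr)\ep^2/s^2$, matching the statement. On $S$, the telescoping $E_l(x) \le \sum_{i=1}^l \delta_i(x) \le l\ep$ gives $\bigl(\int \one_S \|f_{l:1}-g\|^2 d\mu_0\bigr)^{1/2} \le l\ep$. I expect the bookkeeping to nest the Markov bounds correctly — i.e. conditioning each stage's failure estimate on the previously-built good set so that one really is integrating against $\mu_i$ and not an uncontrolled measure — to be the fiddly part, but conceptually it is just ``Lipschitz error propagation plus one Markov inequality per layer.''
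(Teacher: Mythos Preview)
Your overall architecture --- induction on $l$, triangle inequality splitting $f_{l:1}-g_{l:1}$ through the intermediate $f_l\circ g_{l-1:1}$, Lipschitz bound on the first piece, Markov to control the good set --- matches the paper exactly. But there is a genuine gap at the point you yourself flag as ``the main obstacle.'' You apply Barron's Theorem to $f_i$ with respect to $\mu_i=(f_{i-1:1})_\#\mu_0$ and then try to \emph{transfer} the resulting $L^2$ bound to the measure $(g_{i-1:1})_\#\mu_0$ by arguing that ``$f_i-g_i$ is Lipschitz-ish.'' This transfer does not work: Barron's Theorem puts no bound on the Lipschitz constant of the approximant $g_i$. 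The coefficients $c_{ijk}$ satisfy $\sum_k|c_{ijk}|\le 2C_i$, but the inner weights $a_{ijk}$ are completely uncontrolled (cf.\ the footnote after Theorem~\ref{thm:barron}), so $g_i$ can have arbitrarily large gradient and $\|f_i-g_i\|_{L^2((g_{i-1:1})_\#\mu_0)}$ need not be close to $\|f_i-g_i\|_{L^2(\mu_i)}$.

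The fix is simpler than your proposed transfer and uses the very feature of Barron's Theorem you mention but do not exploit: it holds for \emph{any} measure supported on the given set. So in the inductive step, having already constructed $g_1,\ldots,g_{l-1}$ and the good set $S_l=S_{l-1}\cap\{x:g_{l-1:1}(x)\in K_{l-1}+sB_{m_{l-1}}\}$, apply Barron's Theorem to $f_l$ with respect to the measure $(g_{l-1:1})_\#(\one_{S_l}\mu_0)$ itself. By construction this measure is supported on $K_{l-1}+sB_{m_{l-1}}$, where $f_l$ is $C_l$-Barron by assumption~\ref{item:barron}, so you obtain $g_l$ with $\|f_l-g_l\|_{L^2}\le\ep$ directly against the measure you need --- no transfer required. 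The Markov step is then applied to $\|f_{l-1:1}-g_{l-1:1}\|$ (which you control by induction, $\le(l-1)\ep$ in $L^2$) at threshold $s$, giving failure probability $\le(l-1)^2\ep^2/s^2$ at stage $l$; summing over stages yields the stated bound on $\mu_0(S^c)$. Note that this forces $g_l$ to be chosen \emph{after} $g_1,\ldots,g_{l-1}$, so the construction is genuinely sequential rather than parallel across layers.
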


\begin{proof}
Let $r_i = \ce{\fc{4C_i^2m_i}{\ep^2}}$. 
We will show that we can take $g = g_{l:1}$, where
$g_1,\ldots, g_l$ are functions defined by
\begin{align}
g_i&:\R^{m_{i-1}} \to \R^{m_i}\\
(g_i(x))_j &= c_{ij0} + \sum_{k=1}^{r_i} c_{ijk} \si(\an{a_{ijk}, x} + b_{ijk}),
\end{align}
for some parameters $c_{ijk}, b_{ijk}\in \R$, $a_{ijk}\in \R^{m_{i-1}}$. Note that each $g_i$ is a neural net with one hidden layer and a linear output layer. When the next layer $g_{i+1}$ is applied to the output $y$ of $g_i$, first linear functions $\an{a_{i+1,j,k}, y}+b_{i+1,j,k}$ are applied; these linear functions can be collapsed with the linear output layer of $g_i$. Thus only one hidden layer is added each time.

We prove the statement by induction on $l$. For $l=1$, the theorem follows directly from Barron's Theorem~\ref{thm:barron}, using assumptions \ref{item:init-supp} and \ref{item:barron}.

For the induction step, assume we have functions $g_1,\ldots, g_{l-1}$ satisfying the conclusion for $f_1,\ldots, f_{l-1}$. Let $S_{l-1}$ be the set in the conclusion. 
 Apply Barron's Theorem~\ref{thm:barron} to $f_{l}$ to get that that for each $1\le j\le m_l$, for any $\mu$ supported on a set $K_{l-1}'\subeq \R^{m_{l-1}}$ and any $r_l\in \N$, 
there exists a neural net $g_{l,j}$ with 1 hidden layer with $r_l$ nodes such that 
\[
\pa{\int_{\R^{m_{l-1}}} \ba{(f_{l})_j - (g_{l})_j}^2\,d\mu}^{\rc 2} \le \fc{2C_{f_l,K_{l-1}'}}{\sqrt{r_l}}.
\]
Note it is vital here that Barron's Theorem applies to \emph{any} distribution $\mu$ supported on $K_{l-1}'$. 
Let $S_l = S_{l-1}\cap \set{x}{g_{l-1:1}(x)\in K_{l-1}+s B_{m_{l-1}}}$.  
Apply Barron's Theorem with $K_l' = K_l + s B_{m_l}$, $r_l = \ce{\fc{4C_l^2 m_l}{\ep^2}}$.
$\mu = g_{l-1:1*}(\one_{S_l}\mu_0)$.
\footnote{The pushforward of a measure $\mu$ by a function $f$ is denoted by $f_*\mu$ and defined by
$f_*\mu(S) = \mu(f^{-1}(S))$. 
Here, $g_{l-1:1*}(
\one_{S_l}
\mu_0)(S) =  \mu_0(g_{l-1:1}^{-1}(S)\cap S_l)$.
}
We have that $\mu$ is supported on $g_{l-1:1}(S_l)\subeq K_{l-1} + s B_{m_{l-1}} = K_{l-1}'$, as required, and $f_l$ is $C_l$-Barron on this set by assumption \ref{item:barron}.
(Note that $\mu$ is not a probability measure because it was restricted to the set $g_{l-1:1}(S_l)$, but it is a nonnegative measure with  total $L^1$ mass at most 1. Because Barron's Theorem holds for any probability measure, it also holds for these measures.) 
The conclusion of Barron's Theorem gives $(g_l)_j$ such that 
\begin{align}
\pa{\int_{\R^{m_{l-1}}} [(f_l)_j - (g_l)_j]^2 \,d(g_{l-1:1*}(
\one_{S_l}
\mu_0))}^{\rc 2} &\le \fc{2C_l}{\sqrt{r_l}} \le 
\fc{\ep}{\sqrt{m_l}}\\
\implies
\pa{\int_{\R^{m_{l-1}}} \ve{f_l - g_l}^2 \,d(g_{l-1:1*}(
\one_{S_l}
\mu_0))}^{\rc 2} &\le \ep
\end{align}

We bound by the triangle inequality
\begin{align*}
&\quad
\pa{\int_{\R^m}
\one_{S_l} 
\ve{f_{l:1} - g_{l:1}}^{2}\,d\mu_0}^{\rc 2}\\
&\le 
\pa{\int_{\R^m} 
\one_{S_l}
\ve{f_l \circ f_{l-1:1} - f_l \circ g_{l-1:1}}^{2}\,d\mu_0}^{\rc2}
+
\pa{\int_{\R^{m}}
\one_{S_l}
\ve{f_l \circ g_{l-1:1} - g_l \circ g_{l-1:1}}^{2}\,d\mu_0}^{\rc2}\\
&\le 
\pa{\int_{\R^m} 
\one_{S_l}
\ve{f_l \circ f_{l-1:1} - f_l \circ g_{l-1:1}}^{2}\,d\mu_0}^{\rc2}
+
\pa{\int_{\R^{m_{l-1}}} \ve{f_l  - g_l}^{2}\,dg_{l-1:1*}(
\one_{S_l}
\mu_0)}^{\rc2}\\
&\le 
\Lip(f_l)\pa{\int_{\R^m} 
\one_{S_l}
 \ve{(f_{l-1:1} - g_{l-1:1})}^{2}\,d\mu_0}^{\rc 2}
+
\ep\\
&\le 
\Lip(f_l)\pa{\int_{\R^m} 
\one_{S_{l-1}}
 \ve{(f_{l-1:1} - g_{l-1:1})}^{2}\,d\mu_0}^{\rc 2}
+
\ep\\
& \le 1\cdot (l-1)\ep + \ep = l\ep
\end{align*}
The last inequality holds by assumption \ref{item:lip} and the induction hypothesis.

To finish, we have to check that $\mu_0(S_l)\ge 1-\pa{\sumo i{l-1} i^2}\fc{\ep^2}{s^2}$.
As above, we have that 
$${\int 
\one_{S_{l-1}}
\ve{f_{l-1:1} - g_{l-1:1}}^2\,d\mu_0} \le(l-1)^2 \ep^2$$ by the induction hypothesis. Also, $f_{l-1:1}(x)\in K_{l-1}$ for all $x\in \Supp(\mu_0)$ by assumption \ref{item:chain}. Thus by Markov's inequality and the induction hypothesis on $S_{l-1}$,
\begin{align*}
&\quad \mu_0(S_{l-1}\cap\set{x}{g_{l-1:1}(x)\nin K_{l-1} +s B_{m_{l-1}}}) \nonumber\\
&\le 
\mu_0(S_{l-1}\cap \set{x}{\ve{f_{l-1:1}(x) - g_{l-1:1}(x)} \ge s})
\le \fc{(l-1)^2\ep^2}{s^2}
\end{align*}
Therefore $\mu_0(S_l) \ge \mu_0(S_{l-1}) - \fc{(l-1)^2\ep^2}{s^2} \ge 1-\pa{\sumo i{l-1}i^2} \fc{\ep^2}{s^2}$.
\end{proof}

It is inelegant to have to exclude the sets $S_l$. The main theorem is a statement that doesn't involve the sets $S_l$. We achieve this by using the trivial bound on $S_l^c$.

\begin{proof}[Proof of Theorem \ref{cor:multi}]
The functions $g_1,\ldots, g_l$ in Theorem \ref{thm:multi-barron} satisfy $
\int_{S_l} \ve{f_{l:1}-g_{l:1}}^2\,d\mu_0\le l^2\ep^2
$. 
The range of $g_l=((g_l)_1,\ldots, (g_l)_{m_l})$ is contained in a set of diameter $2C_l\sqrt{m_l}$ because the function $\si$ has range contained in $[0,1]$ and Barron's Theorem gives functions $(g_l)_j$, $1\le j\le m_l$, with 
$\sumo kr |c_{ljk}|\le 2C_l$. 

Choose a constant vector $k$ to minimize $\int_{S_l}
\ve{f_{l:1}(x)-g_{l:1}(x)-k}^2\,d\mu_0
$
and replace $g_l$ with $g_l+k$.
Note that now, the range of $g_l$ and $f_l$  necessarily overlap; otherwise a further translation will decrease this error. We still have $\int_{S_l}\ve{f_{l:1}-g_{l:1}}^2\,d\mu_0\le l^2\ep^2$.
Moreover, $\ve{g_l(x)-f_l(x)}\le 2C_l\sqrt{m_l} + D$ for any $x\in K_0$.

Now we have (using $\mu_0(S_l^c) \le\pa{ \sumo i{l-1}i^2} \fc{\ep^2}{s^2}\le \fc{l^3\ep^2}{3s^2}$)
\begin{align}
{\int_{K_0}
\ve{f_{l:1}-g_{l:1}}^2\,d\mu_0}
&\le 
\int_{S_l}
\ve{f_{l:1}-g_{l:1}}^2\,d\mu_0
+
{\int_{S_l^c}
\ve{f_{l:1}-g_{l:1}}^2\,d\mu_0}\\
&\le 
l^2\ep^2+ 
(2C_l \sqrt{m_l}+ D)^2
 \fc{l^3\ep^2}{3s^2}.
\end{align}
Taking square roots gives the theorem.
\end{proof}

\section{Separation between Barron functions and composition of Barron functions}
\label{sec:separation}

In this section we produce an explicit function $f\colon\R^n\to \R$ that is a composition of two $\poly(n)$-Barron functions, but is not $O(c^n)$-Barron for some $c>1$. 

\begin{thm}
\label{thm:separation}
For any $n\equiv 3\pmod 4$ and $c>1$, there exists a function $f$ and $C_2>0$ 
such that 
\begin{enumerate}
\item ($f$ is not Barron)
$C_{f, C_2n B_n} \ge c^n$.
\item ($f$ is the composition of 2 Barron functions)
$f=j\circ k$ where for all $r,s>0$, $k:\R^n\to \R$ is $O(nr^3)$-Barron on $rB_n$, and $j:\R\to \R$ is $O(sn^2)$-Barron on $sB_1$.
\end{enumerate}
\end{thm}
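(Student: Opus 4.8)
The plan is to take $k$ to be the squared-norm map and to let $j$ be a reparametrization of the one-dimensional radial profile of an Eldan--Shamir type hard radial function, so that the composition $f=j\circ k$ \emph{is} that radial function; its exponential Barron constant is then certified by the Barron-constant lower bound technique (Theorem~\ref{thm:barron-lb}), while the two factors are visibly Barron. \emph{First (the factors).} Set $k(x)=\ve{x}_2^2$. Since $k$ is a quadratic, on $rB_n$ it has the controlled extension $\wt k(x)=\ve{x}^2\psi(x/r)$, where $\psi$ is a fixed smooth radial bump with $\psi\equiv1$ on $B_n$ and $\Supp\psi\subseteq 2B_n$. Writing $\wt k(x)=r^2\Psi(x/r)$ with $\Psi(u)=\ve{u}^2\psi(u)=\sum_{i=1}^n u_i^2\psi(u)$ a fixed Schwartz function, a change of variables gives $\ve{\wt k}^*_{rB_n}=r^2\ve{\Psi}^*_{B_n}$, and $\ve{\Psi}^*_{B_n}=O(n)$ since $\Psi$ is a sum of $n$ fixed Schwartz functions, so $C_{k,rB_n}=O(nr^2)\le O(nr^3)$. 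For $j$, put $j(s):=h(\sqrt s)$ with $h$ the radial profile chosen below; since $h$ will be smooth, even, and band-limited to radial frequencies $\le\poly(n)$ with bounded Fourier $L^1$-mass, $s\mapsto h(\sqrt s)$ is smooth on $[0,\infty)$ and, after truncating outside $sB_1$ by a bump, has $\ve{\cdot}^*_{sB_1}=O(sn^2)$ by the same one-dimensional scaling (the $n^2$ is the radial-frequency budget of $h$); the $\sqrt{\cdot}$ reparametrization is handled by the chain rule away from $0$ and by evenness of $h$ near $0$.

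\emph{Second (the hard profile).} Let $h\colon[0,\infty)\to\R$ be the hard radial profile of Eldan--Shamir \cite{eldan2015power}, in the sharpened Bessel form (cf.\ \cite{daniely2017depth}); the congruence $n\equiv 3\pmod 4$ is precisely the condition making the half-integer-order Bessel function $J_{n/2-1}$ that governs the radial Fourier transform on $\R^n$ have the sign behaviour needed for the estimates. Its key property: there is a probability measure $\mu^*$ on $\R^n$, supported in $C_2 nB_n$, and a universal $\ep_0>0$, such that $f:=h(\ve{\cdot})$ has $\ve{f-g}_{L^2(\mu^*)}\ge\ep_0$ for every $g$ computable by a $2$-layer network of width at most $2^{cn}$.

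\emph{Third (the composition is not $O(c^n)$-Barron).} Observe $f=j\circ k=h\bigl(\sqrt{\ve{\cdot}^2}\bigr)=h(\ve{\cdot})$, the radial function of the previous step. If $C_{f,C_2 nB_n}\le C$, then since $\Supp\mu^*\subseteq C_2 nB_n$ Barron's Theorem~\ref{thm:barron} yields a $2$-layer net $g$ of width $O(C^2/\ep_0^2)$ with $\ve{f-g}_{L^2(\mu^*)}\le\ep_0/2$, forcing $O(C^2/\ep_0^2)\ge 2^{cn}$ and hence $C\ge2^{\Omega(n)}$; after adjusting $C_2$ (which only enlarges the ball, increasing the Barron constant) this is $\ge c^n$ for the prescribed $c>1$. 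Alternatively one runs the bound purely in Fourier space via Theorem~\ref{thm:barron-lb}: for any signed measure $\nu$ on $B=C_2 nB_n$ with $\bigl|\int_B(e^{i\an{\om,x}}-1)\,d\nu(x)\bigr|\le\ve{\om}_B$ for all $\om$, every extension $g$ of $f$ satisfies $\int_B(f-f(0))\,d\nu=\int\wh g(\om)\bigl(\int_B(e^{i\an{\om,x}}-1)\,d\nu\bigr)d\om$, so $C_{f,B}\ge\bigl|\int_B(f-f(0))\,d\nu\bigr|$; taking $\nu$ radial and matched to $\mu^*$ and using the decay of the spherical average $u\mapsto\Gamma(n/2)(u/2)^{-(n/2-1)}J_{n/2-1}(u)$ (again via $n\equiv3\pmod4$) makes the right-hand factor exponentially small while keeping the $f$-integral $\Omega(1)$.

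\emph{Main obstacle.} The heart of the proof is the compatibility of the second step with the $j$-estimate in the first: one needs a hard radial function whose exponential inapproximability stems from the high-dimensional sphere geometry rather than from very high radial frequency, so that its profile $h$ --- and therefore $j=h(\sqrt\cdot)$ --- remains only polynomially Barron. Making "spread around the sphere enough that $2^{cn}$ ridge functions cannot capture it" coexist quantitatively with "radial frequency content at most $\poly(n)$", carrying out the half-integer Bessel estimates for $n\equiv 3\pmod4$, and checking $\Supp\mu^*\subseteq C_2 nB_n$ are the delicate points; the polynomial bound on $C_k$ and the passage from inapproximability to a Barron-constant lower bound (Theorem~\ref{thm:barron-lb}) are comparatively routine.
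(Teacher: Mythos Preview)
Your high-level decomposition $k(x)=\ve{x}^2$, $j(y)=h(\sqrt y)$ matches the paper's, but both the choice of $h$ and the lower-bound mechanism are different from the paper, and your version has a real gap.

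\textbf{The gap.} You take $h$ to be the Eldan--Shamir hard radial profile and then assert it is ``smooth, even, and band-limited to radial frequencies $\le\poly(n)$ with bounded Fourier $L^1$-mass.'' The Eldan--Shamir profile is none of those things: it is a signed sum of indicator functions of annuli, i.e.\ a step function in $\ve{x}$. Consequently $j(y)=h(\sqrt y)$ is discontinuous and cannot have finite Barron constant, let alone $O(sn^2)$. You correctly flag as the ``main obstacle'' that inapproximability must come from sphere geometry rather than high radial frequency, but you do not produce such a function; the Eldan--Shamir construction as stated does not have this property, and smoothing it while preserving both the $2$-layer lower bound and an $O(sn^2)$ Barron bound on $j$ is exactly the work that remains. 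Your ``alternative'' route also misstates Theorem~\ref{thm:barron-lb}: that theorem is not a duality bound over signed measures $\nu$ but rather the convolution inequality
\[
C_{f,rB_n}\ \ge\ r\,\frac{\int_{\R^n}\ve{\wh{(\nabla f)g}(\om)}\,d\om}{\int_{\R^n}|\wh g(\om)|\,d\om}
\]
for any test function $g$ supported in $rB_n$, coming from Young's inequality applied to $\wh{(\nabla f)g}=\wh{\nabla f}*\wh g$.

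\textbf{What the paper does instead.} The paper does not import Eldan--Shamir as a black box. It constructs its own $f_1$: a smooth bump supported on $[K_1,K_1+\ep]$ with $K_1\sim C_1\sqrt n$ and $\ep\sim 1/(C_3\sqrt n)$, the interval chosen so that $J_{n/2-1}(K_3 r)$ is uniformly large and positive there for some $K_3\sim C_3\sqrt n$. The lower bound on $C_{f,2K_2 B_n}$ is then obtained directly from Theorem~\ref{thm:barron-lb} by taking $g$ to be a radial bump of radius $K_2=C_2 n$: one computes $\int|\wh g|=O((5eC_2)^{n/2})$ via Lemma~\ref{lem:l1-f} and Lemma~\ref{lem:ide}, and $\int\ve{\wh{\nabla f}}=\Om(C_1^{n/2-3}C_3^{n/2}n^{-1/2}e^{n/2})$ via the Bessel asymptotics of Lemma~\ref{lem:besapprox}; choosing $C_3$ large makes the ratio exceed $c^n$. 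Because $f_1$ was built smooth with $|f_1^{(i)}|=O(\ep^{-(i+1)})=O(n^{(i+1)/2})$, the one-dimensional bound $C_{f_1(\sqrt y),[-s,s]}=O(sn^2)$ follows from a direct derivative computation and Lemma~\ref{lem:1d}. So the paper's argument is self-contained Fourier analysis, never invoking a $2$-layer inapproximability theorem; your contrapositive-of-Barron route is a genuinely different idea that could be made to work, but not with the literal Eldan--Shamir profile.
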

The condition $n\equiv 3\pmod 4$ is not necessary; we include it only to avoid case analysis.

Note that this theorem gives a separation between Barron functions and compositions of Barron functions, and does not give a separation between distributions expressible by Barron functions and compositions of Barron functions. The analogous question for distributions is an open problem.

We will choose $f$ to be a certain radial function $f=f_1(\ve{x})$ defined in Section~\ref{sec:f}.\footnote{For any radial function $a:\R^n\to \R$, we write $a_1:\R\to \R$ for the function such that $a(x)=a_1(\ve{x})$.} In order for $f$ to have large Barron constant, it is necessary for $\int_{\R^n} \ve{\om}_2|\wh f(\om)|\,d\om$ to be large, i.e. for $\wh f$ to have significant mass  far away from the origin. 
We ensure this holds by choosing $f$ to change sharply in the radial direction. This means $\wh f$ has mass far away from the origin. Moreover, $\wh f$ is radial because $f$ is radial, so $\wh f$ has significant mass in a large shell. 

However, lower-bounding $\int_{\R^n} \ve{\om}_2|\wh f(\om)|\,d\om$  is not sufficient because the definition of the Barron constant requires us to bound this quantity over all extensions of $f$.

To solve this problem, we give a technique to lower bound the Barron constant in Section~\ref{sec:barron-lb} (Theorem~\ref{thm:barron-lb}). Although we cannot certify $f$ is Barron by  showing $\int_{\R^n} \ve{\wh{\nb f}(\om)}\,d\om = \int_{\R^n} \ve{\om}_2|\wh f(\om)|\,d\om$ is large, it suffices to show $\int_{\R^n} \ve{\wh{(\nb f) g}(\om)}\,d\om$ is large for a judiciously chosen $g$. We use this to show that $f$ is not Barron in Section~\ref{sec:f-lb} (Theorem \ref{thm:f-not-barron}). 


We will see in Section \ref{sec:comp} (Theorem \ref{thm:comp}) that $f$ is a composition of two Barron functions $x\mapsto \ve{x}^2$ and $y\mapsto f_1(\sqrt y)$. The function $x\mapsto \ve{x}^2$ is Barron because it 
is a polynomial. The function $y\mapsto f_1(\sqrt y)$ is a function in 1 variable, and it is much easier for a 1-dimensional function $h$ to be Barron as bounds on $h$, $h'$, and $h''$ suffice (Lemma~\ref{lem:1d}).

Our result is similar to the construction in \cite{eldan2015power} of an explicit function that can be approximated by a 3-layer neural net but cannot be approximated (to better than constant error) by any 2-layer neural net with subexponential number of units.
\cite{eldan2015power} use a different Fourier criterion in order to prove a certain function is not computable by a two-layer neural network. 

Roughly speaking, Eldan and Shamir implicitly show that for a specific probability measure that they chose ($\ph^2$, where $\wh \ph = \one_{R_nB_n}$, where $R_n$ is chosen so that $\Vol(R_nB_n)=1$), a necessary criterion for $f$ to be approximated by a 2-layer neural network with $k$ nodes is that most of its mass is concentrated in $k$ ``tubes'' $\bigcup_{i=1}^k(\spn\{v_i\} + R_nB_n)$. (See \cite[Proposition 13, Claim 15, Lemma 16]{eldan2015power}.) The idea can be adapted to other measures. The main difference from Barron's Theorem is that their criterion is a necessary condition for approximability (so useful to show lower bounds), is measure-specific (rather than agnostic to the measure), and is more similar to a ``sparsity'' condition than a ``$L^1$ measure'' as in Barron's Theorem.

\subsection{Definition of $f$}
\label{sec:f}

Let $f_1: \R\to \R$ be a function such that $f_1$ is nonnegative, $\Supp(f_1) \subeq [K_1,K_1+\ep]$, 
$\iiy f_1(x)\,dx=1$, and $|f_1^{(i)}| = O\prc{\ep^{i+1}}$ for all $i=0,1,2$. This function exists by  Lemma~\ref{lem:test}(1).
We will choose $K_1,\ep$ depending on $n$.

By Theorem~\ref{thm:ft-radial}, 
\begin{align}
\wh{f}(\om) &=\rc{2\pi} \prc{2\pi\ve{\om}}^{\fc{n}2-1} \iiy r^{\fc n2-1} f_1(r) J_{\fc n2-1}(\ve{\om}r)\,dr.\label{eq:hatf}
\end{align}
We will choose $[K_1,K_1+\ep]$ to be an interval on which $J_{\fc n2}(\ve{\om} r)$ is large and positive for some large $\ve{\om}$.

We use the notation of Lemma~\ref{lem:besapprox}.
For  $x\ge n$, 
$$
(f_{n,x}x)' = \fc{x}{\sqrt{x^2-\pf{n^2-1}4}} - \fc{\sqrt{n^2-1}}2 \cdot \rc{\sqrt{1-\fc{n^2-1}{4x^2}}}\cdot \fc{-\sqrt{n^2-1}}{2x^2} = 
\sqrt{1-\fc{n^2-1}{4x^2}}\in \ba{\sfc 34,1}.
$$
Let $K_3=C_3\sqrt n$ for some $C_3$ to be chosen. 
In every interval of length $\ge \fc{4\pi}{K_3\sqrt{3/4}}$ there is an interval of length $\ge \fc{\pi}{K_3}$ on which 
\begin{align}\label{eq:cos}
\cos\pa{-\fc{(n+1)\pi}4 + f_{d,K_3r}K_3r}\ge \rc{\sqrt 2}.
\end{align}
Let $[K_1,K_1+\ep]$ be the first such interval with $K_1\ge C_1\sqrt n$, where $C_1$ is a constant to be chosen. Note we have $K_1 \sim C_1\sqrt n$ and $\ep=\Te\prc{K_3}$.

\subsection{A technique to lower bound the Barron constant}
\label{sec:barron-lb}

The main difficulty in showing a function is not Barron is to lower bound the integral
$$\int_{\R^n}\ve{\om}|\wh F(\om)|\,d\om = \int_{\R^n} \ve{\wh{\nb F}(\om)}\,d\om$$ 
over \emph{all} extensions $F$ of $f$. In general, it is not known how to calculate the infimum over all extensions. 

Theorem \ref{thm:barron-lb} gives us a way to lower-bound the Barron constant for $f$ over a ball $rB_n$.
The idea is the following.
Instead of bounding $\int_{\R^n}\ve{ \wh{\nb F}(\om)}\,d\om$ for every extension $F$, we choose $g$ with support in $B$ and compute $\int_{\R^n}\ve{\wh{(\nb F)g}(\om)}\,d\om$. This does not depend on the extension $F$ because $(\nb F)g=(\nb f)g$. It turns out that we can bound $\int_{\R^n} \ve{\wh{\nb F}(\om)}\,d\om$ in terms of  $\int_{\R^n} \ve{\wh{(\nb F)g}(\om)}\,d\om$.

\begin{thm}\label{thm:barron-lb}
If 
 $f$ is differentiable, then for any  $g$ such that $\Supp(g)\subeq rB_n$ and $g,\wh g\in L^1(\R^n)$, 
$$
C_{f,rB_n} \ge r \frac{\int_{\R^n} |\widehat{(\nabla f)g}(\om)|\,d\om}{\int_{\R^n} |\widehat g(\om)|\,d\om}
$$
\end{thm}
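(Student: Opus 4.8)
The plan is to unwind the definition of $C_{f,rB_n}$ and reduce the claim to Young's convolution inequality. Since $B=rB_n$ gives $\ve{\om}_B=\sup_{x\in rB_n}|\an{\om,x}|=r\ve{\om}_2$, the definition of the Barron constant reads
$$C_{f,rB_n}=\inf_{F}\;r\int_{\R^n}\ve{\om}_2\,|\wh F(\om)|\,d\om,$$
the infimum running over all $F\in\cal F_{rB_n}$ with $F|_{rB_n}=f$ (I write $F$ for the extension, to avoid clashing with the test function $g$). So it suffices to fix one such $F$, assume $\int_{\R^n}\ve{\om}_2|\wh F(\om)|\,d\om<\iy$ (otherwise there is nothing to prove), establish
$$\int_{\R^n}\ve{\om}_2\,|\wh F(\om)|\,d\om\;\ge\;\frac{\int_{\R^n}\ve{\wh{(\nb f)g}(\om)}_2\,d\om}{\int_{\R^n}|\wh g(\om)|\,d\om},$$
and then take the infimum over $F$.

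The heart of the argument is the fact that multiplication by $g$ in physical space is convolution by $\wh g$ in Fourier space: writing $\wh{\nb F}(\eta):=i\eta\,\wh F(\eta)$, which lies in $L^1(\R^n)$ by the finiteness assumption, I would prove
$$\wh{(\nb f)g}(\om)\;=\;\int_{\R^n}\wh{\nb F}(\eta)\,\wh g(\om-\eta)\,d\eta\qquad\text{for all }\om.$$
Because $F\in\cal F_{rB_n}$ and $\int\ve{\om}_2|\wh F|<\iy$, one may differentiate the inversion formula under the integral sign to get $\nb F(x)=\int i\eta\,\wh F(\eta)e^{i\an{\eta,x}}\,d\eta$ for $x\in rB_n$; since $\Supp(g)\subseteq rB_n$ one substitutes this expression for $\nb f$ inside the Fourier integral defining $\wh{(\nb f)g}$, extends the $x$-integral to all of $\R^n$ (as $g$ vanishes off $rB_n$), and interchanges the order of integration, which is justified by $g\in L^1$ together with $\int\ve{\eta}_2|\wh F(\eta)|\,d\eta<\iy$. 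This is the only step requiring genuine care, and it is exactly where the hypotheses $\Supp(g)\subseteq rB_n$ and $g,\wh g\in L^1$ are consumed (the first makes $(\nb F)g=(\nb f)g$, the second makes the convolution and the Fubini step licit).

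Granting the convolution identity, the inequality is immediate from the triangle inequality for integrals (Minkowski) followed by Fubini --- i.e.\ Young's inequality $\ve{u*v}_{L^1}\le\ve{u}_{L^1}\ve{v}_{L^1}$ with $u=\wh{\nb F}$ vector-valued and $v=\wh g$ scalar:
$$\int_{\R^n}\ve{\wh{(\nb f)g}(\om)}_2\,d\om\;\le\;\int_{\R^n}\!\!\int_{\R^n}\ve{\wh{\nb F}(\eta)}_2\,|\wh g(\om-\eta)|\,d\eta\,d\om\;=\;\pa{\int_{\R^n}\ve{\wh{\nb F}(\eta)}_2\,d\eta}\pa{\int_{\R^n}|\wh g(\zeta)|\,d\zeta}.$$
Since $\ve{\wh{\nb F}(\eta)}_2=\ve{\eta}_2|\wh F(\eta)|$, dividing by $\int|\wh g|$ gives the displayed inequality for $F$, and taking the infimum over extensions $F$ yields $C_{f,rB_n}\ge r\int\ve{\wh{(\nb f)g}}_2/\int|\wh g|$. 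The main obstacle is thus not the inequality itself --- that is a one-line consequence of Young's inequality --- but the Fourier-analytic bookkeeping of the middle step: checking that the differentiation under the integral and the interchange of integrals are valid under the stated integrability hypotheses.
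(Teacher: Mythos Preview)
Your proposal is correct and follows essentially the same approach as the paper's proof: rewrite $C_{f,rB_n}$ as $r\inf_F\int\ve{\wh{\nb F}}_2$, apply Young's inequality to $\wh{\nb F}*\wh g=\wh{(\nb F)g}$, and use $\Supp(g)\subseteq rB_n$ to replace $(\nb F)g$ by $(\nb f)g$. The paper's version is terser and simply cites Young's inequality and the convolution theorem without the regularity bookkeeping you flag, but the skeleton is identical.
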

Note that $g$ is a function that we are free to choose. 
To use the theorem we will choose $g$ with $\Supp(g)\subeq C_2n B_n$ and $\int_{\R^n} |\wh g(\om)|\,d\om$ small. This theorem is similar to \cite[\S IX.11]{Barron1993}, which bounds the Barron constant of a product of two functions. We defer the proof to Appendix~\ref{sec:separation-proof}.

To use this bound for a function $f$, we need to judiciously choose the function $g$. 
Let $b$ be the ``bump'' function given by Lemma \ref{lem:test}(3) for $m=\fc{n+1}2$. This function has the properties that $b(x) = 1$ for $x\in [-1,1]$, $b(x) = 0$ for $|x|\ge 2$, and for $k\le m$, $b^{(k)}(x) \le (n+1)^k$.
Let $g_1(x) = b_{(K_2)}(x) = b\pf{x}{K_2}$ and $g(x) = g_1(\ve{x})$ for $K_2=C_2n$, where $C_2$ is a constant to be chosen. 

In Appendix~\ref{sec:separation-proof}, we show the following lemma that bounds the Barron constant for $f$.

\begin{lem} \label{lem:notbarron}
For $n\equiv 3\pmod 4$ and constants $C_1,C_2,C_3$ such that $C_1C_3\ge \fc 32$, $C_2>C_1\ge 1$,   $C_3\ge 1$,
the functions $f,g$ we choose satisfy 
\begin{align}
\int_{\R^n} \ab{\wh g(\om)}\,d\om &= O((5eC_2
)^{\fc n2}),\\
\int_{\R^n} \ve{\wh{(\nb f)g}(\om)}\,d\om
& = 
\Om(C_1^{\fc n2-3} C_3^{\fc n2}n^{-\rc 2} e^{\fc n2}).
\end{align}
As a result the Barron constant $C_{f,2K_2B_n} \ge \Om\pa{2^{-n}C_1^{\fc n2-3} C_3^{\fc{n}2} C_2^{-\pa{\fc n2-1}}
n^{\rc 2}}$.
\end{lem}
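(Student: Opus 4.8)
The plan is to prove the two displayed estimates separately and then combine them through Theorem~\ref{thm:barron-lb}, applied with $r = 2K_2 = 2C_2 n$; this is legitimate because $\Supp(g)\subseteq 2K_2 B_n$, since $b(x/K_2)$ vanishes once $|x|\ge 2K_2$.

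\textbf{Lower bound on the numerator.} This is where the geometry of the chosen $f$ is used. Since $\nabla f(x) = f_1'(\ve{x})\,x/\ve{x}$ is supported on the shell $\{K_1\le\ve{x}\le K_1+\ep\}$ and $K_1+\ep = O(C_1\sqrt n)\le C_2 n = K_2$, the bump $g$ equals $1$ on $\Supp(\nabla f)$, so $(\nabla f)g=\nabla f$ identically and $\int_{\R^n}\ve{\widehat{(\nabla f)g}(\om)}\,d\om = \int_{\R^n}\ve{\widehat{\nabla f}(\om)}\,d\om = \int_{\R^n}\ve{\om}\,|\widehat f(\om)|\,d\om$. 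As $\widehat f$ is radial this is $A_{n-1}\int_0^\infty t^n\,|\widehat f_{\mathrm{rad}}(t)|\,dt$, where $A_{n-1}=2\pi^{n/2}/\Gamma(n/2)$ is the surface area of $S^{n-1}$. I would lower bound it by keeping only $t$ in an interval $I$ of length $\Theta(1/K_1)$ around $K_3$. For such $t$ and every $r\in\Supp(f_1)\subseteq[K_1,K_1+\ep]$ we have $tr = \Theta(C_1C_3 n)\gg n/2$, and the Debye-regime phase of $J_{n/2-1}(tr)$ (governed by $f_{d,\cdot}$ from Lemma~\ref{lem:besapprox}) moves by only $O(1)$ as $t$ runs over $I$; so by the choice of $[K_1,K_1+\ep]$ making \eqref{eq:cos} hold, $J_{n/2-1}(tr)\ge c(K_1K_3)^{-1/2}>0$ throughout $\Supp(f_1)$. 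Feeding this, together with $r\ge K_1$, $f_1\ge 0$, $\int f_1=1$, into \eqref{eq:hatf} gives $\widehat f_{\mathrm{rad}}(t)\gtrsim (2\pi)^{-n/2}t^{1-n/2}K_1^{n/2-1}(K_1K_3)^{-1/2}$ on $I$; multiplying by $t^n = \Theta(K_3^n)$, by $|I|=\Theta(1/K_1)$, and by $A_{n-1}$, then substituting $K_1=\Theta(C_1\sqrt n)$, $K_3=C_3\sqrt n$ and applying Stirling to $\Gamma(n/2)$, collapses to $\Omega(C_1^{n/2-3}C_3^{n/2}n^{-1/2}e^{n/2})$ (in fact to a slightly larger quantity, which suffices).

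\textbf{Upper bound on the denominator.} Here $g$ is radial with profile $g_1(r)=b(r/K_2)\in C^m$, $m=(n+1)/2$, which (together with $m/2$) is an integer because $n\equiv 3\pmod 4$; moreover $g_1\equiv 1$ near $0$, $g_1\equiv 0$ for $r\ge 2K_2$, and $|g_1^{(k)}|\le (n+1)^k/K_2^k$ with $g_1^{(k)}$ supported on $\{K_2\le r\le 2K_2\}$ for $k\ge 1$. I would split $\int_{\R^n}|\widehat g| = \int_{\ve{\om}\le 1/K_2}+\int_{\ve{\om}>1/K_2}$. On the first piece the crude bound $|\widehat g(\om)|\le(2\pi)^{-n}\ve{g}_{L^1}\le(2\pi)^{-n}\Vol(2K_2 B_n)$, integrated over the ball of radius $1/K_2$, is negligible compared with the target after Stirling. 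On the second piece I would insert the radial (Hankel) representation of $\widehat g$ analogous to \eqref{eq:hatf} and integrate by parts $m$ times in the radial variable, using $\frac{d}{dr}[r^\nu J_\nu(tr)] = t\,r^\nu J_{\nu-1}(tr)$ and $\frac{d}{dr}[r^{-\nu}J_\nu(tr)] = -t\,r^{-\nu}J_{\nu+1}(tr)$ to extract a factor $\ve{\om}^{-m}$ while shifting the Bessel order and replacing $g_1$ by its derivatives---all of which vanish at both ends of the shell, so the boundary terms disappear; then $|J_\nu|\le 1$ and $|b^{(k)}|\le(n+1)^k$ yield a decay estimate that is integrable over $\ve{\om}>1/K_2$ and, after Stirling on $\Vol(2K_2 B_n)$, produces $O((5eC_2)^{n/2})$.

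\textbf{Combining, and the hard part.} Theorem~\ref{thm:barron-lb} with $r=2K_2$ gives $C_{f,2K_2 B_n}\ge 2K_2\cdot\bigl(\int_{\R^n}\ve{\widehat{(\nabla f)g}(\om)}\,d\om\bigr)\big/\bigl(\int_{\R^n}|\widehat g(\om)|\,d\om\bigr)$; substituting the two bounds, $2K_2=2C_2 n$, and cancelling the common $e^{n/2}$ gives the stated bound on $C_{f,2K_2 B_n}$. The main obstacle will be the denominator estimate: one has to carry a chain of dimension-dependent constants---the Stirling term $\Gamma(n/2)$, the Fa\`a di Bruno combinatorial factors that appear when $b(\ve{x}/K_2)$ is differentiated $m$ times, and the Bessel-order shifts---through the repeated integration by parts while keeping the base of the resulting $(\,\cdot\,)^{n/2}$ below the allowed value. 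A secondary point is verifying that the interval $I$ in the numerator bound really can be taken of length $\Theta(1/K_1)$ with the positivity $J_{n/2-1}(tr)\gtrsim(K_1K_3)^{-1/2}$ holding \emph{uniformly} over all $r\in\Supp(f_1)$, not just at $r=K_1$.
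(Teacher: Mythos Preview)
Your numerator argument matches the paper's in spirit: both lower-bound $\int\ve{\om}|\wh f(\om)|\,d\om$ by showing $\wh f$ stays large on a thin shell around $\ve{\om}=K_3$. You control the shell width by tracking the Bessel phase directly; the paper instead bounds $|\wh f(\om)-\wh f(\om_0)|$ via the $1$-Lipschitz property of $J_{n/2-1}$ (Lemma~\ref{lem:lip-bessel}), obtaining a somewhat thinner shell of width $\Te(K_1^{-3/2}K_3^{-1/2})$. Either route works, and the combining step through Theorem~\ref{thm:barron-lb} is the same.

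The denominator is where you genuinely diverge, and your sketch is not quite right. The paper avoids Hankel integration by parts altogether. It applies the Sobolev-type estimate of Lemma~\ref{lem:l1-f}(1),
\[
\int_{\R^n}|\wh g(\om)|\,d\om\;\le\;\pf{\Ga(1/2)}{2^n\pi^{n/2}\Ga((n+1)/2)}^{1/2}\pa{\int_{\R^n}[(I-\De)^{(n+1)/4}g(x)]^2\,dx}^{1/2},
\]
and then bounds $(I-\De)^{(n+1)/4}g$ \emph{pointwise in $x$-space} via a short inductive lemma (Lemma~\ref{lem:ide}): for radial $g$ and $k\le n/4+1$, $(I-\De)^k g$ equals $\sum c_{i,j}\, n^j g_1^{(i)}(r)/r^j$ with $\sum|c_{i,j}|\le 5^k$, using only the radial Laplacian formula $\De g=\fc{n-1}{r}g_1'+g_1''$. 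Feeding in $|g_1^{(i)}|\le((n+1)/K_2)^i$ on $[K_2,2K_2]$ gives $|(I-\De)^{(n+1)/4}g|=O(5^{(n+1)/4})$ uniformly on $2K_2B_n$, and integrating plus Stirling yields the $O((5eC_2)^{n/2})$. The base $5$ you were worried about drops out of that induction with no Bessel bookkeeping at all. Your IBP plan, by contrast, does not simply ``replace $g_1$ by its derivatives'': differentiating the non-Bessel factor $r^{\al}g_1(r)$ in each step produces a $g_1'$ term \emph{and} an $r^{\al-1}g_1$ term, and the latter is supported all the way down to $r=0$, not just on the shell $[K_2,2K_2]$. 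Iterating $m=(n+1)/2$ times therefore spawns a tree of mixed terms whose total combinatorial weight you would still have to control---which is essentially re-deriving Lemma~\ref{lem:ide} on the Fourier side. So the two approaches are dual, but the paper's $x$-space version sidesteps exactly the obstacle you flagged as the ``main obstacle.''
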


Therefore, as long as we choose $C_3$ to be large enough this constant is exponentially large. The constraint that  $n\equiv 3\pmod 4$ is only there to avoid case analysis. We give the proof in Section~\ref{sec:separation-proof}.

\subsection{$h$ is a composition of Barron functions}
\label{sec:comp}

We can write $f$ as the composition of a function that computes the square norm, and a one dimensional function. The Barron constant for both functions can be bounded by polynomials.

\begin{lem}\label{thm:comp}
Suppose that $C_1<C_3$. 
$f$ is the composition of the two functions
\begin{align}
x&\mapsto \ve{x}^2 & \R^n&\to \R\\
y&\mapsto f_1(\sqrt{y})& \R &\to \R.
\end{align}
The function $x\mapsto \ve{x}^2$ satisfies $
C_{\ve{x}^2, rB_n}\le O(nr^3)
$
and the function $y\mapsto f_1(\sqrt y)$ satisfies
$
C_{f_1(\sqrt y), [-s, s]} =O(sC_1^{\fc 12}C_3^{\fc 32}n^2)
$
for any $s$.
\end{lem}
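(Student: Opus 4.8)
The proof splits into two independent claims, each reduced to a one-dimensional Barron estimate.

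\textbf{Part 1: the quadratic $x\mapsto\ve{x}^2$.} The plan is to write $\ve{x}^2=\sum_{i=1}^n x_i^2$ and use that the Barron constant is subadditive under linear combinations (the map $\ved_{rB_n}^*$ is a genuine norm, and restriction/extension commute with addition), so $C_{\ve{x}^2,rB_n}\le\sum_{i=1}^n C_{x_i^2,rB_n}$. For a single coordinate, $x_i^2=q(\an{e_i,x})$ is a ridge function along a unit vector with profile $q(t)=t^2$; by the bounds for ridge functions in \cite[\S IX]{Barron1993}, its Barron constant on $rB_n$ is controlled by the one-dimensional Barron constant of $t\mapsto t^2$ on $[-r,r]$ (the transverse directions cost nothing, since the Fourier transform of a ridge function is a measure supported on a single line). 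That one-dimensional constant I would bound by choosing the extension $g(t)=t^2\,\phi(t/r)$ for a fixed smooth bump $\phi$ with $\phi\equiv1$ on $[-1,1]$ and $\Supp(\phi)\subseteq[-2,2]$, which has $\ve{g}_\infty=O(r^2)$, $\ve{g'}_\infty=O(r)$, $\ve{g''}_\infty=O(1)$ on an interval of length $O(r)$; feeding these into Lemma~\ref{lem:1d} (equivalently, bounding $\int_\R|t|\,|\wh g(t)|\,dt=\int_\R|\wh{g'}(t)|\,dt$ by splitting at a threshold $T$, using $\ve{\wh{g'}}_\infty\lesssim\ve{g'}_{L^1}$ at low frequencies and Cauchy--Schwarz with Plancherel against $\ve{g''}_2$ at high frequencies, then optimizing in $T$) gives $C_{t^2,[-r,r]}=O(r^3)$. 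Summing over $i$ yields $C_{\ve{x}^2,rB_n}=O(nr^3)$.

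\textbf{Part 2: the one-dimensional function $y\mapsto f_1(\sqrt y)$.} Set $h(y)=f_1(\sqrt y)$. Since $f_1$ is supported on $[K_1,K_1+\ep]$ with $K_1\sim C_1\sqrt n>0$ and $\ep=\Theta(1/(C_3\sqrt n))$, the square-root is smooth on a neighbourhood of $\Supp(h)=[K_1^2,(K_1+\ep)^2]$, so $h$ inherits the $C^2$ bounds on $f_1$ and is supported on an interval of length $\delta=(K_1+\ep)^2-K_1^2=\Theta(K_1\ep)=\Theta(C_1/C_3)$ (this is where the hypothesis $C_1<C_3$ enters, making $\delta=O(1)$). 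I would then compute $h'=\tfrac{1}{2\sqrt y}f_1'(\sqrt y)$ and $h''=\tfrac{1}{4y}f_1''(\sqrt y)-\tfrac14 y^{-3/2}f_1'(\sqrt y)$ and substitute $|f_1^{(i)}|=O(\ep^{-(i+1)})$ together with $\sqrt y\asymp C_1\sqrt n$ on $\Supp(h)$, obtaining $\ve{h}_\infty=O(C_3\sqrt n)$, $\ve{h'}_\infty=O(C_3^2\sqrt n/C_1)$, $\ve{h''}_\infty=O(C_3^3\sqrt n/C_1^2)$. A routine application of Lemma~\ref{lem:1d} with these three bounds and the support length $\delta$ (the extra factor $s$ coming from $\ve{\om}_{[-s,s]}=s|\om|$) then produces a bound of the claimed order $O(s\,C_1^{1/2}C_3^{3/2}n^2)$, the various powers coming out of the optimal frequency threshold $T\asymp C_3/C_1$ and the estimate $\ve{h''}_2\le\sqrt\delta\,\ve{h''}_\infty$.

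\textbf{Main obstacle.} The only genuinely subtle point is the quadratic: a naive compactly supported extension of $x_i^2$ built as a tensor product of a one-dimensional bump with bumps in the remaining $n-1$ coordinates has $\int_{\R^n}|\wh F(\om)|\,d\om$ of order $c^{\,n-1}$, which would give only an exponential-in-$n$ bound. Getting the clean $O(nr^3)$ forces one to exploit that each $x_i^2$ is a ridge function whose Fourier transform concentrates on a line --- i.e., to work with the distributional form of the Barron seminorm as in \cite[\S IX]{Barron1993} --- and then to use subadditivity to recombine the $n$ monomials. Everything after that, including all of Part 2, is bookkeeping with the chain rule and the one-dimensional estimate of Lemma~\ref{lem:1d}.
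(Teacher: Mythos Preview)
Your proposal is correct and follows essentially the same route as the paper: reduce $\ve{x}^2$ via subadditivity and the ridge-function property (Proposition~\ref{pr:barron}(1),(2)) to the one-dimensional $t\mapsto t^2$, and handle $h(y)=f_1(\sqrt y)$ by the chain rule together with the one-dimensional Fourier estimate of Lemma~\ref{lem:1d}. The only cosmetic difference is that the paper bounds $C_{t^2,[-r,r]}$ by first extending the identity $t\mapsto t$ with a bump (Proposition~\ref{pr:barron}(4)) and then squaring via the power property (Proposition~\ref{pr:barron}(3)), whereas you extend $t^2$ directly; both yield the required $O(r^3)$.
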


Intuitively, the proof uses the fact that polynomials are Barron, and all ``nice'' one dimensional functions are Barron. We leave the detailed proofs in Section~\ref{sec:separation-proof}. Now it is easy to see the separation:

\begin{proof}[of Theorem~\ref{thm:separation}] 
By Lemma~\ref{lem:notbarron}, we know we can choose $C_3$ large enough so that the Barron constant for $f$ is exponential. On the other hand, by Lemma \ref{thm:comp} we know $f$ is a composition of two Barron functions.
\end{proof}

\section{Conclusion}

In this paper we show if a generative model can be expressed as the composition of $n$ Barron functions, then it can be approximated by a $n+1$-layer neural network. Along the way we proved a multi-layer version of the Barron's Theorem \cite{Barron1993}, and a key observation is to use Wasserstein distance $W^2$ as the distance measure between distributions. This partly explains the expressive power of neural networks as generative models. However, there are still many open problems: what natural transformations can be represented by a composition of Barron functions? Is there a separation between composition of $n$ Barron functions and composition of $n+1$ Barron functions? How can we learn such a representation efficiently? We hope this paper serves as a first step towards understanding the power of deep generative models.

\newpage
\printbibliography

\newpage

\appendix
\section{Background from Fourier Analysis}
\label{sec:fourier}

The Fourier transform is defined in \eqref{eq:fourier}.

\begin{thm}[Fourier inversion]
For continuous $f$ such that $f\in L^1(\R^n)$ and $\wh f\in L^1(\R^n)$,
\[
f(x) = \int \wh f(x)e^{i\an{\om, x}}\,dx.
 = (2\pi)^n\hat{\hat f}^{\vee}
\]
\end{thm}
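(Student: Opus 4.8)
The plan is to prove the pointwise inversion formula by the standard Gaussian‑regularization (heat‑kernel) argument, using only that $f\in L^1(\R^n)$, $\wh f\in L^1(\R^n)$, and that $f$ is continuous at the point in question. First I would reduce to the case $x=0$: a change of variables in \eqref{eq:fourier} gives $\wh{f(\cdot+x)}(\om)=e^{i\an{\om,x}}\wh f(\om)$, and the translate $y\mapsto f(y+x)$ is again continuous and lies in $L^1$ with Fourier transform in $L^1$; so it suffices to show $f(0)=\int_{\R^n}\wh f(\om)\,d\om$ and then apply this to the translate, which yields $f(x)=\int_{\R^n}\wh f(\om)e^{i\an{\om,x}}\,d\om$.

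Next, for $t>0$ introduce the Gaussian cutoff $g_t(\om)=e^{-t\ve{\om}^2/2}$ and set $I_t=\int_{\R^n}\wh f(\om)g_t(\om)\,d\om$. On one hand, $g_t\to 1$ pointwise as $t\to 0^+$ with $|g_t|\le 1$, so since $\wh f\in L^1$ the dominated convergence theorem gives $I_t\to\int_{\R^n}\wh f(\om)\,d\om$. On the other hand, substituting $\wh f(\om)=\rc{(2\pi)^n}\int_{\R^n}f(y)e^{-i\an{\om,y}}\,dy$ and applying Fubini's theorem — valid because $\iint_{\R^n\times\R^n}|f(y)|g_t(\om)\,dy\,d\om=\pf{2\pi}{t}^{n/2}\ve{f}_{L^1}<\iy$ — I would compute the inner Gaussian integral $\rc{(2\pi)^n}\int_{\R^n}e^{-i\an{\om,y}}g_t(\om)\,d\om=\rc{(2\pi t)^{n/2}}e^{-\ve{y}^2/(2t)}=:h_t(y)$, the normalized heat kernel (which integrates to $1$). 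Hence $I_t=\int_{\R^n}f(y)h_t(y)\,dy=(f*h_t)(0)$.

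It then remains to show $(f*h_t)(0)\to f(0)$ as $t\to 0^+$, i.e.\ that $\{h_t\}$ is an approximate identity acting at the point $0$. Using $\int_{\R^n}h_t=1$, write $I_t-f(0)=\int_{\R^n}(f(y)-f(0))h_t(y)\,dy$ and split over $\ve{y}<\de$ and $\ve{y}\ge\de$: on the first region continuity of $f$ at $0$ makes $|f(y)-f(0)|\le\ep$ while $\int_{\ve{y}<\de}h_t\le 1$; on the second region $\int_{\ve{y}\ge\de}|f(y)|h_t(y)\,dy\le \sup_{\ve{y}\ge\de}h_t(y)\cdot\ve{f}_{L^1}\to 0$ and $|f(0)|\int_{\ve{y}\ge\de}h_t(y)\,dy\to 0$ as $t\to 0^+$ (exponential decay of $h_t$ off the origin). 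Comparing the two limits of $I_t$ yields $f(0)=\int_{\R^n}\wh f(\om)\,d\om$, and hence the general formula. Finally, the equivalent form $f=(2\pi)^n\,\widehat{\widehat f}{}^{\vee}$ follows by unwinding definitions: $\widehat{\widehat f}{}^{\vee}(x)=\widehat{\widehat f}(-x)=\rc{(2\pi)^n}\int_{\R^n}\wh f(\om)e^{i\an{\om,x}}\,d\om=\rc{(2\pi)^n}f(x)$.

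The main obstacle is the (routine but slightly delicate) justification of the two limit passages — dominated convergence to get $I_t\to\int\wh f$, and the approximate‑identity estimate to get $I_t\to f(0)$ — since this is exactly where the hypotheses $\wh f\in L^1$, $f\in L^1$, and continuity of $f$ at $x$ are used; one should note in particular that only continuity at the single point $x$ is needed, not uniform continuity.
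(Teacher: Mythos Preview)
Your proof is correct and is exactly the standard Gaussian-regularization argument for pointwise Fourier inversion. Note, however, that the paper does not actually prove this theorem: it appears in Appendix~\ref{sec:fourier} as a stated background fact from Fourier analysis, alongside Plancherel's theorem and the transform-of-derivative formulas, all without proof. So there is nothing to compare against; you have supplied a complete argument where the paper simply cites the result.
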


\begin{thm}[Plancherel's Theorem]\label{thm:plancherel}
For $f,g:\R^n\to \C$ such that $f,g\in L^1(\R^n) \cap L^2(\R^n)$, 
$$
\int_{\R^n} f(x) \ol{g(x)}\,dx =\int_{\R^n} (2\pi)^n \wh f(\om) \ol{\wh g(\om)}\,d\om.
$$
\end{thm}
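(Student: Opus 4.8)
The plan is to derive the identity from the Fourier inversion formula together with Fubini's theorem, with a short approximation argument to handle the general case where $\wh f$ and $\wh g$ need not be integrable. The quick route, valid when moreover $\wh f\in L^1(\R^n)$, is this: by Fourier inversion $f(x)=\int_{\R^n}\wh f(\om)e^{i\an{\om,x}}\,d\om$, so
\[
\int_{\R^n} f(x)\,\ol{g(x)}\,dx=\int_{\R^n}\wh f(\om)\pa{\int_{\R^n}e^{i\an{\om,x}}\ol{g(x)}\,dx}\,d\om,
\]
where the interchange of integrals is justified by $\int_{\R^n}\int_{\R^n}|\wh f(\om)|\,|g(x)|\,dx\,d\om=\ve{\wh f}_{L^1}\ve{g}_{L^1}<\iy$. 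The inner integral equals $\ol{\int_{\R^n}e^{-i\an{\om,x}}g(x)\,dx}=(2\pi)^n\,\ol{\wh g(\om)}$ by the normalization in \eqref{eq:fourier}, which gives the identity. Carrying out the same computation without conjugates yields the \emph{multiplication formula} $\int_{\R^n}\wh\ph\,\psi=\int_{\R^n}\ph\,\wh\psi$ for all $\ph,\psi\in L^1(\R^n)$, which needs only Fubini.

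For a general pair $f,g\in L^1(\R^n)\cap L^2(\R^n)$, I would first reduce to the single-function statement $\ve{f}_{L^2}^2=(2\pi)^n\ve{\wh f}_{L^2}^2$ via the polarization identity $4\int_{\R^n}f\,\ol g=\ve{f+g}_{L^2}^2-\ve{f-g}_{L^2}^2+i\ve{f+ig}_{L^2}^2-i\ve{f-ig}_{L^2}^2$ (each of $f\pm g$, $f\pm ig$ is again in $L^1\cap L^2$, and the same identity holds with $f,g$ replaced by $\wh f,\wh g$ by linearity of the Fourier transform). To prove the norm identity, set $\wt f(x):=\ol{f(-x)}$ and $h:=f*\wt f$. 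Then $h\in L^1(\R^n)$ (convolution of $L^1$ functions); $h$ is bounded and continuous (convolution of $L^2$ functions, using continuity of translation in $L^2$); $h(0)=\int_{\R^n}f(y)\,\ol{f(y)}\,dy=\ve{f}_{L^2}^2$; and, using $\wh{\ph*\psi}=(2\pi)^n\wh\ph\,\wh\psi$ and $\wh{\wt f}=\ol{\wh f}$ for this normalization, $\wh h=(2\pi)^n|\wh f|^2\ge 0$.

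It remains to evaluate $h(0)$ in terms of $\wh h$. Applying the multiplication formula with $\psi(\om)=e^{-t\ve{\om}^2}$ gives $\int_{\R^n}\wh h(\om)e^{-t\ve{\om}^2}\,d\om=\int_{\R^n}h(x)\,G_t(x)\,dx$, where $G_t$ is the Fourier transform of $\om\mapsto e^{-t\ve{\om}^2}$, a positive Gaussian with $\int_{\R^n}G_t=1$ (by inversion at $0$ for this Schwartz function) that concentrates at the origin as $t\to0^+$. Since $h$ is bounded and continuous at $0$, the right-hand side tends to $h(0)$; since $\wh h\ge0$, monotone convergence shows the left-hand side tends to $\int_{\R^n}\wh h(\om)\,d\om$. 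Hence $\ve{f}_{L^2}^2=h(0)=\int_{\R^n}(2\pi)^n|\wh f(\om)|^2\,d\om=(2\pi)^n\ve{\wh f}_{L^2}^2$ (in particular $\wh f\in L^2$), and polarizing back recovers the stated bilinear identity. The only genuinely delicate point is this last step --- a Fourier inversion at a point of continuity without assuming that the transform is integrable --- and it is precisely here that nonnegativity of $\wh h$ is used; everything else is routine bookkeeping with Fubini and the $(2\pi)^n$ normalization factors.
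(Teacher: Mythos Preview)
The paper does not prove Plancherel's Theorem at all: it is listed in Appendix~\ref{sec:fourier} as a standard background fact from Fourier analysis and is simply quoted without argument. So there is no ``paper's own proof'' to compare against.

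Your proof is correct and is one of the standard routes to Plancherel. The quick argument under the extra hypothesis $\wh f\in L^1$ is clean and the Fubini justification is right. For the general case, the reduction via polarization, the construction $h=f*\wt f$, the identities $\wh h=(2\pi)^n|\wh f|^2\ge 0$ and $h(0)=\ve{f}_{L^2}^2$, and the Gaussian summability argument using the multiplication formula and monotone convergence are all sound. Your care with the $(2\pi)^n$ factors under this normalization is also correct (in particular $\wh{\ph*\psi}=(2\pi)^n\wh\ph\,\wh\psi$ here). The only thing I would add is a one-line remark that $G_t\ge 0$ so that $\{G_t\}$ is a genuine approximate identity, but this is implicit in your description of it as a ``positive Gaussian.''
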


\begin{thm}[Fourier transform of derivative]
\label{thm:fderiv}
For differentiable $f:\R^n\to \R$, $f\in L^1(\R^n)$,
$$
\wh{\nb f}(x) = ix \wh{f}(x). 
$$
For $f:\R^n\to \R$ such that $f, \ve{x}f\in L^1(\R^n)$,
$$
(xf)^{\wedge} = i\nb \wh f(x).
$$
\end{thm}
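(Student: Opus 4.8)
The plan is to reduce both identities to their one-coordinate versions and then invoke the two classical computations behind the ``differentiation $\leftrightarrow$ multiplication'' duality of the Fourier transform: integration by parts for the first identity, differentiation under the integral sign for the second. Since the Fourier transform of a vector-valued function is taken componentwise, it suffices to prove, for each $j$,
\[
\wh{\partial_j f}(\om) = i\om_j\,\wh f(\om)
\qquad\text{and}\qquad
(x_j f)^{\wedge}(\om) = i\,\partial_{\om_j}\wh f(\om),
\]
and then assemble the coordinates into the vector identities $\wh{\nb f} = i\om\,\wh f$ and $(xf)^{\wedge} = i\nb\wh f$. (I will use $\om$ for the frequency variable to avoid the clash with the space variable $x$; the statement's $x$ on the right-hand sides plays the role of $\om$. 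I will also note that for the first identity one must assume $\nb f\in L^1$, which is implicit since otherwise $\wh{\nb f}$ is not defined by the integral \eqref{eq:fourier}.)

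For the first identity, start from the definition \eqref{eq:fourier}, $\wh{\partial_j f}(\om) = \rc{(2\pi)^n}\int_{\R^n}\partial_j f(x)\,e^{-i\an{\om,x}}\,dx$, and integrate by parts in the variable $x_j$ with the remaining coordinates held fixed — legitimate by Fubini since $\partial_j f\in L^1$ — using $\partial_{x_j}e^{-i\an{\om,x}} = -i\om_j e^{-i\an{\om,x}}$. This produces $i\om_j\wh f(\om)$ plus the boundary term $\bigl[f(x)e^{-i\an{\om,x}}\bigr]_{x_j=-\infty}^{x_j=+\infty}$. To see the boundary term vanishes: along each line $t\mapsto f(x+t e_j)$ the function is absolutely continuous with integrable derivative, so the limits as $t\to\pm\infty$ exist for a.e.\ choice of the other coordinates; since $f\in L^1$, those limits must equal $0$ for a.e.\ such line, and this kills the boundary contribution after integrating over the remaining coordinates. (As a clean fallback one may instead prove the identity first for Schwartz $f$, where no boundary term appears, and pass to the general case by $L^1$-approximation.)

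For the second identity, the hypothesis $\ve{x}f\in L^1$ gives the $\om$-independent domination
\[
\bigl|\partial_{\om_j}\bigl(f(x)e^{-i\an{\om,x}}\bigr)\bigr| = |x_j f(x)| \le \ve{x}\,|f(x)| \in L^1(\R^n),
\]
so differentiation under the integral sign is justified, and
\[
\partial_{\om_j}\wh f(\om) = \rc{(2\pi)^n}\int_{\R^n} f(x)\,\partial_{\om_j}e^{-i\an{\om,x}}\,dx = \rc{(2\pi)^n}\int_{\R^n}(-i x_j)\,f(x)\,e^{-i\an{\om,x}}\,dx = -i\,(x_j f)^{\wedge}(\om),
\]
which rearranges to $(x_j f)^{\wedge} = i\,\partial_{\om_j}\wh f$; assembling coordinates gives $(xf)^{\wedge} = i\nb\wh f$.

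The computations are routine; the only genuinely delicate point is the vanishing of the boundary terms in the first identity for a function that is merely $L^1$ with $L^1$ gradient rather than Schwartz — pointwise decay does not follow from $L^1$ membership alone, which is why the absolute-continuity-on-lines observation is needed. That is the one step I would write out carefully, or else sidestep entirely via the density argument.
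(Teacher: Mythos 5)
Your proof is correct. The paper itself states Theorem \ref{thm:fderiv} as standard background from Fourier analysis and gives no proof, so there is no argument of the authors' to compare against; your integration-by-parts plus differentiation-under-the-integral argument is the classical one, and you correctly identify and handle the only delicate point (the vanishing of the boundary terms for an $L^1$ function with $L^1$ gradient, via absolute continuity on almost every line, or alternatively via density of Schwartz functions). The observation that $\nb f\in L^1(\R^n)$ must be added to the hypotheses for the first identity to make sense is also a fair and correct reading of what the paper leaves implicit.
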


\begin{thm}[Fourier transform of convolution]
\label{thm:fconv}
For $f,g\in L^1(\R^n)$ 
\begin{align}
\wh{f*g}(x) &= \wh f(\om)\wh g(\om)
\end{align}
For $f,g\in L^1(\R^n)$ with $fg, \wh f, \wh g\in L^1(\R^n)$,
\begin{align}
\wh{fg}(x) &= (\wh f*\wh g)(\om).
\end{align}
\end{thm}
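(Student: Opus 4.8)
The plan is to establish the two identities in turn: the first by a direct Fubini computation, and the second by dualizing the first through Fourier inversion.

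For $\wh{f*g}=\wh f\,\wh g$, I would start from the definition \eqref{eq:fourier},
\[
\wh{f*g}(\om)=\rc{(2\pi)^n}\int_{\R^n}\pa{\int_{\R^n} f(y)g(x-y)\,dy}e^{-i\an{\om,x}}\,dx,
\]
where $*$ is normalized compatibly with \eqref{eq:fourier}. Since $f,g\in L^1(\R^n)$, the map $(x,y)\mapsto f(y)g(x-y)e^{-i\an{\om,x}}$ lies in $L^1(\R^n\times\R^n)$: by Tonelli's theorem and translation invariance of Lebesgue measure its absolute value integrates to $\nl f\,\nl g<\iy$. Hence Fubini's theorem justifies interchanging the order of integration, and for each fixed $y$ the substitution $x=y+z$ decouples the double integral into the product $\pa{\int f(y)e^{-i\an{\om,y}}\,dy}\pa{\int g(z)e^{-i\an{\om,z}}\,dz}$, which is $\wh f(\om)\wh g(\om)$ up to the normalization constant. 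The same computation shows in passing that $f*g\in L^1(\R^n)$, so its transform is well-defined.

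For $\wh{fg}=\wh f*\wh g$, the hypothesis $fg\in L^1$ is exactly what makes the left-hand side meaningful, while $\wh f,\wh g\in L^1$ makes the convolution on the right-hand side well-defined and finite. I would run the identical Fubini-plus-change-of-variables argument directly on $\wh{fg}(\om)=\rc{(2\pi)^n}\int_{\R^n} f(x)g(x)e^{-i\an{\om,x}}\,dx$: use Fourier inversion (applicable since $\wh g\in L^1$) to write $g(x)=\int_{\R^n}\wh g(\xi)e^{i\an{\xi,x}}\,d\xi$, then interchange the $x$- and $\xi$-integrals, so that the inner $x$-integral collapses to $\wh f(\om-\xi)$, leaving $\int_{\R^n}\wh f(\om-\xi)\wh g(\xi)\,d\xi=(\wh f*\wh g)(\om)$. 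Equivalently one can deduce this from the first identity by testing both sides against a Schwartz function and applying Plancherel (Theorem~\ref{thm:plancherel}), but the direct route is cleaner.

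The only real obstacle is measure-theoretic bookkeeping rather than any genuine idea: verifying the $L^1(\R^n\times\R^n)$ integrability needed to invoke Fubini in each part, checking that the inversion formula applies under the stated $L^1$ hypotheses, and keeping the $(2\pi)^n$ factors from \eqref{eq:fourier} consistent with whatever normalization of convolution the paper adopts.
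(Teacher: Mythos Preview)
The paper states Theorem~\ref{thm:fconv} in Appendix~\ref{sec:fourier} purely as background and supplies no proof, so there is nothing to compare against. Your Fubini-plus-inversion argument is the standard one and is correct. Your caveat about the $(2\pi)^n$ factors is well-placed: with the paper's normalization~\eqref{eq:fourier} and ordinary (unweighted) convolution, the first identity actually acquires a factor of $(2\pi)^n$, while the second identity---the only one the paper actually uses, in the proof of Theorem~\ref{thm:barron-lb}---holds exactly as stated once you invoke the inversion formula for $g$ (valid since $g,\wh g\in L^1$).
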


\begin{thm}[Fourier transform of radial function]
\label{thm:ft-radial}
Suppose $f(x) = f_1(\ve{x})$ where $f\in L^1(\R^n)$, $f:\R_{\ge 0}\to \R$. Then
$$
\wh f(\om) =\rc{2\pi} \prc{2\pi\ve{\om}}^{\fc{n}2-1} \iiy r^{\fc n2-1} f_1(r) J_{\fc n2-1}(\ve{\om}r)\,dr.
$$
where $J_\al$ is the Bessel function of order $\al$.
\end{thm}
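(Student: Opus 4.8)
The plan is to exploit rotational symmetry to reduce the $n$-dimensional Fourier integral to a single radial integral, and then to recognize that radial integral via the classical identity expressing the average of a plane wave over a sphere as a Bessel function. First I would note that $\wh f$ is itself radial: for any $R\in O(n)$ the substitution $x\mapsto Rx$ has Jacobian $1$, and $\an{\om,Rx}=\an{R^{\top}\om,x}$ while $f(Rx)=f_1(\ve{Rx})=f_1(\ve{x})=f(x)$, so $\wh f(\om)=\wh f(R^{\top}\om)$ depends only on $\ve{\om}$. Hence it suffices to evaluate $\wh f$ at $\om=\rho e_1$ with $\rho=\ve{\om}$ and $e_1$ a fixed unit vector.

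Next I would pass to polar coordinates. Writing $x=r\theta$ with $r\ge 0$ and $\theta\in S^{n-1}$, the coarea formula together with Fubini (legitimate since $f\in L^1(\R^n)$ forces $\iiy |f_1(r)|\,r^{n-1}\,dr<\infty$) yields
\begin{align*}
\wh f(\rho e_1)=\rc{(2\pi)^n}\iiy r^{n-1}f_1(r)\pa{\int_{S^{n-1}}e^{-i\rho r\an{e_1,\theta}}\,d\sigma(\theta)}\,dr,
\end{align*}
where $d\sigma$ is the surface measure on the unit sphere; note the inner integral depends only on $\rho r$.

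The crux is the classical spherical-average identity
\begin{align*}
\int_{S^{n-1}}e^{i\an{y,\theta}}\,d\sigma(\theta)=(2\pi)^{n/2}\,\ve{y}^{1-n/2}\,J_{n/2-1}(\ve{y}),\qquad y\in\R^n,
\end{align*}
and this is the step I expect to be the main obstacle in a fully self-contained proof. One clean route is to expand $e^{i\an{y,\theta}}$ in its power series, integrate each monomial $\an{y,\theta}^{2k}$ over $S^{n-1}$ using the standard Beta-function evaluation of $\int_{S^{n-1}}\theta_1^{2k}\,d\sigma(\theta)$, and match the resulting power series in $\ve{y}$ term by term with the defining series of $J_{n/2-1}$; alternatively one strips off one coordinate of the sphere at a time and uses an integral representation of $J_\al$, or simply cites this as a classical fact. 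Since the integrand is even in $\theta$, applying the identity with $y=-\rho r\,e_1$ (so $\ve{y}=\rho r$) turns the inner integral of the previous step into $(2\pi)^{n/2}(\rho r)^{1-n/2}J_{n/2-1}(\rho r)$.

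Finally I would substitute this back and collect constants: using $\rc{(2\pi)^n}\cdot(2\pi)^{n/2}=\rc{(2\pi)^{n/2}}$, $r^{n-1}(\rho r)^{1-n/2}=\rho^{1-n/2}r^{n/2}$, and $(2\pi)^{-n/2}\rho^{1-n/2}=\rc{2\pi}\prc{2\pi\rho}^{n/2-1}$, one obtains
\begin{align*}
\wh f(\om)=\rc{2\pi}\prc{2\pi\ve{\om}}^{n/2-1}\iiy r^{n/2}f_1(r)\,J_{n/2-1}(\ve{\om}r)\,dr,
\end{align*}
which is the asserted formula (the exponent of $r$ inside the integral works out to $n/2$, as one verifies directly in the case $n=1$, where the identity collapses to the cosine-transform formula for even functions). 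The hypothesis $f\in L^1(\R^n)$ is exactly what licenses each interchange of integration above, so no further regularity is needed.
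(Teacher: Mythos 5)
The paper states Theorem~\ref{thm:ft-radial} as classical background in Appendix~\ref{sec:fourier} and gives no proof, so there is nothing to compare your argument against; what you wrote is the standard derivation (radiality of $\wh f$ by orthogonal invariance, polar coordinates plus Fubini justified by $f\in L^1$, and the spherical-average identity $\int_{S^{n-1}}e^{i\an{y,\theta}}\,d\sigma(\theta)=(2\pi)^{n/2}\ve{y}^{1-n/2}J_{n/2-1}(\ve{y})$, which you correctly identify as the one nontrivial classical ingredient and for which your power-series sketch is a legitimate route). Your bookkeeping is right, and it exposes the one substantive point worth recording: the formula you derive has $r^{n/2}$ in the integrand ($r^{n-1}$ from the Jacobian times $(\rho r)^{1-n/2}$ from the spherical average), whereas the theorem as printed has $r^{n/2-1}$. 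Your exponent is the correct one --- the $n=1$ sanity check you ran is decisive, since the printed version would give $\rc{\pi}\iiy r^{-1}f_1(r)\cos(\om r)\,dr$ instead of the cosine transform, and the $n=2,3$ cases check the same way --- so the discrepancy is a typo in the paper's statement, not a gap in your proof. Note that the paper carries the same exponent into \eqref{eq:hatf} and the computation in Lemma~\ref{thm:f-not-barron}; there it only shifts the power of $K_1\sim C_1\sqrt n$ by one, i.e.\ changes polynomial-in-$n$ and $\poly(C_1)$ factors in bounds that are exponential in $n$, so the separation result is unaffected.
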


\begin{lem}[$L^1$ bound on Fourier transform]
\label{lem:1d}\label{lem:l1-f}
$\,$
\begin{enumerate}
\item
Let $k\ge \fc{n+1}2$ and $k$ be even. Then for $g:\R^n\to \R$ that is $k$ times differentiable,
\begin{align}
\int_{\R^n} \ve{\wh g(\om)}\,d\om
&\le \pf{\Ga\prc 2}{2^n\pi^{\fc n2}\Ga\pf{n+1}2}^{\rc 2} \pa{\int_{\R^n} [(I-\De)^{\fc{k}2} g(x)]^2\dx}^{\rc 2}.
\end{align}
\item
Let $h:\R\to \R$ be once or twice differentiable, respectively. Then
\begin{align}
\iny |\wh h(\om)|\,d\om
&\le 2^{-\rc 2}
\pa{\iny |h|^2 + |h'|^2\dx}^{\rc 2}\\
\iny |\om \wh h(\om)|\,d\om
&\le 2^{-\rc 2}
\pa{\iny |h'|^2 + |h''|^2\dx}^{\rc 2}.
\end{align}
\end{enumerate}
\end{lem}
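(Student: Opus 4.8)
The plan is to obtain both parts from a single device: a weighted Cauchy--Schwarz inequality in frequency space, pairing $|\wh g|$ against the Bessel-type weight $(1+\ve{\om}^2)^{k/2}$, and then transferring that weight to the physical side via the differential operator $(I-\De)^{k/2}$.

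For part (1), I would start from the split $|\wh g(\om)| = (1+\ve{\om}^2)^{-k/2}\cdot (1+\ve{\om}^2)^{k/2}|\wh g(\om)|$ and apply Cauchy--Schwarz:
\[
\int_{\R^n}|\wh g(\om)|\,d\om \le \pa{\int_{\R^n}(1+\ve{\om}^2)^{-k}\,d\om}^{1/2}\pa{\int_{\R^n}(1+\ve{\om}^2)^{k}|\wh g(\om)|^2\,d\om}^{1/2}.
\]
The first factor is a pure calculation: passing to polar coordinates (using the surface area $\frac{2\pi^{n/2}}{\Ga(n/2)}$ of the unit sphere) and substituting $u=r^2$ turns the radial integral into a Beta integral, giving $\int_{\R^n}(1+\ve{\om}^2)^{-k}\,d\om = \frac{2\pi^{n/2}}{\Ga(n/2)}\cdot\frac12 B\pa{\frac n2,\,k-\frac n2} = \frac{\pi^{n/2}\Ga(k-n/2)}{\Ga(k)}$, which is finite precisely because $k\ge\frac{n+1}{2}>\frac n2$; moreover, since $1+r^2\ge 1$ forces $(1+r^2)^{-k}\le(1+r^2)^{-(n+1)/2}$ whenever $k\ge\frac{n+1}2$, this quantity is at most $\frac{\pi^{n/2}\Ga(1/2)}{\Ga((n+1)/2)}$. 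For the second factor, because $k$ is even the operator $(I-\De)^{k/2}$ is a genuine differential operator, and iterating Theorem~\ref{thm:fderiv} shows its Fourier transform multiplier is $(1+\ve{\om}^2)^{k/2}$; hence Plancherel (Theorem~\ref{thm:plancherel}) rewrites $\int_{\R^n}(1+\ve{\om}^2)^{k}|\wh g(\om)|^2\,d\om = \int_{\R^n}|\widehat{(I-\De)^{k/2}g}(\om)|^2\,d\om = (2\pi)^{-n}\int_{\R^n}[(I-\De)^{k/2}g(x)]^2\,dx$. Multiplying the two estimates produces exactly $\pa{\frac{\Ga(1/2)}{2^n\pi^{n/2}\Ga((n+1)/2)}}^{1/2}$ times $\ve{(I-\De)^{k/2}g}_{L^2}$, which is the claimed bound.

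Part (2) is the case $n=1$, but there $\frac{n+1}{2}=1$ is odd, so it is not literally an instance of part (1); instead I would rerun the same two-line computation with the weight $(1+\om^2)^{1/2}$: Cauchy--Schwarz against $(1+\om^2)^{-1/2}$ together with $\int_\R(1+\om^2)^{-1}\,d\om=\pi$ gives $\int_\R|\wh h|\,d\om\le\sqrt\pi\,\pa{\int_\R(1+\om^2)|\wh h(\om)|^2\,d\om}^{1/2}$, and $\widehat{h'}(\om)=i\om\wh h(\om)$ (Theorem~\ref{thm:fderiv}) plus Plancherel rewrites the last integral as $\frac{1}{2\pi}\int_\R(|h|^2+|h'|^2)\,dx$; since $\sqrt\pi\cdot(2\pi)^{-1/2}=2^{-1/2}$ this is the first claimed bound. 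The second bound then follows immediately by applying the first one with $h$ replaced by $h'$ (legitimate since $h$ is twice differentiable) and noting $\int_\R|\om\wh h(\om)|\,d\om=\int_\R|\widehat{h'}(\om)|\,d\om$.

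The only point requiring care — more bookkeeping than genuine obstacle — is justifying the Fourier identities (Plancherel, the derivative/multiplier formula) under the bare differentiability hypotheses. If the right-hand side is infinite there is nothing to prove; otherwise the finiteness of $\ve{(I-\De)^{k/2}g}_{L^2}$ together with $g$ being $k$-times differentiable supplies enough regularity and decay to run the argument (one may, if desired, first prove the inequality for Schwartz functions, where everything is classical, and pass to the limit, since the estimate bounds $\int|\wh g|\,d\om$ uniformly by the right-hand side). The same remark covers part (2).
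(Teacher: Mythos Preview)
Your proposal is correct and follows essentially the same route as the paper: Cauchy--Schwarz against the weight $(1+\ve{\om}^2)^{-k/2}$, bound the weight integral by its value at $k=\fc{n+1}2$, and invoke Plancherel with the identity $\wh{(I-\De)^{k/2}g}=(1+\ve{\om}^2)^{k/2}\wh g$; for part~(2) you do exactly what the paper does, namely rerun the $n=1$ argument with weight $(1+\om^2)^{1/2}$ and then replace $h$ by $h'$. Your explicit Beta-function evaluation and the closing remark on regularity are slight elaborations but not genuine differences.
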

\begin{proof}
By Cauchy-Schwarz and the fact that  $\int_{\R^n}\rc{\pa{1+\ve{\om}^2}^{\fc{n+1}2}}\,d\om = \fc{\pi^{\fc n2}\Ga\prc2}{\Ga\pf{n+1}2}$ (this is used e.g. to define the Cauchy probability distribution)
\begin{align}
\int_{\R^n} \ve{\wh g(\om)}\,d\om
&\le \pa{\int_{\R^n} \rc{\pa{1+\ve{\om}^2}^{k}} \,d\om}^{\rc 2}
\pa{\int_{\R^n} (1+\ve{\om}^2)^{k} |\wh g(\om)|^2\,d\om}^{\rc 2}
\\
&\le
\pa{\int_{\R^n} \rc{\pa{1+\ve{\om}^2}^{\fc{n+1}2}} \,d\om}^{\rc 2}
\pa{\int_{\R^n} (1+\ve{\om}^2)^{k} |\wh g(\om)|^2\,d\om}^{\rc 2}
\\
&\le \pf{\pi^{\fc n2}\Ga\prc 2}{\Ga\pf{n+1}2}^{\rc 2} \pa{\int_{\R^n} \ab{(1+\ve{\om}^2)^{\fc{k}2} \wh g(\om)}^2\,d\om}^{\rc 2}\\
&\le \pf{\pi^{\fc n2}\Ga\prc 2}{\Ga\pf{n+1}2}^{\rc 2}
(2\pi)^{-\fc n2} \pa{\int_{\R^n} [(I-\De)^{\fc{k}2}g(x)]^2\dx}^{\rc 2}
\end{align}
where in the last step we used Theorem~\ref{thm:plancherel} and the calculation
$$
\wh{\De g} = \pa{\sumo in \pdt{}{x_i}g}^{\wedge} = -\sumo in \om_i^2 \wh g(\om) = -\ve{\om}^2\wh g(\om).
$$

For the second part, again by Cauchy-Schwarz and $\wh{h'}(\om) = i\om h(\om)$,
\begin{align}
\iny | \wh h(\om)|\,d\om
& \le\pa{ \iny \rc{1+|\om|^2} \,d\om\iny |\wh h(\om)|^2 (1+|\om|^2) \,d\om }^{\rc 2}\\
&\le\sqrt\pi  \pa{\iny |\wh{h}|^2 + |\wh{h'}|^2 \,d\om}^{\rc 2} \\
&\le \sqrt\pi (2\pi)^{-\fc 12} \pa{\iny |h|^2 + |h'|^2\,dx}^{\rc 2}.
\end{align}
This gives the first equation. To get the second, replace $h$ with $h'$.
\end{proof}

\section{Bessel functions}
\label{sec:bessel}

We will need some facts about Bessel functions $J_\al(x)$, $\al\in \R$. $J_\al(x)$ has an oscillating shape like a damped sinusoid.
%

\begin{lem}[{\cite[Theorem 5]{krasikov2014approximations}, \cite[Lemma 21]{eldan2015power}}]\label{lem:besapprox}
	If $d\geq 2$ and $x\geq d$, then
	\[
	\left|J_{d/2}(x)-\sqrt{\frac{2}{\pi c_{d,x} x}}\cos\left(-\frac{(d+1)\pi}{4}+f_{d,x}x\right)\right| ~\leq~ x^{-3/2},
	\]
	where
	\[
	c_{d,x} = \sqrt{1-\frac{d^2-1}{4x^2}}~~~,~~~ f_{d,x}=c_{d,x}+\frac{\sqrt{d^2-1}}{2x}\arcsin\left(\frac{\sqrt{d^2-1}}{2x}\right).
	\]
	Moreover, assuming $x\geq d$,
	\[
	1\geq c_{d,x} \geq 1-\frac{0.15~d}{x} \geq 0.85
	\]
	and
	\[
	1.3\geq 1+\frac{0.3~d}{x}\geq f_{d,x} \geq 1-\frac{0.15~d}{x} \geq 0.85 .
	\]
\end{lem}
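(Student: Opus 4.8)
The plan is to handle the lemma in two parts: the uniform cosine approximation of $J_{d/2}(x)$, and the elementary two-sided bounds on $c_{d,x}$ and $f_{d,x}$.

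For the first part I would use the classical amplitude--phase (modulus--phase) representation of Bessel functions. With $\nu = d/2$, since $J_\nu$ and $Y_\nu$ form a fundamental system for Bessel's equation one may write $J_\nu(x) = M(x)\cos\theta(x)$ and $Y_\nu(x) = M(x)\sin\theta(x)$, where $M(x)^2 = J_\nu(x)^2 + Y_\nu(x)^2 > 0$, and the Wronskian identity $J_\nu Y_\nu' - J_\nu' Y_\nu = \tfrac{2}{\pi x}$ forces $\theta'(x) = \tfrac{2}{\pi x M(x)^2}$. Two inputs are then needed: (i) a sharp estimate $\tfrac{\pi x}{2}M(x)^2 = c_{d,x}^{-1}\bigl(1 + O(x^{-2})\bigr)$ for $x\ge d$, obtained by analyzing the third-order linear ODE satisfied by $M^2$ together with a monotonicity argument; and (ii) integration of $\theta'(x) = c_{d,x}\bigl(1+O(x^{-2})\bigr)$, using the antiderivative $\int \sqrt{1 - \tfrac{d^2-1}{4t^2}}\,dt = t\,c_{d,t} + \tfrac{\sqrt{d^2-1}}{2}\arcsin\!\bigl(\tfrac{\sqrt{d^2-1}}{2t}\bigr) + C = t\,f_{d,t} + C$, with the constant $C = -\tfrac{(d+1)\pi}{4}$ pinned down by matching against the classical large-$x$ asymptotic $J_\nu(x)\sim \sqrt{\tfrac{2}{\pi x}}\cos\!\bigl(x - \tfrac{\nu\pi}{2} - \tfrac{\pi}{4}\bigr)$ (note $\tfrac{\nu\pi}{2}+\tfrac{\pi}{4}=\tfrac{(d+1)\pi}{4}$). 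One then combines these through $\bigl|J_{d/2}(x) - A\cos B\bigr| \le |M(x) - A| + A\,|\theta(x) - B|$ with $A = \sqrt{\tfrac{2}{\pi c_{d,x} x}}$ and $B = -\tfrac{(d+1)\pi}{4} + f_{d,x}x$, and bounds each term so the sum is at most $x^{-3/2}$. Since making every implied constant explicit is precisely the content of \cite[Theorem 5]{krasikov2014approximations}, in practice I would quote that result (and the restatement in \cite[Lemma 21]{eldan2015power}) rather than rederive it.

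For the second part, everything reduces to elementary calculus in the single quantity $t := \tfrac{d^2-1}{4x^2}$, which satisfies $0 < t < \tfrac14$ whenever $x \ge d$ (since $\tfrac{d^2-1}{4x^2} < \tfrac{d^2}{4d^2}=\tfrac14$). Since $c_{d,x} = \sqrt{1-t}$, we get $c_{d,x}\le 1$ and $c_{d,x} > \sqrt{3}/2 > 0.85$; moreover $1 - c_{d,x} = \tfrac{t}{1+\sqrt{1-t}} < \tfrac{t}{1+\sqrt{3}/2}$, and since $t < \tfrac{d^2}{4x^2} \le \tfrac{d}{4x}$ (using $d\le x$) this gives $1 - c_{d,x} < \tfrac{0.15\,d}{x}$, hence $c_{d,x} \ge 1 - \tfrac{0.15\,d}{x}$. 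For $f_{d,x}$, write $s := \sqrt{t} = \tfrac{\sqrt{d^2-1}}{2x} < \tfrac12$, so $f_{d,x} = \sqrt{1-s^2} + s\arcsin s$; the lower bounds are immediate from $f_{d,x}\ge c_{d,x}\ge\max\{0.85,\ 1-\tfrac{0.15\,d}{x}\}$, and for the upper bound use $\sqrt{1-s^2}\le 1-\tfrac{s^2}{2}$ and $\arcsin s \le s + s^3$ on $[0,\tfrac12]$ to get $f_{d,x} \le 1 + \tfrac{s^2}{2} + s^4 \le 1 + \tfrac34 t \le 1 + \tfrac{3}{16}\cdot\tfrac{d}{x} < 1 + \tfrac{0.3\,d}{x}$, and in particular $f_{d,x} < 1.3$. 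The intermediate inequalities can be tuned so the numerical constants $0.15$ and $0.3$ come out cleanly.

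The main obstacle is step (i): establishing $\tfrac{\pi x}{2}M(x)^2 = c_{d,x}^{-1}(1+O(x^{-2}))$ with constants sharp enough that the final error is genuinely $\le x^{-3/2}$ rather than merely $O(x^{-3/2})$. This delicate ODE estimate is exactly what Krasikov's paper carries out, so I expect to invoke it as a black box; the phase integration and the numeric bounds on $c_{d,x}$, $f_{d,x}$ are then routine.
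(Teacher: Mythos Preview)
The paper gives no proof of this lemma at all: it is stated with citations to Krasikov and to Eldan--Shamir and used as a black box. Your proposal is consistent with this for the main Bessel estimate (you correctly identify that the sharp constant in the $x^{-3/2}$ bound is exactly Krasikov's contribution and plan to cite it), and you go further than the paper by supplying an explicit elementary verification of the numerical bounds on $c_{d,x}$ and $f_{d,x}$; that verification is correct.
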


\begin{lem}[{\cite[Lemma 20]{eldan2015power}}]
\label{lem:lip-bessel}
For any $\al\ge 1$ and $x\ge 3\al$, $J_\al(x)$ is 1-Lipschitz in $x$. 
\end{lem}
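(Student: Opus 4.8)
The plan is to show $|J_\alpha'(x)|\le 1$ for every $x>0$ (so in particular on $[3\alpha,\infty)$) and then conclude by the mean value theorem. The only tool needed is the classical derivative recurrence for Bessel functions, which follows from the two relations $\frac{d}{dx}\bigl(x^{\alpha}J_\alpha(x)\bigr)=x^{\alpha}J_{\alpha-1}(x)$ and $\frac{d}{dx}\bigl(x^{-\alpha}J_\alpha(x)\bigr)=-x^{-\alpha}J_{\alpha+1}(x)$: expanding the derivatives and adding gives
\[
J_\alpha'(x)=\tfrac12\bigl(J_{\alpha-1}(x)-J_{\alpha+1}(x)\bigr),
\]
so $|J_\alpha'(x)|\le\tfrac12\bigl(|J_{\alpha-1}(x)|+|J_{\alpha+1}(x)|\bigr)$, and it suffices to bound each of $|J_{\alpha-1}(x)|$ and $|J_{\alpha+1}(x)|$ by $1$.

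This is exactly where the hypothesis $\alpha\ge 1$ is used: it forces $\alpha-1\ge 0$ and $\alpha+1\ge 2$, so both orders occurring in the recurrence are nonnegative, and we may invoke the standard uniform bound $|J_\nu(t)|\le 1$ valid for all real $t$ and all $\nu\ge 0$. (For integer $\nu$ this is immediate from the integral representation $J_\nu(t)=\frac1\pi\int_0^\pi\cos(\nu\theta-t\sin\theta)\,d\theta$; for general $\nu\ge 0$ it is classical, e.g.\ Watson's treatise.) Hence $|J_\alpha'(x)|\le 1$ for all $x>0$. Since $J_\alpha$ is smooth on $(0,\infty)$, the mean value theorem then gives $|J_\alpha(x_1)-J_\alpha(x_2)|\le|x_1-x_2|$ for all $x_1,x_2\ge 3\alpha$, i.e.\ $J_\alpha$ is $1$-Lipschitz on $[3\alpha,\infty)$. (In fact the restriction $x\ge 3\alpha$ plays no role in this argument; it is retained only for conformity with \cite[Lemma 20]{eldan2015power}.) If one wishes to avoid quoting $|J_\nu|\le 1$ for non-integer $\nu$, one can, for $\alpha\ge 2$ and $x\ge 3\alpha$, bound each of $J_{\alpha-1},J_{\alpha+1}$ directly via the Krasikov-type estimate of Lemma~\ref{lem:besapprox} — writing each order as $d'/2$ with $d'\le 2\alpha+2\le 3\alpha\le x$, the main term has amplitude $\sqrt{2/(\pi c_{d',x}x)}\le\sqrt{2/(0.85\pi x)}$ and the error is $\le x^{-3/2}$, comfortably below $1$ — while the remaining range $1\le\alpha<2$ (with orders $\alpha\pm1\in[0,3)$) is covered by the elementary bound.

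There is essentially no obstacle here: the recurrence $2J_\alpha'=J_{\alpha-1}-J_{\alpha+1}$ does all the work, and the only genuine choice is which uniform estimate on Bessel functions to cite. The finer structural facts about $J_\alpha$ (its oscillation, amplitude decay $\sim\sqrt{2/(\pi x)}$, and phase) are needed elsewhere in the paper but not for this Lipschitz bound.
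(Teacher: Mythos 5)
Your proof is correct: the recurrence $2J_\alpha'(x)=J_{\alpha-1}(x)-J_{\alpha+1}(x)$ together with the classical uniform bound $|J_\nu(t)|\le 1$ for $\nu\ge 0$ (valid here since $\alpha\ge 1$ makes both orders nonnegative) gives $|J_\alpha'|\le 1$, and the mean value theorem finishes it. The paper itself offers no proof — it quotes the statement from \cite[Lemma 20]{eldan2015power} — and your argument is essentially the standard one used there, so there is nothing further to reconcile.
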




%
%
\section{Properties of Wasserstein Distance}
\label{sec:was}
\begin{lem}[Lemma~\ref{lem:prop-W} restated]
For any two distributions $\mu, \nu$ over $\R^n$,
\begin{align}\label{eq:w1w2}
W_1(\mu, \nu)\le W_2(\mu, \nu).
\end{align}
Moreover, for any Lipschitz function $f:\R^n\to \R$,
\begin{align}\label{eq:lip-W}
\ab{\EE_{x\sim \mu} f(x) - \EE_{y\sim \nu} f(y)}\le \Lip(f) W_1(\mu,\nu).
\end{align}
\end{lem}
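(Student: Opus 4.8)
The plan is to prove both inequalities directly from the coupling definition of the Wasserstein distances, reducing each to an elementary pointwise estimate integrated against an arbitrary coupling and then passing to an infimum.

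For the first inequality, I would fix an arbitrary coupling $\ga \in \Ga(\mu,\nu)$ and apply Jensen's inequality (equivalently Cauchy--Schwarz against the constant $1$) to the convex function $t\mapsto t^2$ with respect to the \emph{probability} measure $\ga$, obtaining $\pa{\int \ve{x-y}_2\,d\ga}^2 \le \int \ve{x-y}_2^2\,d\ga$. Since $W_1(\mu,\nu)\le \int \ve{x-y}_2\,d\ga$ by definition, this gives $W_1(\mu,\nu)\le \pa{\int \ve{x-y}_2^2\,d\ga}^{1/2}$ for every coupling $\ga$; taking the infimum over $\ga\in\Ga(\mu,\nu)$ on the right-hand side yields $W_1(\mu,\nu)\le W_2(\mu,\nu)$. (If $W_2(\mu,\nu)=\iy$ there is nothing to prove.)

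For the second inequality, I would again fix $\ga\in\Ga(\mu,\nu)$ and use that its marginals are $\mu$ and $\nu$ to write $\EE_{x\sim\mu}f(x)-\EE_{y\sim\nu}f(y) = \int \pa{f(x)-f(y)}\,d\ga(x,y)$. Then $\ab{\int \pa{f(x)-f(y)}\,d\ga} \le \int \ab{f(x)-f(y)}\,d\ga \le \Lip(f)\int \ve{x-y}_2\,d\ga$, the last step being the Lipschitz bound applied under the integral sign. Taking the infimum over all couplings $\ga$ gives $\ab{\EE_{x\sim\mu}f(x)-\EE_{y\sim\nu}f(y)}\le \Lip(f)\,W_1(\mu,\nu)$.

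Neither step presents a genuine obstacle; the only points deserving a word of care are that $\Ga(\mu,\nu)$ is nonempty (the product measure $\mu\ot\nu$ is always a coupling, so both distances are well-defined, possibly $+\iy$), that $f$ is integrable against $\mu$ and $\nu$ whenever $W_1(\mu,\nu)<\iy$ (which follows from the Lipschitz estimate together with finiteness of a single transport cost), and the routine measurability of $(x,y)\mapsto\ve{x-y}_2$ and $(x,y)\mapsto f(x)-f(y)$ needed for the integrals to make sense. The ``hard part,'' such as it is, is merely keeping the order of the quantifiers straight when passing to the infimum over couplings.
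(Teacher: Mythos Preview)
Your proposal is correct and follows essentially the same argument as the paper: fix a coupling, apply Cauchy--Schwarz (or Jensen) for the first inequality and the pointwise Lipschitz bound for the second, then take the infimum over couplings. The additional remarks you make about nonemptiness of $\Ga(\mu,\nu)$, integrability, and measurability are sound extra care that the paper omits.
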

\begin{proof}
Let $\ga\in \Ga(\mu,\nu)$ be a coupling of $\mu, \nu$. Then by the Cauchy-Schwarz inequality,
\begin{align}
W_1(\mu, \nu) &\le 
\int_{\R^n\times \R^n} \ve{x-y}_2\,d\ga(x,y)\\
&\le \pa{\int_{\R^n\times \R^n} \ve{x-y}_2^2\,d\ga(x,y)}^{\rc 2}\ub{\pa{\int_{\R^n\times \R^n}\,d\ga}^2}{1}.\label{eq:w2}
\end{align}
The infimum of~\eqref{eq:w2} over all couplings $\ga\sim \Ga(\mu,\nu)$ is exactly $W_2(\mu,\nu)$. This shows~\eqref{eq:w1w2}.

Now for any $\ga\in \Ga(\mu,\nu)$, because its marginals are $\mu$ and $\nu$,
\begin{align}
\ab{\EE_{x\sim \mu} f(x) - \EE_{y\sim \nu} f(y)}
&=
\ab{\int_{\R^n\times \R^n} f(x) - f(y)\,d\ga(x,y)}\\
&\le \Lip(f) \int_{\R^n\times \R^n}\ve{f(x) - f(y)}_2
\,d\ga(x,y).\label{eq:lip-w1}
\end{align}
The Lipschitz constant is with respect to the $L^2$ norm because we use the $L^2$ norm to measure the distance between $f(x)$ and $f(y)$.
Taking the infimum of \eqref{eq:lip-w1} gives \eqref{eq:lip-W}. 
\end{proof}

In fact, \eqref{eq:lip-W} is sharp when $\mu,\nu$ have bounded support. The duality theorem of Kantorovich and Rubinstein \citep{kantorovich1958space} says that
$$
W_1(\mu,\nu) = \sup\set{
\EE_{x\sim \mu} f(x) - \EE_{y\sim \nu} f(y)}{f:\R^n \to \R, \Lip(f) \le 1}.
$$

\section{Test functions}

For a function $f$, let $f_{(K)} (x) := f\pf xK$. 

\begin{lem}
$\,$
\label{lem:test}
Let $m\ge2$ be a given positive integer.
\begin{enumerate}
\item
There exists a function $g:\R\to \R$ with the following properties. 
\begin{enumerate}
\item
$g\ge 0$ everywhere.
\item
$\Supp(g)\subeq [0,1]$.
\item
$\int_0^1 g(x)\dx = 1$.
\item
$g$ is $m$ times continuously differentiable and for all $k\le m$, $|g^{(k)}(x)| = O((2m)^{k+1})$.
\end{enumerate}
The function $\rc K g_{(K)}(x)$ satisfies $\Supp(g_{(K)}) \subeq [0,K]$, $\int_0^K g_{(K)}\dx=1$, and for $k\le m$, $g_{(K)}^{(k)}(x) =O\pa{\pf{2m}{K}^{k+1}}$.
\item
There exists a function $G:\R\to \R$ with the following properties.
\begin{enumerate}
\item
$G$ is nondecreasing.
\item
$G(x)=0$ for $x\le 0$.
\item
$G(x)=1$ for $x\ge 1$.
\item
$G$ is $m+1$ times continuously differentiable and for all $k\le m$, $G^{(k)}(x) = O((2m)^{k})$. 
\end{enumerate}
\item
There exists a function $b:\R\to \R$ with the following properties:
\begin{enumerate}
\item
$\Supp(b)\subeq [-2,2]$.
\item
$b(x)= 1$ for $x\in [-1,1]$.
\item
$b$ is is $m+1$ times continuously differentiable and for all $k\le m$, $b^{(k)}(x) = O((2m)^k)$. 
\end{enumerate}
The function $b_{(K)}$ satisfies $\Supp(b_{(K)}) \subeq [-2K,2K]$, $b_{(K)}(x) = 1$ for $x\in [-K,K]$, and $b^{(m)}_{(K)}(x) = O\pa{\pf{2m}{K}^k}$.
\end{enumerate}
\end{lem}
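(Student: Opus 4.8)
The plan is to build all three functions from a single explicit building block — a ``B-spline'' obtained by iterated convolution of uniform densities — and then get $G$ by one integration and $b$ by differencing two shifts of $G$. So the whole lemma reduces to analyzing that one block.

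For part (1), set $h=\frac{1}{m+2}$, let $\psi=\frac1h\one_{[0,h]}$ (a probability density on $[0,h]$), and define $g=\psi^{*(m+2)}$, the convolution of $m+2$ copies of $\psi$. Nonnegativity is immediate; the support of an $r$-fold convolution of functions supported on $[0,h]$ lies in $[0,rh]=[0,1]$; and $\int_0^1 g=(\int\psi)^{m+2}=1$. For smoothness and the derivative bounds I would use the distributional identity $\psi'=\frac1h(\delta_0-\delta_h)$: for $0\le k\le m$ factor $g=\psi^{*k}*\psi^{*(m+2-k)}$ and differentiate the first factor, giving $g^{(k)}=(\psi')^{*k}*\psi^{*(m+2-k)}$, where $(\psi')^{*k}=h^{-k}\sum_{j=0}^k(-1)^j\binom kj\delta_{jh}$ is a signed measure of total variation $\le(2/h)^k$, and $\psi^{*(m+2-k)}$ is a convolution of at least two uniform densities, hence continuous, with $\|\psi^{*(m+2-k)}\|_\infty\le\|\psi\|_\infty=1/h$ (using $\|f*\rho\|_\infty\le\|f\|_\infty\|\rho\|_1$ and that each $\psi$ has unit mass). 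Hence $g^{(k)}$ is continuous and $\|g^{(k)}\|_\infty\le(2/h)^k(1/h)=2^k(m+2)^{k+1}$; since $\frac{2^k(m+2)^{k+1}}{(2m)^{k+1}}=\frac12(1+\tfrac2m)^{k+1}\le\frac12(1+\tfrac2m)^{m+1}=O(1)$ for $k\le m$ and $m\ge2$, this is $O((2m)^{k+1})$ with an absolute constant. The rescaling claim then follows from $\big(\frac1Kg_{(K)}\big)^{(k)}(x)=K^{-(k+1)}g^{(k)}(x/K)$.

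For part (2) take $G(x)=\int_{-\infty}^x g(t)\,dt$: then $G'=g\ge0$ gives monotonicity, $G\equiv0$ on $(-\infty,0]$ and $G\equiv\int_0^1 g=1$ on $[1,\infty)$ by the support and normalization of $g$, $G\in C^{m+1}$ because $G'=g\in C^m$, and $G^{(k)}=g^{(k-1)}$ gives $\|G^{(k)}\|_\infty=O((2m)^k)$ for $1\le k\le m$, with $0\le G\le1$ covering $k=0$. For part (3) take $b(x)=G(x+2)-G(x-1)$: it vanishes for $x\le-2$ (both terms $0$) and for $x\ge2$ (both terms $1$), so $\Supp(b)\subeq[-2,2]$; for $x\in[-1,1]$ we have $x+2\ge1$, $x-1\le0$, so $b(x)=1-0=1$; $b\in C^{m+1}$ as a combination of shifts of $G$; and $b^{(k)}(x)=G^{(k)}(x+2)-G^{(k)}(x-1)$ gives $\|b^{(k)}\|_\infty\le2\|G^{(k)}\|_\infty=O((2m)^k)$ for $k\le m$. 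The rescaled bounds again follow from $b_{(K)}^{(k)}(x)=K^{-k}b^{(k)}(x/K)$.

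The only genuinely delicate point is the derivative estimate in part (1): one must check the implied constant in $O((2m)^{k+1})$ is independent of both $k\le m$ and $m$, which is exactly where the uniform bound $(1+\tfrac2m)^{m+1}=O(1)$ enters. One should also be careful to use $m+2$ convolution factors rather than $m+1$: with $m+1$ factors the top derivative $g^{(m)}$ would be only essentially bounded, not continuous, whereas the lemma demands $g\in C^m$. Everything else is routine bookkeeping.
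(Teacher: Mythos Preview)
Your proof is correct, but the construction differs from the paper's. The paper takes the explicit polynomial bump
\[
g(x)=C_m\,4^{m+1}x^{m+1}(1-x)^{m+1}\qquad(x\in[0,1]),
\]
estimates the normalization $1\le C_m\le 2e\sqrt m$ by comparing the integral to the value near $x=\tfrac12$, and then bounds $g^{(k)}$ via the Leibniz rule. Your B-spline route replaces this by $g=\psi^{*(m+2)}$ with $\psi=(m+2)\one_{[0,1/(m+2)]}$ and gets the derivative bound from $\psi'=\tfrac1h(\delta_0-\delta_h)$ and Young's inequality; the care you take with $m+2$ (not $m+1$) factors and with the uniformity of $(1+\tfrac2m)^{m+1}$ is exactly what makes the constant absolute. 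Both proofs then define $G$ as the antiderivative of $g$; for $b$ the paper uses the symmetric piecewise formula $b(x)=F(2-|x|)$ on $1\le|x|\le2$, whereas your $b(x)=G(x+2)-G(x-1)$ is a (not necessarily even) difference of shifts. Either satisfies the stated properties. The trade-off: the polynomial bump is entirely elementary and avoids distributional derivatives, while your convolution construction yields the derivative estimate essentially for free once $\|\mu*f\|_\infty\le\|\mu\|_{\mathrm{TV}}\|f\|_\infty$ is in hand.
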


\begin{proof}
Take 
$$g(x) = 
\begin{cases}
C_m 4^{m+1} x^{m+1}(1-x)^{m+1}, &x\in [0,1]\\
0, &\text{else.}
\end{cases}
$$
where $C_m$ is chosen so that $\int_0^1 g(x)\dx=1$. Note that $x(1-x)\le \rc 4$ so $g(x)\le C_m$ and
\begin{align}
1=\int_0^1 g(x)\dx
&\le C_m\\
1=\int_0^1 g(x)\dx
&\ge \int_{\rc 2-\rc{2\sqrt m}}^{\rc 2+\rc{2\sqrt m}}
 C_m 4^{m+1} x^{m+1}(1-x)^{m+1}\,dx\\
 &\ge \rc{\sqrt m} C_m 4^{m+1} \pa{\rc 2+ \rc{2\sqrt m}}^{m+1} \pa{\rc 2- \rc{2\sqrt m}}^{m+1}\\
 &\ge \rc{\sqrt m} C_m \pa{1-\rc{m}}^{m+1}\\
 &\ge \fc{C_m}{2e\sqrt m}
\end{align}
so $1\le C_m \le 2e\sqrt m$. 

Now, note that for functions $u,v$, 
\begin{align}
(uv)^{(k)} &= \sumz jk \binom kj u^{(j)} v^{(k-j)}.
\end{align}
Applying this to $x^{m+1}$ and $(1-x)^{m+1}$ and gives that for $0\le x\le 1$,  $k\le m$, 
\begin{align}
|g^{(k)}(x)| &\le C_m\sqrt m\sumz jk \binom kj  (m+1)^j (m+1)^{k-j}\\
&\le O(m(2(m+1))^k)\\
& = O((2m)^{k+1}).
\end{align}

For the second part, take $F(x) = \int_{-\iy}^x f(t)\,dt$. The normalization $\int_0^1 f(x)\dx=1$ ensures $F(x)=1$ for $x\ge 1$, and for $k\le m$, $F^{(k+1)}(x) = f^{(k)}(x) = O((2m)^k)$.

For the third part, define 
$$b(x) = \begin{cases}
0, &|x|>2\\
F(2-|x|), &1\le |x|\le 2\\
1, &|x|<1.
\end{cases}$$

For the rescaled functions, just note that for any function $f$, $f_{(K)}^{(k)}(x) = \rc{K^k} f^{(k)}\pf xK$. 
\end{proof}

\section{Omitted Proofs in Section~\ref{sec:separation}}
\label{sec:separation-proof}

\begin{thm}[Theorem~\ref{thm:barron-lb} restated]
If 
 $f$ is differentiable, then for any  $g$ such that $\Supp(g)\subeq rB_n$ and $g,\wh g\in L^1(\R^n)$, 
$$
C_{f,rB_n} \ge r \frac{\int_{\R^n} |\widehat{(\nabla f)g}(\om)|\,d\om}{\int_{\R^n} |\widehat g(\om)|\,d\om}
$$
\end{thm}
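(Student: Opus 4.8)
The plan is to unwind the definition of the Barron constant and, for every admissible extension $F$ of $f$, bound the Barron integral of $F$ from below by the right-hand side via a convolution (Young) inequality on the Fourier side.

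First I would record that for $B=rB_n$ one has $\ve{\om}_B=\sup_{x\in rB_n}|\an{\om,x}|=r\ve{\om}_2$, so by Theorem~\ref{thm:fderiv} (which gives $\wh{\nb F}(\om)=i\om\wh F(\om)$ componentwise, hence $\ve{\wh{\nb F}(\om)}_2=\ve{\om}_2|\wh F(\om)|$),
\[
C_{f,rB_n} = \inf_F \int_{\R^n} r\ve{\om}_2\,|\wh F(\om)|\,d\om = r\inf_F\int_{\R^n}\ve{\wh{\nb F}(\om)}_2\,d\om,
\]
the infimum running over all $F$ with $F|_{rB_n}=f$ and $F\in\cal F_{rB_n}$. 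It then suffices to prove, for each such $F$ with $\int_{\R^n}\ve{\om}_2|\wh F(\om)|\,d\om<\iy$ (extensions with infinite Barron integral may be discarded, as they only make the infimand larger, and for them the inequality below is trivial), the estimate
\[
\int_{\R^n}\ve{\wh{\nb F}(\om)}_2\,d\om \;\ge\; \frac{\int_{\R^n}\ve{\wh{(\nb f)g}(\om)}_2\,d\om}{\int_{\R^n}|\wh g(\om)|\,d\om}.
\]

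The heart of the argument is the identity $\wh{(\nb f)g}=\wh{\nb F}*\wh g$ (componentwise: the scalar $\wh g$ convolved into each coordinate of the vector $\wh{\nb F}$). I would establish it in two moves. First, $(\nb f)g=(\nb F)g$ almost everywhere, because $\Supp(g)\subeq rB_n$ and $\nb F=\nb f$ on the interior of $rB_n$, where both are differentiable; hence their Fourier transforms agree. Second, since $\int_{\R^n}\ve{\om}_2|\wh F|<\iy$, differentiating the defining relation of $\cal F_{rB_n}$ under the integral sign (the integrand derivative $i\om_j e^{i\an{\om,x}}\wh F(\om)$ is dominated by $\ve{\om}_2|\wh F(\om)|\in L^1$) gives $\partial_j F(x)=\int_{\R^n}\wh{\partial_j F}(\eta)\,e^{i\an{\eta,x}}\,d\eta$ for $x$ in the interior of $rB_n$, with $\wh{\partial_j F}=i\eta_j\wh F\in L^1$. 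Substituting this into $(2\pi)^n\wh{(\partial_j F)g}(\om)=\int_{rB_n}(\partial_j F)(x)g(x)e^{-i\an{\om,x}}\,dx$, interchanging the two integrals---legitimate by Fubini, since $\iint|\wh{\partial_j F}(\eta)|\,|g(x)|\,d\eta\,dx=\ve{\wh{\partial_j F}}_{L^1}\ve{g}_{L^1}<\iy$ using $g\in L^1$---and extending the inner $x$-integral to all of $\R^n$ (as $\Supp(g)\subeq rB_n$) produces $\wh{(\partial_j F)g}=\wh{\partial_j F}*\wh g$. I expect this to be the main obstacle: the extension $F$ need not be integrable, so one cannot invoke the product--convolution theorem directly; the point is that only the values of $\nb F$ on $rB_n$ enter (because $g$ is supported there), and on $rB_n$ the membership $F\in\cal F_{rB_n}$ supplies exactly the Fourier-inversion representation of $\nb F$ that makes the swap valid, while $\wh{\nb F}\in L^1$ makes Fubini applicable.

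Finally I would finish with the triangle inequality for vector-valued integrals followed by Tonelli (i.e.\ $\ve{a*b}_{L^1}\le\ve{a}_{L^1}\ve{b}_{L^1}$ for nonnegative $a,b$):
\[
\ve{\wh{(\nb f)g}(\om)}_2 = \ve{\int_{\R^n}\wh{\nb F}(\eta)\,\wh g(\om-\eta)\,d\eta}_2 \le \int_{\R^n}\ve{\wh{\nb F}(\eta)}_2\,|\wh g(\om-\eta)|\,d\eta,
\]
so integrating in $\om$ gives $\int_{\R^n}\ve{\wh{(\nb f)g}(\om)}_2\,d\om\le\pa{\int_{\R^n}\ve{\wh{\nb F}(\eta)}_2\,d\eta}\pa{\int_{\R^n}|\wh g(\mu)|\,d\mu}$. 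Rearranging yields the per-extension estimate above; multiplying by $r$ and taking the infimum over all admissible $F$ gives $C_{f,rB_n}\ge r\,\dfrac{\int_{\R^n}|\wh{(\nb f)g}(\om)|\,d\om}{\int_{\R^n}|\wh g(\om)|\,d\om}$, as desired.
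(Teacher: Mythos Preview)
Your proposal is correct and follows essentially the same route as the paper: rewrite the Barron constant on $rB_n$ as $r\inf_F\int\ve{\wh{\nb F}}_2$, use $(\nb F)g=(\nb f)g$ from the support hypothesis, pass to the convolution $\wh{(\nb F)g}=\wh{\nb F}*\wh g$, and conclude by Young's inequality. The paper simply cites its product--convolution theorem for this step, whereas you take extra care to justify it (since $F$ is only assumed to lie in $\cal F_{rB_n}$, not $L^1$) by differentiating the defining inversion identity and applying Fubini; this is a nice piece of rigor that the paper glosses over, but the underlying argument is the same.
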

\begin{proof}
Let $B=rB_n$.
We have
\begin{align}
C_{f,B} &=\inf_{F|_B = f}  \int_{\R^n}  \ve{\om}_B |\wh F(\om)|\,d\om\\
& = r\inf_{F|_B = f}  \int_{\R^n}  \ve{\om}_2 |\wh F(\om)|\,d\om\\
& = r\inf_{F|_B = f} \int_{\R^n}  \ve{\wh{\nb F}(\om)}_2\,d\om.
\end{align} 
Young's inequality and Theorem~\ref{thm:fconv} give
\begin{align}
\int_{\R^n}  \ve{\widehat{\nabla F}(\om)}_2\,d\om \int_{\R^n}  |\widehat g(\om)| \,d\om 
&\ge \int_{\R^n}  \ve{(\widehat{\nabla F} * \widehat g) (\om)}_2\,d\om\\
& = \int_{\R^n}  \ve{\widehat{(\nabla F)g}(\om)}_2\,d\om \\
& = \int_{\R^n}  \ve{\widehat{(\nabla f)g}(\om)}_2\,d\om.
\end{align}
where the last step uses the fact that $\Supp(g)\subeq rB_n$, so $(\nb F)g = (\nb f)g$.
Then 
\begin{align}
\int_{\R^n}  \ve{\widehat{\nabla F}(\om)}_2 \,d\om&\ge \frac{\int_{\R^n}  \ve{\widehat{(\nabla f)g}(\om)}_2\,d\om}{\int_{\R^n}  |\widehat g(\om)|\,d\om}.
\end{align}
\end{proof}

\subsection{$f$ is not Barron}
\label{sec:f-lb}

In this section we prove Lemma~\ref{lem:notbarron}. We first prove the function $g$ we choose gives a small denominator in the lowerbound equation.

\begin{lem}\label{lem:bd-g}
For $n\equiv 3\pmod 4$, 
$$
\int_{\R^n} \ve{\wh g(\om)}\,d\om \le O((5eC_2
)^{\fc n2}).
$$
\end{lem}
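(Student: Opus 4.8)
The plan is to apply the Sobolev-type $L^1$ bound of Lemma~\ref{lem:l1-f}(1) to $g$ and then estimate the resulting $L^2$-norm directly, using that $g=b_{(K_2)}(\|\cdot\|)$ is radial, supported in $2K_2B_n$, and locally constant near the origin. First I would take the Sobolev index to be $k=\tfrac{n+1}2$: this is an \emph{even} integer precisely because $n\equiv 3\pmod 4$ (the only use of that congruence), and $g\in C^{m+1}\subseteq C^{k}$ with $m=\tfrac{n+1}2$, since $b\in C^{m+1}$, $\|\cdot\|$ is smooth off the origin, and $g\equiv1$ near the origin. Lemma~\ref{lem:l1-f}(1) then gives
\[
\int_{\R^n}\ab{\wh g(\om)}\,d\om\ \le\ \Big(\tfrac{\Ga(1/2)}{2^{n}\pi^{n/2}\Ga(\frac{n+1}2)}\Big)^{1/2}\Big(\int_{\R^n}\big[(I-\De)^{k/2}g(x)\big]^2\,dx\Big)^{1/2}.
\]
Because $b$ is supported in $[-2,2]$ and equals $1$ on $[-1,1]$, the function $(I-\De)^{k/2}g$ is supported in $2K_2B_n$ and equals $1$ on $K_2B_n$, so the inner integral is at most $\|(I-\De)^{k/2}g\|_\infty^2\,\Vol(2K_2B_n)=\|(I-\De)^{k/2}g\|_\infty^2\,(2C_2n)^{n}\,\pi^{n/2}/\Ga(\tfrac n2+1)$.

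Next I would bound $\|(I-\De)^{k/2}g\|_\infty$. Writing $(I-\De)^{k/2}=\sum_{l=0}^{k/2}\binom{k/2}{l}(-\De)^{l}$, it suffices to control $\|\De^{l}g\|_\infty$ for $l\le\tfrac k2$. Since $\De$ acts on a radial profile $\phi(r)$ by $\phi''+\tfrac{n-1}{r}\phi'$, iterating produces $\De^{l}g(x)=\sum_{j}c_{l,j}\,\|x\|^{-(2l-j)}g_1^{(j)}(\|x\|)$ with $g_1=b_{(K_2)}$, where for $l\ge1$ only indices $j\ge1$ occur (the $\tfrac{n-1}{r}$-contribution has vanishing coefficient at the dangerous index $j=0$), so every such term lives on the annulus $K_2\le\|x\|\le2K_2$ where $\|x\|^{-1}\le K_2^{-1}$. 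The coefficients obey $\ab{c_{l+1,j}}\le2ln\,\ab{c_{l,j}}+(n+1)\ab{c_{l,j-1}}+\ab{c_{l,j-2}}$; weighting by $(n+1)^{j}$ and using $l\le\tfrac{n+1}4$ this yields $\sum_j\ab{c_{l,j}}(n+1)^{j}\le\big(\tfrac52 n(n+1)\big)^{l}$ (up to a bounded factor, for $n$ large). Feeding in the derivative bounds $\ab{g_1^{(j)}(r)}=O\big((\tfrac{2m}{K_2})^{j}\big)$ from Lemma~\ref{lem:test}(3), together with $2m=n+1$, $K_2=C_2n$ and $\|x\|\ge K_2$ on the support, gives $\|\De^{l}g\|_\infty=O\big((\tfrac{5n(n+1)}{2K_2^2})^{l}\big)=O\big((\tfrac{5(n+1)}{2C_2^2 n})^{l}\big)$, and hence $\|(I-\De)^{k/2}g\|_\infty=O\big((1+\tfrac{5(n+1)}{2C_2^2 n})^{(n+1)/4}\big)$.

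Finally I would assemble. Plugging the two displays into Lemma~\ref{lem:l1-f}(1) cancels the powers of $\pi$, and using the Legendre duplication identity $\Ga(\tfrac{n+1}2)\Ga(\tfrac n2+1)=\sqrt\pi\,2^{-n}\,n!$ together with Stirling's bound $n!\ge\sqrt{2\pi n}\,(n/e)^{n}$, the Gamma factors combine with $(2C_2n)^{n/2}$ into a $(2eC_2)^{n/2}$ (up to a $\poly(n)$ loss). Together with the $\De$-bound this gives $\int_{\R^n}\ab{\wh g(\om)}\,d\om=O\big((2eC_2\sqrt{1+\tfrac{5(n+1)}{2C_2^2 n}})^{n/2}\big)=O\big((2e\sqrt{C_2^2+\tfrac52})^{n/2}\big)$, and since $C_2>1$ forces $4(C_2^2+\tfrac52)\le25C_2^2$, this is $O\big((5eC_2)^{n/2}\big)$, as claimed. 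The hard part is the uniform bound on $\De^{l}g$: one must control the growth of the combinatorial coefficients $c_{l,j}$ under repeated application of $\De=\partial_r^2+\tfrac{n-1}{r}\partial_r$ while simultaneously tracking the negative powers of $\|x\|$ they carry, exploiting that on the support $\|x\|\ge K_2=C_2n$, so each factor $\|x\|^{-1}$ is worth $\tfrac1{C_2n}$ and exactly absorbs the $n$'s generated by the $\tfrac{n-1}{r}$ term. Extracting the explicit constant $5$ (rather than merely some absolute constant) requires carrying out this bookkeeping carefully, and uses $C_2>1$ at the final step.
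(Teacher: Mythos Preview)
Your approach is correct and essentially the same as the paper's: both start from Lemma~\ref{lem:l1-f}(1) with $k=\tfrac{n+1}{2}$, both reduce to a pointwise bound on $(I-\De)^{(n+1)/4}g$ via the radial Laplacian formula $\De\phi=\phi''+\tfrac{n-1}{r}\phi'$, and both feed in the derivative bounds on $b_{(K_2)}$ from Lemma~\ref{lem:test}(3) together with the support constraint $r\ge K_2=C_2n$.

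The only organizational difference is that the paper packages the iterated-Laplacian computation into the standalone Lemma~\ref{lem:ide}, which expands $(I-\De)^{k}$ \emph{directly} and shows the resulting coefficient sum is $\le 5^{k}$; you instead expand $(I-\De)^{k/2}$ binomially and bound each $\|\De^{l}g\|_\infty$ separately via a recursion on the coefficients $c_{l,j}$. Your route yields the slightly sharper intermediate bound $\|(I-\De)^{(n+1)/4}g\|_\infty=O\big((1+\tfrac{5}{2C_2^{2}})^{n/4}\big)$ in place of the paper's $O(5^{n/4})$, but after the volume and Gamma-factor computation both collapse to $O((5eC_2)^{n/2})$. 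Your observation that $c_{l,0}=0$ for $l\ge 1$ (so no singular $r^{-2l}g_1$ term survives) is exactly the same issue the paper handles by separating out the single $g_1(r)$ term and noting the remaining derivatives are supported on $r\ge K_2$.
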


To prove this  we will need  bound certain combinations of derivatives of a radial function.
\begin{lem}\label{lem:ide}
Let $f\colon\R^n\to \R^n$ be a radial function with $f(x) = f_1(\ve{x})$. Then for $k\in \N$,  $1\le k \le  \fc{n}4+1$,
\begin{align}\label{eq:ide}
((I-\De)^k f)(x) = \sumr{0 \le i\le 2k,0\le j\le \max\{0,2k-1\}}{i+j\le 2k} \fc{c_{i,j} {n}^{j}f_1^{(i)}(r)}{r^j},\quad r=\ve{x}
\end{align}
for some $c_{i,j}$ with $\sum_{i,j} |c_{i,j}|\le 5^k$.

Here, $(I-\De)f$ denotes $f-\De f$.
\end{lem}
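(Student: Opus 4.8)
The plan is to establish the identity by induction on $k$, keeping careful track of the algebraic shape of $(I-\De)^k f$. Recall that for a radial function $\phi(x)=\phi_1(\ve x)$ one has $\De\phi=\phi_1''(r)+\fc{n-1}{r}\phi_1'(r)$ with $r=\ve x$. Since $x\mapsto f_1^{(i)}(r)/r^{\,j}$ is itself radial, applying this formula together with the product and quotient rules yields the basic identity
\begin{align}\label{eq:radlap}
(I-\De)\pa{\fc{f_1^{(i)}(r)}{r^j}}=\fc{f_1^{(i)}(r)}{r^j}-\fc{f_1^{(i+2)}(r)}{r^j}-(n-2j-1)\fc{f_1^{(i+1)}(r)}{r^{j+1}}+j(n-j-2)\fc{f_1^{(i)}(r)}{r^{j+2}}.
\end{align}
Thus $(I-\De)$ sends the index pair $(i,j)$ to the four pairs $(i,j)$, $(i+2,j)$, $(i+1,j+1)$, $(i,j+2)$, and the last of these carries a prefactor containing $j$, hence vanishes when $j=0$.

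Iterating \eqref{eq:radlap}, starting from $(I-\De)^0f=f_1^{(0)}(r)/r^0$, a straightforward induction gives
\[
(I-\De)^k f=\sum_{(i,j)\in S_k}q_{i,j}(n)\,\fc{f_1^{(i)}(r)}{r^j},\qquad S_k=\set{(i,j)}{0\le i\le 2k,\ 0\le j\le\max\{0,2k-1\},\ i+j\le2k},
\]
where each $q_{i,j}$ is a polynomial in $n$ and the coefficients obey the recursion
\[
q_{i,j}^{(k)}=q_{i,j}^{(k-1)}-q_{i-2,j}^{(k-1)}-(n-2j+1)\,q_{i-1,j-1}^{(k-1)}+(j-2)(n-j)\,q_{i,j-2}^{(k-1)}
\]
(terms with a negative index being read as $0$). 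The set $S_k$ is closed under the four moves; the bound $j\le2k-1$ (rather than $2k$) uses the vanishing noted above: raising the $r$-exponent by $2$ requires $j\ge1$ beforehand, so from $j=0$ a $(+1)$-step must precede any $(+2)$-step, capping $j$ at $2k-1$ after $k$ steps. Moreover $\deg_n q_{i,j}\le j$, since in the recursion the four multipliers $1,\ -1,\ -(n-2j+1),\ (j-2)(n-j)$ each raise the $n$-degree of the coefficient by at most the increase they induce in the $r$-exponent.

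Now set $c_{i,j}:=q_{i,j}(n)/n^{j}$; because $\deg_n q_{i,j}\le j$, each $c_{i,j}$ is bounded in $n$ (it may genuinely depend on $n$), and $\sum_{(i,j)\in S_k}\fc{c_{i,j}n^jf_1^{(i)}(r)}{r^j}=(I-\De)^k f$ exactly, which is the form \eqref{eq:ide}. Dividing the $q$-recursion by $n^{j}$ produces a recursion for the $c_{i,j}$ whose four multipliers are $1,\ -1,\ -\pa{1-\fc{2j-1}{n}},\ (j-2)\pa{\fc1n-\fc{j}{n^2}}$. Under the hypothesis $k\le\fc n4+1$ one has $j\le2k-1\le\fc n2+1$, and a direct estimate then shows each multiplier has absolute value at most $1$ (the case $k=1$ is immediate; for $k\ge2$ one has $n\ge4$, and the only nonobvious multiplier satisfies $\ab{(j-2)(\tfrac1n-\tfrac{j}{n^2})}=\tfrac{\ab{j-2}}{n}\ab{1-\tfrac jn}\le1$). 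Since each $c_{i,j}$ at level $k-1$ feeds into at most four coefficients at level $k$, each time with a multiplier of absolute value at most $1$, the quantity $\sum_{(i,j)}\ab{c_{i,j}}$ grows by a factor of at most $4$ per application of $(I-\De)$; as it equals $1$ at $k=0$, we conclude $\sum_{(i,j)}\ab{c_{i,j}}\le4^k\le5^k$, as claimed.

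The only real obstacle is this last $\ell^1$ bookkeeping. A naive approach that tracked all powers of $n$ separately — bounding the $\ell^1$ norm of the coefficient polynomials $q_{i,j}$ directly — would incur multipliers like $(j-2)(n-j)$ of size $\Theta(n)$, giving a worthless bound $n^{\Theta(k)}$. The remedy is to normalize each coefficient by exactly the power $n^j$ sitting over $r^j$, which is permissible precisely because $\deg_n q_{i,j}\le j$; after this normalization the multipliers are $O(1)$ as soon as $j\lesssim n/2$, i.e. as soon as $k\lesssim n/4$, which is exactly the range the lemma assumes. The congruence $n\equiv3\pmod4$ plays no role here; it is imposed only to keep other quantities in Section~\ref{sec:separation} integral.
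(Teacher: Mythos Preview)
Your proof is correct and follows essentially the same route as the paper's: induction on $k$ via the radial Laplacian formula $\De\phi=\phi_1''+\tfrac{n-1}{r}\phi_1'$, tracking the $\ell^1$-norm of the normalized coefficients $c_{i,j}=q_{i,j}(n)/n^j$. The only differences are cosmetic---you combine the six raw terms into four before bounding (yielding $4^k$ rather than the paper's $5^k$) and make the normalization step explicit via the degree bound $\deg_n q_{i,j}\le j$; just note that when you pass from the backward-recursion multipliers to the forward statement ``each $c_{i,j}$ at level $k-1$ feeds into four coefficients with multiplier $\le1$,'' the $j$ in those multipliers should be shifted to the source index, though the bound holds either way under $j\le n/2-1$.
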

\begin{proof}
We proceed by induction. The case $k=0$ is just $f(x)=f_1(r)$. 
Suppose the statement is true for a given $k\le \fc{n}4$; we show it for $k+1$. Let $(I-\De)^kf$ be given by \eqref{eq:ide}.
We use the formula for the Laplacian of a radial function,
\begin{align}\label{eq:L-rad}
\De f(x) = \fc{n-1}r f_1'(r)  + f_1''(r).
\end{align}
For ease of notation, in the below the arguments of $f$ and $f_1$, which are $x$ and $r$, are omitted.
Then using \eqref{eq:L-rad} and the product rule,
\begin{align}
\label{eq:sum1}
(I-\De)^{k+1} f &=  \sumr{0 \le i\le 2k,0\le j\le \max\{0,2k-1\}}{i+j\le 2k}
c_{i,j} {n}^{j}
\bigg(
\fc{1}{r^j}f_1^{(i)}
+
\fc{n-1}r
\pa{\fc{j}{r^{j+1}} f_1^{(i)} - \rc{r^j} f_1^{(i+1)}}\\
&\quad
+ 
\pa{-\fc{j(j+1)}{r^{j+2}}f_1^{(i)} +\fc{2j}{r^{j+1}}f_1^{(i+1)} - \rc{r^j}f_1^{(i+2)}}\bigg)
\end{align}
The largest derivative of $f_1$ increases by 2 and the power of $r$ increases by 2, except when $k=0$, when the power increases by 1 (from~\eqref{eq:L-rad}). 
Write this as 
$$
\sumr{0 \le i\le 2(k+1),0\le j\le 2k+1}{i+j\le 2(k+1)} \fc{c_{i,j}' {n}^{j}f_1^{(i)}}{r^j}.
$$
A term is identified by the order $f^{(i)}$ that appears and the power $\rc{r^j}$ that appears. For example, the term $c_{i,j}n^{j} \fc{n-1}r\fc{j}{r^{j+1}}f_1^{(i)} = c_{i,j}n^{j+2}\fc{(n-1)j}{n^2} \rc{r^{j+2}} f_1^{(i)}$ in \eqref{eq:sum1} will contribute $ c_{i,j} \fc{(n-1)j}{n^2}$ to  $c_{i,j+2}'$.
Noting  $k\le \fc{n}4$ implies $2k \le \fc{n}2$, we have
\begin{align}
\sum_{i,j} |c_{i,j}'| 
&\le 
\sumr{0 \le i\le 2k,0\le j\le \max\{0,2k-1\}}{i+j\le 2k} |c_{i,j}|
\pa{1+\fc{(n-1)j}{{n}^2} + \fc{n-1}{n} + \fc{j(j+1)}{{n}^2} + \fc{2j}{n} + 1}\\
&\le 
\sum_{i,j}|c_{i,j}| \pa{1 + \rc 2 +1+ \rc 4+ 1+ 1}\\
&\le 5\sum_{i,j}|c_{i,j}|.
\end{align}
This completes the induction step and proves the theorem.
\end{proof}

\begin{proof}[Proof of Lemma~\ref{lem:bd-g}]
By Lemma~\ref{lem:l1-f} with $k=\fc{n+1}2$,
\begin{align}
\int_{\R^n} \ve{\wh g(\om)}\,d\om
&\le \pf{\Ga\prc 2}{2^n\pi^{\fc n2}\Ga\pf{n+1}2}^{\rc 2} \pa{\int_{\R^n} [(I-\De)^{\fc{n+1}4} g(x)]^2\dx}^{\rc 2}.\label{eq:bar1}
\end{align}
Note $ [(I-\De)^{\fc{n+1}4}g(x)]^2$ is nonzero only
on $2K_2B_n$. Then letting $c_{i,j}$ be as in Lemma~\ref{lem:ide} with $k=\fc{n+1}4$, we have
\begin{align}
(I-\De)^{\fc{n+1}4}g(x)
&=
 \sumr{0 \le i\le \fc{n+1}2,0\le j\le \fc{n-1}2}{i+j\le \fc{n+1}2} \fc{c_{i,j} {n}^{j}g_1^{(i)}(r)}{r^j},\quad r=\ve{x}
 \end{align}
 We separate out the one term $g_1(r)$, and bound the derivatives noting that $g_1$ was defined using the bump function $b_{(K_2)}$ in Lemma~\ref{lem:test}. 
 Note that $g_1^{(i)}=0$ for $r<K_2$, so we can take $r\ge K_2$ in the sum.
 \begin{align}
|(I-\De)^{\fc{n+1}4}g(x)|
&\le 
g_1(r) + 
\sumr{1 \le i\le \fc{n+1}2,0\le j\le \fc{n-1}2}{i+j\le \fc{n+1}2} |c_{i,j}| \fc{n^{j} |g_1^{(i)}(r)|}{r^j}\\
&\le g_1(r) +\sumr{1\le i\le \fc{n+1}2,0\le j\le \fc{n-1}2}{i+j\le \fc{n+1}2} |c_{i,j}| \fc{n^{j} O\pf{(n+1)^i}{(C_2n)^i}}{(C_2n)^j}\\
& = O(4^{\fc{n+1}4}
).\\
\end{align}
Noting that the volume of $2K_2B_n$ is $\fc{\pi^{\fc n2}}{\Ga\pa{\fc n2+1}}(2K_2)^n$, 
\begin{align}
\pa{\int_{\R^n} [(I-\De)^{\fc{n+1}4}g(x)]^2\dx}^{\rc 2}
&= O\pa{\pa{\fc{\pi^{\fc n2}}{\Ga\pa{\fc n2+1}}
(2K_2)^n  \pa{5^{\fc{n+1}4}
}^2
}^{\rc 2}}\\
&= O\pa{\pf{\pi^{\fc n2} 2^nC_2^nn^n}{\Ga(\fc n2+1)}^{\rc 2}{5^{\fc{n+1}4}
}}.
\label{eq:bar2}
\end{align}
Combining \eqref{eq:bar1} and \eqref{eq:bar2} and using Stirling's approximation $\Ga(n+1)\sim \sqrt{2\pi n} \pf{n}{e}^n$ gives
\begin{align}
\int_{\R^n} \ve{\wh g(\om)}\,d\om
&\le O\pf{C_2^{\fc n2} n^{\fc n2} 5^{\fc{n+1}4}
}{\Ga\pf{n+1}2^{\rc 2} \Ga\pa{\fc n2+1}^{\rc 2}}\\
&=
O\pa{(5eC_2
)^{\fc n2}}.
\end{align}
\end{proof}

Now we are ready to bound the numerator and finish the proof.

\begin{lem}\label{thm:f-not-barron} 
For $f$ defined as in Section~\ref{sec:f}, $n\equiv 3\pmod 4$,  and constants $C_1,C_2,C_3$ such that $C_1C_3\ge \fc 32$, $C_2>C_1\ge 1$,   $C_3\ge 1$, 
$$C_{f, 2K_3B_n} = \Om\pa{2^{-n}C_1^{\fc n2-3} C_3^{\fc{n}2} C_2^{-\pa{\fc n2-1}}n^{\rc 2}}.
$$
\end{lem}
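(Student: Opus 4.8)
\textbf{Proof proposal for Lemma~\ref{thm:f-not-barron}.}

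The plan is to apply Theorem~\ref{thm:barron-lb} with $r = 2K_3$ and the specific bump function $g(x) = b_{(K_2)}(\ve x)$ chosen above. By Lemma~\ref{lem:bd-g}, the denominator $\int_{\R^n}|\wh g(\om)|\,d\om$ is $O((5eC_2)^{n/2})$, so the entire task reduces to lower-bounding the numerator $\int_{\R^n}\ve{\wh{(\nb f)g}(\om)}\,d\om$ by $\Om(C_1^{n/2-3}C_3^{n/2}n^{-1/2}e^{n/2})$; the stated bound on $C_{f,2K_3 B_n}$ then follows by dividing and absorbing the $2K_3 = \Te(\sqrt n)$ factor and the various constants. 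First I would observe that on the annulus where $f_1$ is supported, namely $r\in[K_1,K_1+\ep]$ with $K_1\sim C_1\sqrt n$, the cutoff $g$ is identically $1$ (since $K_1 < K_2 = C_2 n$ as soon as $C_2 > C_1$ and $n$ is large), so $(\nb f)g = \nb f$ there; hence it suffices to lower-bound $\int_{\R^n}\ve{\wh{\nb f}(\om)}\,d\om = \int_{\R^n}\ve\om\,|\wh f(\om)|\,d\om$ directly (the cutoff was only needed to make the denominator finite, not to change the numerator on the support of $f$). Wait --- more carefully, $\wh{(\nb f)g}$ is not the same as $\wh{\nb f}$ globally, but since $g\equiv 1$ on a neighborhood of $\Supp(f)$ we do have $(\nb f)g = \nb f$ as functions on $\R^n$, so the two Fourier transforms genuinely agree. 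Thus the numerator equals $\int_{\R^n}\ve\om\,|\wh f(\om)|\,d\om$.

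Next I would estimate $\int_{\R^n}\ve\om\,|\wh f(\om)|\,d\om$ from below using the radial formula \eqref{eq:hatf}: $\wh f(\om) = \frac1{2\pi}(2\pi\ve\om)^{-(n/2-1)}\int_0^\iy r^{n/2-1} f_1(r) J_{n/2-1}(\ve\om r)\,dr$. The key point is the choice of the interval $[K_1,K_1+\ep]$: it was selected (via Lemma~\ref{lem:besapprox} and \eqref{eq:cos}) precisely so that for $\ve\om$ of a suitable size --- specifically $\ve\om$ near $K_3/K_1$ times something, or rather so that the argument $\ve\om r$ runs over the selected interval $[K_3 K_1, K_3(K_1+\ep)]$ hmm --- I should set this up so that when $\ve\om \approx K_3$ the Bessel argument $\ve\om r \approx K_3 K_1 \approx C_1 C_3 n$ lies in the cosine-positive window. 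On that window $J_{n/2-1}(\ve\om r) \gtrsim (\ve\om r)^{-1/2}$, and since $f_1\ge 0$ with $\int f_1 = 1$ and $\Supp(f_1)\subseteq[K_1,K_1+\ep]$, the integral $\int_0^\iy r^{n/2-1}f_1(r)J_{n/2-1}(\ve\om r)\,dr$ is bounded below by roughly $K_1^{n/2-1}\cdot (\ve\om K_1)^{-1/2}$, minus the error term $x^{-3/2}$ from Lemma~\ref{lem:besapprox} which is lower-order. Integrating $\ve\om\,|\wh f(\om)|$ over a shell of $\om$-values of width $\Te(1/K_3)$ (an honest shell in $\R^n$, contributing a volume factor $\sim \ve\om^{n-1}/K_3 \sim K_3^{n-1}/K_3 = K_3^{n-2}$ times surface-area constants) and combining the powers of $\ve\om \sim K_3$, of $K_1 \sim C_1\sqrt n$, and using Stirling on the constant $\pf{1}{2\pi}(2\pi)^{-(n/2-1)}$ together with the surface area $\frac{2\pi^{n/2}}{\Ga(n/2)}$, I expect the powers of $\pi$ and factorials to combine into an $e^{n/2}$, leaving exactly $C_1^{n/2-3}C_3^{n/2}n^{-1/2}e^{n/2}$ after bookkeeping.

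The main obstacle will be the careful bookkeeping of exponents and Stirling factors: one must track the power of $\ve\om$ appearing explicitly in \eqref{eq:hatf} (namely $-(n/2-1)$), the power $r^{n/2-1}\sim K_1^{n/2-1}$ from the radial weight, the Bessel lower bound $(\ve\om r)^{-1/2}$, the shell volume $\sim \ve\om^{n-1}\cdot(\text{width})$, and the normalization constants, and verify they assemble into the claimed $\Om(C_1^{n/2-3}C_3^{n/2}n^{-1/2}e^{n/2})$ --- in particular checking that the constraints $C_1 C_3 \ge 3/2$, $C_3\ge 1$, $C_2 > C_1\ge 1$ are exactly what is needed to keep the Bessel asymptotics valid ($x\ge d$ requires $\ve\om r \gtrsim n$, i.e. $C_1 C_3 \gtrsim 1$) and to ensure $g\equiv 1$ on $\Supp f$. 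A secondary subtlety is that one only gets a lower bound on the selected cosine-positive sub-window of length $\ge \pi/K_3$, so one should integrate over $\ve\om$ in a band where the argument $\ve\om r$ stays inside that window; since the window has width $\Te(1/K_3)$ in $r$ and $r\approx K_1$, the corresponding band of $\ve\om$ has width $\Te(1/(K_1 K_3)\cdot \ve\om)$ hmm, or rather one fixes $\ve\om$ in an interval of length $\Te(1/K_1)$ --- this needs to be done consistently, but it only affects constants, not the exponential order. Once the numerator bound is in hand, dividing by Lemma~\ref{lem:bd-g}'s $O((5eC_2)^{n/2})$ and multiplying by $r = 2K_3 = \Te(\sqrt n)$ yields $C_{f,2K_3B_n} = \Om\pa{2^{-n}C_1^{n/2-3}C_3^{n/2}C_2^{-(n/2-1)}n^{1/2}}$ as claimed.
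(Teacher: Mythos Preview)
Your approach is the paper's: apply Theorem~\ref{thm:barron-lb} with the bump $g$, use Lemma~\ref{lem:bd-g} for the denominator, observe $(\nb f)g=\nb f$ because $g\equiv 1$ on $\Supp f$, and lower-bound $\int_{\R^n}\ve\om\,|\wh f(\om)|\,d\om$ by restricting to a thin spherical shell near $\ve\om=K_3$. Two points deserve correction.

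First, the radius must be $r=2K_2$, not $2K_3$: the support of $g(x)=b_{(K_2)}(\ve x)$ is $2K_2B_n$, and Theorem~\ref{thm:barron-lb} requires $\Supp(g)\subseteq rB_n$. The ``$2K_3$'' in the displayed statement is a typo; both the paper's own proof and Lemma~\ref{lem:notbarron} conclude with $C_{f,2K_2B_n}$. With $r=2K_2=2C_2n$ (not $\Te(\sqrt n)$) the final multiplication by $r$ supplies the extra $n^{1/2}$ and the factor $C_2$ that produces $C_2^{-(n/2-1)}$.

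Second, your ``keep $\ve\om r$ inside the cosine-positive window'' idea is workable in principle, but you never settle on the shell width, and your guesses $\Te(1/K_3)$ and $\Te(1/K_1)$ are both off. The paper avoids this altogether: it first evaluates $\wh f$ at a single radius $\ve\om=K_3$ via~\eqref{eq:hatf} and Lemma~\ref{lem:besapprox}, obtaining $\wh f(\om_0)=\Om\big((K_1/2\pi K_3)^{(n-3)/2}\big)$, and then controls $|\wh f(\om)-\wh f(\om_0)|$ using the $1$-Lipschitz bound on $J_{n/2-1}$ (Lemma~\ref{lem:lip-bessel}, applicable because $K_1K_3\ge C_1C_3 n\ge \tfrac32 n$). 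This yields a shell of width $\Te(K_1^{-3/2}K_3^{-1/2})$ on which $|\wh f|$ retains the same lower bound; integrating over that shell and applying Stirling gives exactly $\Om(C_1^{n/2-3}C_3^{n/2}n^{-1/2}e^{n/2})$. You are right that the width only affects the polynomial-in-$n$ prefactor and the precise exponents on $C_1,C_3$, not the exponential rate, but the Lipschitz route is what makes the bookkeeping come out as stated.
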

In particular, this is exponentially large if we choose $C_3$ large enough (i.e. if we make $f$ vary sharply enough).

\begin{proof}
For $\ve{\om}=K_3$, by \eqref{eq:hatf}, \eqref{eq:cos}, and Lemma~\ref{lem:besapprox},
\begin{align}
\wh f(\om) &= \rc{2\pi}\prc{2\pi K_3}^{\fc{n}2-1}
\int_{K_1}^{K_1+\ep} r^{\fc n2-1} f_1(r) J_{\fc{n}2-1}(K_3r)\,dr\\
 &\ge \rc{2\pi}\prc{2\pi K_3}^{\fc{n}2-1}
\int_{K_1}^{K_1+\ep} r^{\fc n2-1} f_1(r) 
\pa{\sfc{2}{\pi K_3r}\rc{\sqrt 2} - (K_3r)^{-\fc 32}}\,dr\\
& \ge \rc{2\pi}\pf{K_1}{2\pi K_3}^{\fc{n-3}2} \sfc{1}{\pi}(1-o(1))
\label{eq:foo1}
\end{align}
where in the last step we used $\int_{K_1}^{K_1+\ep} f_1(r) = 1$.
Now we show that $\wh f$ is also large for $\ve{\om}\approx K_3$. Let $\om, \om_0$ be such that $\ve{\om_0}=K_3$ and $\om\ge \om_0$. 
Then using the fact that $J_{\fc{n}2-1}$ is 1-Lipschitz for $x\ge 3\pa{\fc{n}2-1}$ (Lemma~\ref{lem:lip-bessel}) and $K_3K_1 \ge C_3C_1n\ge \fc{3n}2$,
\begin{align}
|\wh f(\om) - \wh f(\om_0)|
&\le \rc{2\pi} \prc{2\pi K_3}^{\fc{n}2-1}
\int_{K_1}^{K_1+\ep}
r^{\fc n2-1} f_1(r) |J_{\fc n2-1}(\ve{\om}r) - J_{\fc n2}(K_3r)|\,dr\\
&\le \rc{2\pi} \prc{2\pi K_3}^{\fc{n}2-1}
\int_{K_1}^{K_1+\ep}
r^{\fc n2-1} f_1(r) r(\ve{\om}-K_3)\,dr\\
&\le \rc{2\pi} \prc{2\pi K_3}^{\fc{n}2-1}
(K_1+\ep)^{\fc n2}  (\ve{\om}-K_3)\\
&= O\pa{\pf{K_1}{2\pi K_3}^{\fc n2-1} K_1^{\fc{3}2} K_3^{\fc 12} (\ve{\om}-K_3)}
\label{eq:foo2}
\end{align}
By~\eqref{eq:foo1} and~\eqref{eq:foo2}, for $n\ge 3$, there exists $\de$ such that for all $\ve{\om}\in \ba{K_3, K_3+\fc{\de}{K_1^{3/2}K_3^{1/2}}}$, 
\begin{align}
|\wh f(\om)|&=
 \Om\pa{\pf{K_1}{2\pi K_3}^{\fc{n-3}2}}
\end{align}
Then using the fact that the surface area of a sphere in $\R^n$ is $\fc{2\pi^{\fc n2}}{\Ga\pf n2}$,
\begin{align}
\int_{\R^n} \ve{\om}|\hat f(\om)|\,d\om
& = \int_{K_3\le \ve{\om}\le K_3+\fc{\de}{K_1^{3/2}}}\Om\pa{\pf{K_1}{2\pi K_3}^{\fc{n-3}2}}\,d\om\\
&=\Om\pa{
\fc{\pi^{\fc n2}}{\Ga\pf n2} K_3^{n-1}
\fc{\de}{K_1^{3/2}K_3^{1/2}}
\pf{K_1}{2\pi K_3}^{\fc{n-3}2}
}\\
&= \Om\pa{\rc{\Ga\pf n2} K_3^{\fc n2} K_1^{\fc n2-3} 2^{-\fc n2}}\\
& = \Om\pa{\pf{2e}{n-2}^{\fc n2-1} (C_3n^{\rc 2})^{\fc n2}(C_1n^{\rc 2})^{\fc n2-3} 2^{-\fc n2}}\\
&=\Om(C_1^{\fc n2-3} C_3^{\fc n2}
n^{-\rc 2} e^{\fc n2}).
\end{align}
Note $K_2=C_2n>C_1\sqrt n + \ep = K_1+\ep$. Then $g=1$ on the support of $f$, so $(\nb f)g = \nb f$ and
\begin{align}
\int_{\R^n} \ve{\wh{(\nb f)g}(\om)}\,d\om
& =
\int_{\R^n} \ve{\wh{\nb f}(\om)}\,d\om\\
& = 
\Om(C_1^{\fc n2-3} C_3^{\fc n2}n^{-\rc 2} e^{\fc n2}).
\end{align}
Then by Lemma \ref{lem:bd-g},
\begin{align}
C_{f,2K_2B_n} &\ge 
 2K_2 \frac{\int_{\R^n} \ve{\widehat{(\nabla f)g}(\om)}\,d\om}{\int_{\R^n} |\widehat g(\om)|\,d\om}\\
 &= 2K_2\fc{\Om(C_1^{\fc n2-3} C_3^{\fc n2}n^{-\rc 2} e^{\fc n2})}
{O((5 eC_2
)^{\fc n2})
} = \Om\pa{5^{-\fc n2}C_1^{\fc n2-3} C_3^{\fc{n}2} C_2^{-\pa{\fc n2-1}}
n^{\rc 2}}.
\end{align}
\end{proof}

\subsection{$h$ is a composition of Barron functions}

In this section we proof Lemma~\ref{thm:comp}. In order to do that, let us first define the following set of functions:
\begin{df}
Define
$$
\Ga(A, C) := \set{f\colon\R^n\to \R}{\int_{\R^n} |\wh f(\om)|\,d\om \le A,
\int_{\R^n} \ve{\om}|\wh f(\om)|\,d\om \le C}
$$
\end{df}

Barron functions have many nice properties:

\begin{pr}[Properties of Barron constant]
\label{pr:barron}
\begin{enumerate}
\item (Subadditivity, \cite[\S IV.3]{Barron1993}) For any set $B$,
$$
C_{\sum_i \be_i f_i,B}\le \sum_i |\be_i| C_{f_i,B}.
$$
\item (Ridge functions,  \cite[\S IV.7]{Barron1993})
Suppose $f=h(\an{a,x})$, where $h:\R\to \R$ is a 1-dimensional function and $\ve{a}_2=1$. Then
$$
C_{f, rB_n} \le C_{h, [-r,r]}. 
$$
\item (Powers, \cite[\S IV.12]{Barron1993}) If $g:\R\to \R$, $g\in \Ga(a,c)$, then $g(x)^k\in \Ga(a^k, ka^{k-1}c)$. 
\item 
The function $f(x)=x$ has an extension $h$ agreeing with $x$ on $[-r,r]$, which satisfies $h(x) \in \Ga(O(r^{\fc 32}), O(r^{\fc 12}))$.
\end{enumerate}
\end{pr}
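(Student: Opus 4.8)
Parts 1--3 are quoted from \cite{Barron1993}, so the only assertion needing an argument is part 4. Since $x\mapsto x$ is not integrable it has no ordinary Fourier transform, and the content of the statement is to exhibit a concrete \emph{extension} $h$ with $h|_{[-r,r]}=x$ that decays, and then to control its two Barron-type $L^1$-norms. The plan is to truncate the identity with a smooth bump of width proportional to $r$ and then invoke the one-dimensional $L^1$ Fourier bound of Lemma~\ref{lem:l1-f}(2).

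\emph{The extension.} Let $b$ be the bump function of Lemma~\ref{lem:test}(3) taken with $m=2$, so $b$ is twice continuously differentiable, $b\equiv 1$ on $[-1,1]$, $\Supp(b)\subeq[-2,2]$, and $b^{(k)}=O(1)$ for $k\le 2$. Rescale to width $r$ by $b_{(r)}(x)=b(x/r)$, so $b_{(r)}^{(k)}(x)=O(r^{-k})$ for $k\le 2$ and $b_{(r)}\equiv 1$ on $[-r,r]$, and set
\[
h(x):=x\,b_{(r)}(x).
\]
Then $h$ agrees with $x$ on $[-r,r]$, is twice continuously differentiable, and is supported in $[-2r,2r]$, an interval of measure $4r$ on which $|x|\le 2r$.

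\emph{Norm estimates.} By the product rule and the bounds on $b_{(r)}^{(k)}$, together with $|x|\le 2r$ on $\Supp(h)$, one gets the pointwise estimates $|h|\le 2r$, $|h'|=|b_{(r)}+x b_{(r)}'|=O(1)$, and $|h''|=|2b_{(r)}'+x b_{(r)}''|=O(r^{-1})$ on $[-2r,2r]$. Integrating the squares over an interval of length $4r$ yields $\iny|h|^2\dx=O(r^3)$, $\iny|h'|^2\dx=O(r)$, and $\iny|h''|^2\dx=O(r^{-1})$. Applying Lemma~\ref{lem:l1-f}(2),
\[
\iny|\wh h(\om)|\,d\om\le 2^{-\rc2}\pa{\iny|h|^2+|h'|^2\dx}^{\rc2}=O(r^{\fc32}),
\]
\[
\iny|\om\wh h(\om)|\,d\om\le 2^{-\rc2}\pa{\iny|h'|^2+|h''|^2\dx}^{\rc2}=O(r^{\rc2}),
\]
which is precisely $h\in\Ga(O(r^{\fc32}),O(r^{\rc2}))$.

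\emph{Main obstacle.} There is no serious difficulty here; the argument is mostly bookkeeping with scaling factors. The one genuine choice is to let the bump have width proportional to $r$ (rather than a fixed width), which is what produces the $O(r^{-k})$ derivative bounds and balances the two norms correctly. A minor caveat is that the $O(r^{3/2})$ estimate is only the dominant term for $r\gtrsim 1$ (for $r\le 1$ the first norm is $O(r^{1/2})$), which is harmless since in the intended application $r=\Theta(\sqrt n)$; and one must pick $m$ large enough ($m=2$ suffices) so that $h$ is twice differentiable and Lemma~\ref{lem:l1-f}(2) applies.
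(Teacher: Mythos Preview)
Your proof is correct and follows essentially the same approach as the paper: truncate the identity by the rescaled bump $b_{(r)}$ from Lemma~\ref{lem:test}(3) with $m=2$, bound $h,h',h''$ pointwise on the support, and apply Lemma~\ref{lem:l1-f}(2). Your write-up is in fact slightly cleaner than the paper's (you integrate over the full support $[-2r,2r]$ and have the correct expression $h''=2b_{(r)}'+x\,b_{(r)}''$), and your caveat about the $r\lesssim 1$ regime is a fair remark that the paper omits.
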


\begin{proof}
We show (4). Choose a bump function $b$ as in Lemma~\ref{lem:test} for $m=2$. 
Consider the extension $h(x) = x b_{(r)}(x)=xb\pf{x}{r}$ which is supported on $[-2r,2r]$. Because $b, b', b''$ are all bounded by a constant, on $[-2r,2r]$,
\begin{align}
|h(x)| &\le x\\
|h'(x)|&=|b_{(r)}(x) + xb_{(r)}'(x)|  \le  1 + O\pf{x}{r}\\
|h''(x)|&= |2b_{(r)}'(x) + b_{(r)}''(x)| \le O\pf{x}{r} + O\prc{r^2}.
\end{align}
Then by Lemma \ref{lem:l1-f}(2),
\begin{align}
\iny |\wh h(\om)|\,d\om &\le 2^{-\rc 2} \pa{\int_{-r}^r |h(x)|^2 + |h'(x)|^2\dx}^{\rc 2}\le O(r^{\fc32})\\
\iny |\om \wh h(\om)|\,d\om &\le 2^{-\rc 2} \pa{\int_{-r}^r |h'(x)|^2 + |h''(x)|^2\dx}^{\rc 2}\le O(r^{\rc2}).
\end{align}
\end{proof}

\begin{proof}[Proof of Theorem~\ref{thm:comp}]
By Proposition~\ref{pr:barron}(4) and (3),
the 1-dimensional function $y\mapsto y^2$ has an extension $k(y)$ with $k(y)\in \Ga(O(r^3),O(r^2))$. Thus,  
$C_{y^2, [-r,r]}\le r\iny \ve{\om}|\wh k(\om)|\,d\om = 
O(r^3)$. 

Because $x_i^2:\R^n\to \R$ is the composition of the
projection $x\mapsto \an{e_i,x}$ and the 1-dimensional function $y\mapsto y^2$ and , by 
(2), 
$$
C_{x_i^2, rB_n} \le C_{y^2, [-r,r]} \le O(r^3) 
$$
By (1), because $\ve{x}^2 = \sumo in x_i^2$, 
$$
C_{\ve{x}^2, rB_n} \le O(nr^3). 
$$

Now consider the function $h(y):= f_1(\sqrt y)$. We have, noting this is nonzero only for $x\in [K_1^2, (K_1+\ep)^2]$, and $f_1^{(i)}(\sqrt y) = O(K_3^{i+1})$,
\begin{align}
h'(y) &= \rc{2y^{\rc 2}}f_1(\sqrt y) + f_1'(\sqrt y)
= O\pa{\pf{K_3}{K_1} + K_3^2}
\\
h''(y) &= \rc{4y^{\fc 32}} f_1(\sqrt y) + \rc{4y} f_1'(\sqrt y) + \rc{2y^{\fc 12}}f_1''(\sqrt y)
=O\pa{\fc{K_3}{K_1^3} + \fc{K_3^2}{K_1^2} + \fc{K_3^3}{K_1}}.
\end{align}
Using $C_3<C_1$ we have $|h'|^2 + |h''|^2 = O(K_3^4)$.
Thus by Lemma~\ref{lem:1d}, 
$$
\iiy |\om \wh h(\om)|\,d\om =\pa{\int_{K_1^2}^{(K_1+\ep)^2}O\pa{K_3^4}}^{\rc 2}=
O\pa{\pa{\fc{K_1}{K_3}O(K_3^4)}^{\rc 2}}=
O\pa{K_1^{\fc 12}K_3^{\fc 32}}.
$$
Thus $f_1(\sqrt x)$ is $O(sC_1^{\fc 12}C_3^{\fc 32}n^2)$-Barron on $[-s,s]$.
\end{proof}

\end{document}